\theoremstyle{plain}
\newtheorem{theorem}{Theorem}[section]
\newtheorem{lemma}{Lemma}[section]
\newtheorem{cor}{Corollary}[section]
\newtheorem{mydef}{Definition}[section]
\newtheorem{assumption}{Assumption}[section]
\theoremstyle{remark}
\newtheorem*{remark}{Remark}
\newcommand{\E}{\mathbb{E}}
\newcommand{\sA}{{\mathscr A}}
\newcommand{\sD}{{\mathscr D}}
\newcommand{\sR}{{\mathscr R}}
\newcommand{\sT}{{\mathscr T}}
\newcommand{\sX}{{\mathscr X}}
\newcommand{\sY}{{\mathscr Y}}
\newcommand{\sZ}{{\mathscr Z}}
\newcommand{\DG}{\sD_G(S)}
\newcommand{\bc}{{\mathbf c}}
\newcommand{\bh}{{\mathbf h}}
\newcommand{\bV}{{\mathbf V}}
\newcommand{\bW}{{\mathbf W}}
\newcommand{\bX}{{\mathbf X}}
\newcommand{\bZ}{{\mathbf Z}}
\newcommand{\bx}{{\mathbf x}}
\newcommand{\bw}{{\mathbf w}}
\newcommand{\bz}{{\mathbf z}}
\let\P\undefined
\let\E\undefined
\newcommand{\R}{\mathbb R}
\newcommand{\E}{\mathbb E}
\newcommand{\P}{\mathbb P}
\newcommand{\Saug}{\widetilde{S}}
\newcommand{\Daug}{\widetilde{\sD}}
\newcommand\Sdrop[1]{S^{\setminus #1}}
\newcommand\Schange[1]{S^{#1}}
\DeclarePairedDelimiter{\tri}{\langle}{\rangle}
\title{Toward Understanding Generative Data Augmentation}
\author{%
  Chenyu Zheng$^{1,2}$, Guoqiang Wu$^3$, Chongxuan Li$^{1,2}$\thanks{Corresponding author.} \\
  $^1$ Gaoling School of Artificial Intelligence, Renmin University of China, Beijing, China\\
  $^2$ Beijing Key Laboratory of Big Data Management and Analysis Methods, Beijing, China\\
  $^3$ School of Software, Shandong University, Shandong, China \\
  \texttt{\{chenyu.zheng666, guoqiangwu90\}@gmail.com;}
  \texttt{chongxuanli@ruc.edu.cn}
}
\begin{document}

\maketitle

\setcounter{tocdepth}{-1}

\begin{abstract}
Generative data augmentation, which scales datasets by obtaining fake labeled examples from a trained conditional generative model, boosts classification performance in various learning tasks including (semi-)supervised learning, few-shot learning, and adversarially robust learning. However, little work has theoretically investigated the effect of generative data augmentation. To fill this gap, we establish a general stability bound in this not independently and identically
distributed (non-i.i.d.) setting, where the learned distribution is dependent on the original train set and generally not the same as the true distribution. Our theoretical result includes the divergence between the learned distribution and the true distribution. It shows that \textit{generative data augmentation can enjoy a faster learning rate when the order of divergence term is $o(\max\left( \log(m)\beta_m, 1 / \sqrt{m})\right)$}, where $m$ is the train set size and $\beta_m$ is the corresponding stability constant. We further specify the learning setup to the Gaussian mixture model and generative adversarial nets. We prove that \textit{in both cases, though generative data augmentation does not enjoy a faster learning rate, it can improve the learning guarantees at a constant level when the train set is small, which is significant when the awful overfitting occurs}. Simulation results on the Gaussian mixture model and empirical results on generative adversarial nets support our theoretical conclusions. Our code is available at \emph{\href{https://github.com/ML-GSAI/Understanding-GDA}{https://github.com/ML-GSAI/Understanding-GDA}}.








\end{abstract}

\section{Introduction}
\label{sec: intro}

Deep generative models~\cite{DBLP:journals/corr/vae,goodfellow2020generative,DBLP:conf/nips/HoJA20,DBLP:conf/iclr/0011SKKEP21} have achieved great success in many fields, including computer vision~\cite{ramesh2021zero,rombach2022high}, natural language processing~\cite{brown2020language,raffel2020exploring,leiter2023chatgpt}, and cross-modal learning~\cite{wang2022image,bao2023one,openai2023gpt} in the recent years. A promising usage built upon them is generative data augmentation (GDA), which scales the train set by producing synthetic examples with labels based on advanced conditional generative models. Empirically, it has been observed that GDA can improve classification performance in lots of settings, including supervised learning~\cite{DBLP:journals/corr/abs-2304-08466,DBLP:conf/icassp/BesnierJBCP20}, semi-supervised learning~\cite{DBLP:conf/nips/KingmaMRW14, DBLP:conf/nips/LiXZZ17,DBLP:journals/corr/abs-2302-10586}, few-shot learning~\cite{DBLP:journals/corr/abs-2302-07944}, zero-shot learning~\cite{DBLP:journals/corr/abs-2210-07574}, adversarial
robust learning~\cite{DBLP:journals/corr/abs-2103-01946,DBLP:journals/corr/abs-2302-04638}, etc.

Although promising algorithms and applications of GDA emerge in different learning setups, our experiments in Section~\ref{sec: exp} show that GDA does not always work, such as in the case with a rich train set or standard augmentation methods (e.g., flip). Besides, the number of augmented data has a significant impact on the performance while is often tuned manually.  These phenomena motivate us to study the effect of GDA. Unfortunately, little work has investigated this technique from a theoretical perspective.  Therefore, in this paper, we take a first step towards understanding it. Specially, we consider the supervised classification setting, and try to answer the following questions rigorously:

\begin{itemize}
    \item \textit{Can we establish learning guarantees for GDA and explain when it works precisely?}
    \item \textit{Can we obtain theoretical insights on hyperparameters like the number of augmented data?}
\end{itemize}

Our first main contribution is to propose a general algorithmic stability bound for GDA in Section~\ref{sec: General generalization bound}. The main technical challenge is that GDA breaks the primary i.i.d. assumption of the classical results~\cite{DBLP:journals/jmlr/BousquetE02,DBLP:conf/colt/BousquetKZ20} because the distribution learned by the generative model is dependent on the sampled train set and generally not the same as the true distribution. Besides, it is unclear whether the existing general non-i.i.d. stability bounds~\cite{DBLP:conf/nips/MohriR07,DBLP:journals/jmlr/MohriR10,DBLP:conf/nips/ZhangL0W19} are suitable to derive meaningful guarantees for GDA. Informally, our result (Theorem~\ref{thm: main generalization bound}) can be presented as follows:
\begin{align*}
    \vert \textit{Gen-error}  \vert \lesssim \text{distributions' divergence}\ + \ \text{generalization error w.r.t. mixed distribution},
\end{align*}
where \textit{Gen-error} means the generalization error of GDA, and $a \lesssim b$ means $a = O(b)$. The distributions’ divergence term on the right hand is caused by the divergence between the distribution learned by the generative model and the true distribution. In addition, the remaining generalization error w.r.t. mixed distribution vanishes as we increase the augmentation size. Comparing this bound to the classical result without GDA (Theorem~\ref{thm: classical stability bound}), we can obtain an exact condition for GDA to be effective: \textit{GDA can enjoy a faster learning rate when the order of divergence term is $o(\max\left( \log(m)\beta_m, 1 / \sqrt{m})\right)$}, where $m$ is the train set size and $\beta_m$ is the corresponding uniform stability constant. This means the performance of the chosen generative model matters a lot.

Our second main contribution is to particularize the general results to the binary Gaussian mixture model (bGMM) and generative adversarial nets (GANs)~\cite{goodfellow2020generative} in Section~\ref{sec: theory Results on bGMM} and Section~\ref{sec: Implications on deep generative models}, respectively. Our theoretical results (Theorems~\ref{thm: bGMM generalization bound} and~\ref{thm: GAN generalization bound}) show that, in both cases, the order of the divergence term in the obtained upper bound is $\Omega(\max\left( \log(m)\beta_m, 1 / \sqrt{m})\right)$. It suggests that: on the one hand, when the train set size is large enough, it is hopeless to use GDA to boost the classification performance by a large margin. Worse still, GDA may damage the generalization of the learning algorithm. On the other hand, \textit{when the train set size is small and awful overfitting happens, GDA can improve the learning guarantee at a constant level, which is significant in this situation}. These theoretical implications show the promise of GDA in real-world problems with limited data.

Finally, experiments presented in Section~\ref{sec: exp} validate our theoretical findings. In particular, in the bGMM setting, 
experimental results show that our generalization bound (Theorem~\ref{thm: bGMM generalization bound}) predicts the order and trend of true generalization error well. Besides, in our empirical study on the real image dataset, we find that GANs can not boost the test performance obviously and even damage the generalization when standard data augmentation methods are used to approximate the case with a large train set. In contrast, GANs improve the performance by a large margin when the train set size is small and terrible overfitting occurs. All these experimental results support our theoretical implications in Section~\ref{sec: main results}. Furthermore, we also conduct experiments with the state-of-the-art diffusion model~\cite{DBLP:journals/corr/EDM}. Empirical results show the promise of the diffusion model in GDA and suggest it could have a faster learning rate than GAN.

\section{Preliminaries}
\label{sec: Preliminaries}

\subsection{Notations}
Let $\sX \subseteq \R^d$ be the input space and $\sY \subseteq \R$ be the label space. We denote by $\sD$ the population distribution over $\sZ = \sX \times \sY$. The $L_p$ norm of a random variable $X$ is denoted as $\Vert X\Vert_p = (\E|X|^p)^{\frac{1}{p}}$. Given a set $S = \{\bz_1, \bz_2, \dots, \bz_m\}$, we define $\Sdrop{i}$ as the set after removing the $i$-th data point in the set $S$, and $\Schange{i}$ as the set after replacing the $i$-th data point with $\bz_i'$ in the set $S$. Let $[m] = \{1,2,\dots,m\}$, then for every set $V \subseteq [n]$, we define $S_V = \{\bz_i: i \in V\}$. In addition, for some function $f = f(S)$, we denote its conditional $L_p$ norm with respect to $S_V$ by $\norm{f}_p(S_V) = (\E[\ \norm{f}^p \mid S_V])^{\frac{1}{p}}$. Besides, we denote the total variation distance by $d_{\mathrm{TV}}$ and KL divergence by $d_{\mathrm{KL}}$, respectively.

We let $(\sY)^{\sX}$ be the set of all measurable functions from $\sX$ to $\sY$, $\sA$ be a learning algorithm and $\sA(S) \in (\sY)^{\sX}$ be the hypothesis learned on the dataset $S$. Given a learned hypothesis $\sA(S)$ and a loss function $\ell: (\sY)^{\sX} \times \sZ \rightarrow \R_+$, the true error $\sR_{\sD}(\sA(S))$ with respect to the data distribution $\sD$ is defined as $\E_{\bz \sim \sD} [\ell(\sA(S), \bz)]$. In addition, the corresponding empirical error $\widehat{\sR}_{S}(\sA(S))$ is defined as $\frac{1}{m} \sum_{i=1}^m \ell(\sA(S), \bz_i)$.

\subsection{Generative data augmentation}
\label{sec: Preliminaries-GA}
In this part, we describe the process of GDA in a mathematical way. Given a training set $S$ with $m_S$ i.i.d. examples from $\sD$, we can train a conditional generative model $G$, and denote the model distribution by $\DG$. We note that the randomness from training the generative model is ignored in this paper. In addition, we define the expectation of the model distribution with regard to $S$ as $\sD_G = \E_S[\DG]$. Based on the trained generative model, we can then obtain a new dataset $S_G$ with $m_G$ i.i.d. samples from $\DG$, where $m_G$ is a hyperparameter. Typically, we consider the case that $m_G = \Omega(m_S)$ if GDA is utilized. We denote the total number of the data points in augmented set $\Saug = S \cup S_G$ by $m_T$. Besides, we define the mixed distribution after augmentation as $\Daug(S) = \frac{m_S}{m_T} \sD + \frac{m_G}{m_T} \DG$. As a result, a hypothesis $\sA(\Saug)$ can be learned on the augmented dataset $\widetilde{S}$. To understand the effect of GDA, we are interested in the generalization error $\vert{\sR_{\sD}(\sA(\Saug)) - \widehat{\sR}_{\Saug}(\sA(\Saug))}\vert$ with regard to the learned hypothesis $\sA(\Saug)$. For convenience, we denote it by \textit{Gen-error} in the remaining paper. Technically, we establish bounds for \textit{Gen-error} using the algorithmic stability introduced in the next subsection. As far as we know, this is the first work to investigate the learning guarantees for GDA.

\subsection{Generalization via algorithmic stability}

Algorithmic stability analysis is a important tool to provide guarantees for the generalization of machine learning models. A key advantage of stability analysis is that it exploits particular properties of the algorithm and provides algorithm-dependent bounds. Different notations of stability have been proposed and used to establish high probability bounds for generalization error~\cite{DBLP:journals/jmlr/BousquetE02, DBLP:journals/jmlr/Shalev-ShwartzSSS10,DBLP:conf/icml/KuzborskijL18,DBLP:conf/icml/LiuLNT17}. Among them, uniform stability is the most widely used and has been utilized to analyze the generalization of many learning algorithms, including regularized empirical risk minimization (ERM) algorithms~\cite{DBLP:journals/jmlr/BousquetE02} and stochastic gradient descent (SGD)~\cite{DBLP:conf/icml/KuzborskijL18,DBLP:conf/icml/HardtRS16, DBLP:conf/uai/ZhangZBP0022}. The uniform stability is defined as the following.

\begin{mydef}[Uniform stability]
Algorithm $\sA$ is uniformly $\beta_m$-stable with respect to the loss function $\ell$ if the following holds
\begin{equation*}
    \forall S \in \sZ^m,  \forall \bz \in \sZ, \forall i \in [m], \sup_{\bz} \abs{\ell(\sA(S), \bz) - \ell(\sA(S^i), \bz)} \leq \beta_m.
\end{equation*}
\end{mydef}

Given a $\beta_m$-stable learning algorithm, the milestone work~\cite{DBLP:journals/jmlr/BousquetE02} provides a high probability generalization bound that converges when $\beta_m = o(1/\sqrt{m})$. This condition may fail to hold in some modern machine learning settings~\cite {DBLP:conf/nips/XingSC21}, which leads to meaningless guarantees. In recent years, some works~\cite{DBLP:conf/nips/FeldmanV18, DBLP:conf/colt/FeldmanV19, DBLP:conf/colt/BousquetKZ20} improved the classical bound by establishing novel and tighter concentration inequalities. Especially,~\cite{DBLP:conf/colt/BousquetKZ20} proposed a moment bound and obtained a nearly optimal generalization guarantee, which only requires $\beta_m = o(1 / \log m)$ to converge. It is listed in the next theorem.



\begin{theorem}[Corollary 8,~\cite{DBLP:conf/colt/BousquetKZ20}]
\label{thm: classical stability bound}
Assume that $\sA$ is a $\beta_m$-stable learning algorithm and the loss function $\ell$ is bounded by $M$. Given a training set $S$ with $m$ i.i.d. examples sampled from the distribution $\sD$, then for any $\delta \in (0,1)$, with probability at least $1-\delta$, it holds that
\begin{equation}
\abs{\sR_{\sD}(\sA(S)) - \widehat{\sR}_{S}(A(S))} \lesssim \log (m) \beta_m \log \left(\frac{1}{\delta}\right)+ M \sqrt{\frac{1}{m}\log \left(\frac{1}{\delta}\right)}.
\end{equation}
\end{theorem}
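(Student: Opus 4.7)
The plan is to establish this high-probability bound via the moment method of Bousquet--Klochkov--Zhivotovskiy. Define the \emph{defect function} $\phi(S) := \sR_{\sD}(\sA(S)) - \widehat{\sR}_{S}(\sA(S))$. The target is to control $\|\phi(S)\|_p$ for $p \sim \log(1/\delta)$ and then invoke Markov's inequality to obtain the stated tail bound.

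First I would dispose of the expectation of $\phi$. By the classical Bousquet--Elisseeff symmetry argument (swap $\bz_i$ with an independent copy and take expectations), uniform stability yields $|\E_S[\phi(S)]| \lesssim \beta_m$. Next I would establish bounded differences: for any $S$ and any replacement $\Schange{i}$, the stability assumption controls the change of $\sR_{\sD}(\sA(S))$ by $\beta_m$, while the change of $\widehat{\sR}_{S}(\sA(S))$ is bounded by $\beta_m + 2M/m$ (splitting the empirical sum into the $i$-th summand, handled via $\ell \le M$, and the remaining $m-1$ summands, handled via $\beta_m$-stability). Hence $|\phi(S) - \phi(\Schange{i})| \lesssim \beta_m + M/m$.

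The core technical step is a moment inequality of the form $\|\phi(S) - \E \phi(S)\|_p \lesssim p \beta_m \log m + M\sqrt{p/m}$. I would prove it via the martingale decomposition $\phi - \E\phi = \sum_{i=1}^{m} D_i$ with $D_i = \E[\phi\mid \bz_1,\dots,\bz_i] - \E[\phi\mid \bz_1,\dots,\bz_{i-1}]$, and then invoke a Burkholder--Rosenthal type moment inequality. The $M\sqrt{p/m}$ term should come from the sample-average part of the martingale differences (variance of order $M^2/m$), while the $p\beta_m \log m$ term should come from recursively bounding the higher moments of the $D_i$ using the stability-based symmetry trick: rewriting $D_i$ as a difference of $\sA$ evaluated on a dataset with an independent resample of $\bz_i$ provides pointwise control $|D_i|\lesssim \beta_m$, and an iterated self-bounding (induction on $p$) upgrades the na\"ive Azuma bound $\sqrt{pm}\,\beta_m$ to $p\beta_m \log m$. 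Finally, Markov's inequality at $p=\log(1/\delta)$ yields the probability statement.

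The main obstacle will be this moment inequality. The naive McDiarmid route produces a deviation of order $\sqrt{m}\,\beta_m\sqrt{\log(1/\delta)}$, which is vacuous unless $\beta_m = o(1/\sqrt{m})$. Shaving the extra $\sqrt{m}$ factor down to $\log m$ is the heart of the argument and requires exploiting that each $D_i$ has very small \emph{conditional variance} on average---not merely that it is uniformly bounded by $\beta_m$. This is where the refined induction-on-moments technique of Bousquet--Klochkov--Zhivotovskiy is essential, and where the residual $\log m$ factor enters the bound.
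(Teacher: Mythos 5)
Your proposal is correct and follows essentially the same route as the paper, which states this result as an imported citation (Corollary 8 of Bousquet--Klochkov--Zhivotovskiy) and effectively reproves it as the $m_G=0$ special case of Theorem~\ref{thm: main generalization bound}: there the defect is controlled in $L_p$ via the concentration bound for sums with zero conditional mean and small bounded differences (Lemma~\ref{lemma: concentration bound}) plus the small gap to the actual empirical defect, and the tail bound follows from the moment-to-probability conversion (Lemma~\ref{lemma: moment inequality}) at $p \sim \log(1/\delta)$, exactly as you outline. Your additional sketch of how the key moment inequality $\lVert \phi - \E\phi\rVert_p \lesssim p\beta_m\log m + M\sqrt{p/m}$ is itself established (martingale decomposition, Burkholder--Rosenthal type bounds, and the iterated argument that trades the Azuma factor $\sqrt{m}\beta_m$ for $\beta_m\log m$) is a faithful account of the argument inside the cited lemma, which the paper treats as a black box.
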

We note that all generalization bounds mentioned above require a primary condition: data points are drawn i.i.d. according to the population distribution $\sD$. However, it no longer holds in the setting of GDA. On the one hand, the distribution $\DG$ learned by the generative model is generally not the same as the true distribution $\sD$. On the other hand, the learned $\DG$ is heavily dependent on the sampled dataset $S$. This property brings obstacles to the derivation of the generalization bound for GDA. Furthermore, though there exists some non-i.i.d. stability bounds~\cite{DBLP:conf/nips/MohriR07,DBLP:journals/jmlr/MohriR10,DBLP:conf/nips/ZhangL0W19}, it is still unclear whether these techniques are suitable in the GDA setting.

\section{Main theoretical results}
\label{sec: main results}

In this section, we present our main theoretical results. In Section~\ref{sec: General generalization bound}, we establish a general generalization bound (Theorem~\ref{thm: main generalization bound}) for GDA. Built upon the general learning guarantee, we then particularize the learning setup to the bGMM introduced in Section~\ref{sec: Preliminaries bGMM} and derive a specified generalization bound (Theorem~\ref{thm: bGMM generalization bound}). Finally, we discuss our theoretical implications on GANs in real-world problems (Theorem~\ref{thm: GAN generalization bound}). Notably, to the best of our knowledge, this is the first work to investigate the generalization guarantee of GDA.

\subsection{General generalization bound}
\label{sec: General generalization bound}

To understand GDA, we are interested in studying the generalization error of the hypothesis $\sA(\Saug)$ learned on the dataset $\Saug$ after augmentation. Formally, we need to bound $\vert\sR_{\sD}(\sA(\Saug)) - \widehat{\sR}_{\Saug}(\sA(\Saug))\vert$, which has been defined as \textit{Gen-error} in Section~\ref{sec: Preliminaries-GA}. Recall that $\Daug(S)$ has been defined as the mixed distribution after augmentation, to derive such a bound, we first decomposed \textit{Gen-error} as 
\begin{align*}
\abs{\textit{{Gen-error}}} \leq \underbrace{\abs{\sR_{\sD}(\sA(\Saug)) - \sR_{\Daug(S)}(\sA(\Saug))}}_{\text{Distributions' divergence}} + \underbrace{\abs{ \sR_{\Daug(S)}(\sA(\Saug)) - \widehat{\sR}_{\Saug}(\sA(\Saug))}}_{\text{Generaliztion error w.r.t. mixed distribution}}.
\end{align*}
The first term on the right hand can be bounded by the divergence (e.g., $d_{\mathrm{TV}}, d_{\mathrm{KL}}$) between the mixed distribution $\Daug(S)$ and the true distribution $\sD$. It is heavily dependent on the ability of the chosen generative model. For the second term, we note that classical stability bounds (e.g. Theorem~\ref{thm: classical stability bound}) can not be used directly, because points in $\Saug$ are drawn non-i.i.d.. We mainly use a core property of $\Saug$, that is, $S$ satisfies the i.i.d. assumption, and $S_G$ satisfies the conditional i.i.d. assumption when $S$ is fixed. Inspired by this property, we furthermore decompose this term and utilize sharp moment inequalities~\cite{boucheron2013concentration, DBLP:conf/colt/BousquetKZ20} to obtain an upper bound. Finally, we conclude with the following result.

\begin{theorem}[Generalization bound for GDA, proof in Appendix~\ref{proof: thm: main generalization bound}]
\label{thm: main generalization bound}
Assume that $\sA$ is a $\beta_m$-stable learning algorithm and the loss function $\ell$ is bounded by $M$. Given an set $\Saug$ augmented as described in Section~\ref{sec: Preliminaries-GA}, then for any $\delta \in (0,1)$, with probability at least $1-\delta$, it holds that

\begin{align*}
\abs{\text{Gen-error}} 
&\lesssim \underbrace{\frac{m_G}{m_T}M d_{\mathrm{TV}} \left(\sD , \DG \right)}_{\text{Distributions’ divergence}} + \frac{M(\sqrt{m_S} + \sqrt{m_G}) + m_S\sqrt{m_G}\beta_{m_T} }{m_T} \sqrt{\log \left(\frac{1}{\delta}\right)}\\
&+\frac{\beta_{m_T} \left(m_S \log m_S + m_G \log m_G\right) + m_S \log m_S M \sT(m_S, m_G) }{m_T} \log \left(\frac{1}{\delta}\right),
\end{align*}
where $\sT(m_S, m_G) = \sup_i d_{\mathrm{TV}}\left(\sD_G^{m_G}(S), \sD_G^{m_G}(S^i) \right)$. 

\end{theorem}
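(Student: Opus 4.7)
The plan is to start from the triangle decomposition already shown, $|\text{Gen-error}| \leq A + B$, with $A$ the distributions' divergence piece and $B$ the generalization gap with respect to $\Daug(S)$. Since $\Daug(S) = \tfrac{m_S}{m_T}\sD + \tfrac{m_G}{m_T}\DG$, the divergence piece collapses to $A = \tfrac{m_G}{m_T}|\sR_\sD(\sA(\Saug)) - \sR_{\DG}(\sA(\Saug))|$, which the dual characterization of total variation combined with $\ell \leq M$ bounds deterministically by $\tfrac{m_G}{m_T} M d_{\mathrm{TV}}(\sD, \DG)$ -- the first term of the theorem. For $B$, I use $\widehat{\sR}_{\Saug}(\sA(\Saug)) = \tfrac{m_S}{m_T}\widehat{\sR}_S(\sA(\Saug)) + \tfrac{m_G}{m_T}\widehat{\sR}_{S_G}(\sA(\Saug))$ and pair each empirical average with its matching mixture component, obtaining $B \leq \tfrac{m_S}{m_T} B_S + \tfrac{m_G}{m_T} B_G$ with $B_S = |\sR_\sD(\sA(\Saug)) - \widehat{\sR}_S(\sA(\Saug))|$ and $B_G = |\sR_{\DG}(\sA(\Saug)) - \widehat{\sR}_{S_G}(\sA(\Saug))|$.

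The $B_G$ piece is handled by conditioning on $S$. With $S$ fixed, $\DG$ is fixed, $S_G$ is i.i.d.\ from $\DG$, and the induced learner $S_G \mapsto \sA(S \cup S_G)$ is still $\beta_{m_T}$-uniformly stable because a single-coordinate perturbation in $S_G$ is a single-coordinate perturbation in $\Saug$. A conditional application of Theorem~\ref{thm: classical stability bound} at sample size $m_G$ yields $B_G \lesssim \log(m_G)\beta_{m_T}\log(1/\delta) + M\sqrt{\log(1/\delta)/m_G}$ with conditional -- hence unconditional -- probability at least $1-\delta/3$. Multiplying by $m_G/m_T$ contributes the $\tfrac{m_G \log m_G}{m_T}\beta_{m_T}\log(1/\delta)$ and $\tfrac{M\sqrt{m_G}}{m_T}\sqrt{\log(1/\delta)}$ terms of the theorem.

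The $B_S$ piece is the hard part because $S_G$ itself depends on $S$. I split $B_S$ further as $|g - \E_{S_G\mid S}[g]| + |\E_{S_G\mid S}[g]|$ for $g(S,S_G) := \sR_\sD(\sA(\Saug)) - \widehat{\sR}_S(\sA(\Saug))$. The fluctuation piece is, conditionally on $S$, a function of $m_G$ i.i.d.\ coordinates in $S_G$ with bounded differences $2\beta_{m_T}$ (both summands of $g$ depend on $S_G$ only through $\sA(\Saug)$ and inherit $\beta_{m_T}$-stability), so a conditional McDiarmid/BKZ moment bound yields a deviation of order $\sqrt{m_G}\beta_{m_T}\sqrt{\log(1/\delta)}$, which after the overall $m_S/m_T$-weighting produces exactly the cross term $\tfrac{m_S\sqrt{m_G}\beta_{m_T}}{m_T}\sqrt{\log(1/\delta)}$. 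For the mean piece, I view $\E_{S_G\mid S}[g]$ as a function of the $m_S$ i.i.d.\ $S$-coordinates; for a swap $\bz_i \mapsto \bz_i'$ I introduce an optimal coupling between $\sD_G^{m_G}(S)$ and $\sD_G^{m_G}(S^i)$ that equalises the two $S_G$'s with probability $1 - \sT(m_S, m_G)$, so the per-coordinate sensitivity becomes $\beta_{m_T} + M \sT(m_S, m_G)$ up to a lower-order $M/m_S$ term from the direct empirical-mean perturbation. Feeding this effective stability into the BKZ moment inequality and inverting with Markov at $p \asymp \log(1/\delta)$ delivers the remaining $\tfrac{\beta_{m_T} m_S \log m_S}{m_T}\log(1/\delta)$, $\tfrac{M m_S \log m_S \sT(m_S, m_G)}{m_T}\log(1/\delta)$, and $\tfrac{M\sqrt{m_S}}{m_T}\sqrt{\log(1/\delta)}$ terms. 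A union bound over the three high-probability events together with the deterministic bound on $A$ completes the proof.

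The main obstacle is the $B_S$ step. Standard BKZ assumes a deterministic uniform-stability constant, whereas here the effective sensitivity of $\E_{S_G\mid S}[g]$ to an $S$-coordinate swap is a mixture of $\beta_{m_T}$ (on the coupling event) and $M$ (off it). Routing this stochastic sensitivity through the BKZ moment-generating-function machinery -- so that the coupling failure contributes only an additive $M\sT(m_S, m_G)$ per coordinate rather than a full $M$, while simultaneously preserving the separate $S_G$-variance cross term -- will most likely require the conditional $L_p$-norm tool $\norm{\cdot}_p(S_V)$ introduced in the Preliminaries. A naive ``condition on $S$ then apply BKZ'' route would miss the $m_S\sqrt{m_G}\beta_{m_T}/m_T$ cross term entirely, and a naive marginalisation over $S_G$ would destroy the conditional-i.i.d.\ structure exploited in $B_G$; the delicate point is to treat both sources of randomness within one moment inequality.
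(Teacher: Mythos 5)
Your proposal is correct and follows essentially the same route as the paper's proof: the same two-term decomposition with the TV bound on the divergence piece, the same split of the mixed-distribution gap into an $S$-part and an $S_G$-part, the $S_G$-part handled by conditioning on $S$ (you invoke Theorem~\ref{thm: classical stability bound} as a black box where the paper re-derives it via Lemma~\ref{lemma: concentration bound}), and the $S$-part split into a conditional bounded-difference fluctuation over $S_G$ plus a BKZ-type moment bound over $S$ with effective per-coordinate sensitivity $\beta_{m_T}+M\sT(m_S,m_G)$ obtained from the total-variation/coupling argument. The only differences are cosmetic (a union bound over separate tail events instead of summing $L_p$ norms before a single moment-to-tail conversion), and your extra $M/m_S$ sensitivity term is harmlessly lower order.
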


\begin{remark}
\textbf{Tightness of the proposed upper bound.} Let $m_G = 0$, we observe that Theorem~\ref{thm: main generalization bound} degenerates to Theorem~\ref{thm: classical stability bound}. Therefore, our stability bound includes the i.i.d. setting as a special case and benefits from the same nearly optimal guarantee shown by~\cite{DBLP:conf/colt/BousquetKZ20}. Further analysis of the tightness of our guarantee when $m_G > 0$ is left to future work.
\end{remark}

\begin{remark}
\textbf{Comparison with the existing non-i.i.d. stability bounds.} Detailed introduction for non-i.i.d. stability bounds is placed in Section~\ref{sec: related work}. We note that previous results~\cite{DBLP:conf/nips/MohriR07,DBLP:journals/jmlr/MohriR10,DBLP:conf/nips/ZhangL0W19} are proposed for the general non-i.i.d. case. Therefore, they may fail to give awesome guarantees in this special case. In Appendix~\ref{sec: Discussion for existing non-i.i.d. stability bounds}, we show that it is hard to derive a better bound than Theorem~\ref{thm: main generalization bound} by using the existing non-i.i.d. stability results directly.
\end{remark}

\begin{remark}
\textbf{Stability of the learned distribution $\DG$.}  $\sT(m_S, m_G)$ in Theorem~\ref{thm: main generalization bound} reflects the stability of the learned distribution with regard to changing one data point in the training set received by the generative model. Our bound suggests that the more stable the model distribution is, the better performance can be achieved by GDA. As far as we know, though uniformly stability of some generative learning algorithms has been studied~\cite{DBLP:conf/icml/FarniaO21}, the new notation $\sT(m_S, m_G)$ emerging in our bound has not been studied yet.
\end{remark}


\begin{remark}
\textbf{Selection of augmentation size.} We first consider the order of the upper bound with respect to $m_S$. Observing Theorem~\ref{thm: main generalization bound}, we find that the distributions' divergence term can not be controlled by increasing $m_G$ while the remaining generalization error w.r.t. mixed distribution will vanish. We note that there exists a trade-off between the fast learning rate and augmentation consumption. When the order of the divergence term is smaller than that of the remains, increasing $m_G$ can induce a faster convergence. Otherwise, increasing $m_G$ can not lead to a faster convergence but a larger consumption. Therefore, an efficient augmentation size $m_{G, \mathrm{order}}^*$ with regard to the order of $m_S$ can be defined as follows:
\begin{equation*}
    m_{G, \mathrm{order}}^* = \inf_{m_G}\left\{ \text{generalization error w.r.t. mixed distribution} \lesssim \text{distributions' divergence} \right\}.
\end{equation*}
Furthermore, without considering the cost, the optimal augmentation number $m_G^*$ can be achieved by minimizing the upper bound directly. Unfortunately, it is difficult to calculate an explicit form of $m_{G, \mathrm{order}}^*$ and $m_G^*$ here due to the ignorance of $\beta_{m_T}$ and $\sT(m_S, m_G)$. We will discuss them more concretely in the specified cases.
\end{remark}

\begin{remark}
\textbf{Sufficient conditions for GDA with (no) faster learning rate.} We still consider the order of the learning guarantee with respect to $m_S$ here. Let $m_G = m_{G, \mathrm{order}}^*$, it can be found that divergence $d_{\mathrm{TV}} \left(\sD, \DG \right)$ plays an important role in deciding whether GDA can enjoy a faster learning rate. Comparing Theorem~\ref{thm: main generalization bound} with Theorem~\ref{thm: classical stability bound} (without augmentation), we can conclude sufficient conditions as follows.

\begin{cor}
Assume that the loss function $\ell$ is bounded by $M$, we have
\begin{itemize}
    \item if $d_{\mathrm{TV}} \left(\sD, \DG \right) = o\left(\max\left( \log(m)\beta_m, 1 / \sqrt{m})\right)\right)$, then GDA enjoys a faster learning rate.
    \item if $d_{\mathrm{TV}} \left(\sD, \DG \right) = \Omega\left(\max\left( \log(m)\beta_m, 1 / \sqrt{m})\right)\right)$, then GDA can not enjoy a faster learning rate.
\end{itemize}
\end{cor}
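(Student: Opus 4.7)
The plan is to derive the corollary by comparing the upper bound of Theorem~\ref{thm: main generalization bound}, evaluated at the augmentation size $m_G = m_{G,\mathrm{order}}^*$ defined in the preceding remark, against the classical learning rate
\[
R_m \;:=\; \max\!\left( \log(m)\beta_m,\; \frac{M}{\sqrt{m}}\right)
\]
of Theorem~\ref{thm: classical stability bound}. Treating $\delta$ as a constant and suppressing polylogarithmic factors in $\delta$, the bound of Theorem~\ref{thm: main generalization bound} splits into the distributions' divergence term $D(m_G) := \frac{m_G}{m_T} M\, d_{\mathrm{TV}}(\sD, \sD_G)$ and a residual $E(m_G)$ collecting the remaining terms, which is decreasing in $m_G$ while $m_S$ is held fixed.

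First I would invoke the definition of $m_{G,\mathrm{order}}^*$: it is precisely the smallest $m_G$ at which the order of $E(m_G)$ in $m_S$ is matched by that of $D(m_G)$. Consequently, at $m_G = m_{G,\mathrm{order}}^*$ the whole bound of Theorem~\ref{thm: main generalization bound} is of order $D(m_{G,\mathrm{order}}^*)$, and since $\frac{m_G}{m_T} \le 1$ this is at most $O(M\, d_{\mathrm{TV}}(\sD,\sD_G))$.

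Next I would compare $d_{\mathrm{TV}}(\sD, \sD_G)$ with the classical rate $R_m$. In the first case, if $d_{\mathrm{TV}}(\sD, \sD_G) = o(R_m)$, then the GDA bound at $m_{G,\mathrm{order}}^*$ is asymptotically negligible relative to $R_m$, so GDA attains a strictly faster rate in $m_S$ than the unaugmented learner of Theorem~\ref{thm: classical stability bound}. In the second case, if $d_{\mathrm{TV}}(\sD, \sD_G) = \Omega(R_m)$, then using the standing convention $m_G = \Omega(m_S)$ from Section~\ref{sec: Preliminaries-GA} we have $\frac{m_G}{m_T} = \frac{m_G}{m_S+m_G}$ bounded away from zero, hence $D(m_G) \gtrsim d_{\mathrm{TV}}(\sD,\sD_G) = \Omega(R_m)$ for every admissible choice of $m_G$; the GDA bound is therefore bounded below (in order) by $R_m$ and cannot outperform the classical bound.

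The main obstacle I anticipate is making precise the claim that the divergence term genuinely dominates at $m_G = m_{G,\mathrm{order}}^*$: this requires controlling how $\beta_{m_T}$ and $\sT(m_S, m_G)$ scale in $m_S$, which the authors explicitly flag as intractable in full generality and defer to the specialised settings (bGMM and GANs) of Sections~\ref{sec: theory Results on bGMM} and~\ref{sec: Implications on deep generative models}. A secondary subtlety is the lower-bound argument in the $\Omega$-case, where one must check that the factor $\tfrac{m_G}{m_T}$ in $D(m_G)$ does not shrink the divergence term below $R_m$; this is precisely what the convention $m_G = \Omega(m_S)$ rules out. Once these two points are handled, the corollary follows from the term-by-term comparison described above.
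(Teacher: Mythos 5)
Your proposal is correct and follows essentially the same route as the paper, which justifies this corollary only informally by comparing the bound of Theorem~\ref{thm: main generalization bound} evaluated at $m_G = m_{G,\mathrm{order}}^*$ (where, by definition, the divergence term dominates the residual) against the classical rate of Theorem~\ref{thm: classical stability bound}, using the standing convention $m_G = \Omega(m_S)$ so that $m_G/m_T$ is bounded away from zero in the $\Omega$-case. The obstacles you flag (controlling $\beta_{m_T}$ and $\sT(m_S,m_G)$ in general, and the fact that one is comparing upper bounds rather than the errors themselves) are exactly the caveats the paper itself acknowledges and defers to the specialized bGMM and GAN settings.
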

Notably, as we will present in Section~\ref{sec: theory Results on bGMM} and~\ref{sec: Implications on deep generative models}, though GDA can not enjoy a faster learning rate in the second case, it is possible to improve the generalization guarantee at a constant level when $m_S$ is small, which is important when awful overfitting happens.
\end{remark}

\subsection{Theoretical results on bGMM}
\label{sec: theory Results on bGMM}

The bGMM is a classical but non-trivial setting, which has been widely studied in literature~\cite{DBLP:journals/tit/CastelliC96,DBLP:journals/neco/AkahoK00,wang2022binary}. In this section, we investigated it in the context of GDA. Simulations will be conducted in Section~\ref{sec: bGMM exp} to verify these results.

\subsubsection{Setting of bGMM}
\label{sec: Preliminaries bGMM}

In this part, we introduce the data distribution configuration in the bGMM, as well as the corresponding linear classifier and conditional generative model. Similar setups of distribution and classifier have been adopted by many previous works~\cite{he2022information,DBLP:conf/nips/SchmidtSTTM18,DBLP:conf/nips/AlayracUHFSK19}.

\textbf{Distribution setting.} We consider a binary task where $\sY = \{-1,1\}$. Given a vector $\boldsymbol{\mu} \in \R^d (\Vert \boldsymbol{\mu}\Vert_2 = 1)$ and noise variance $\sigma^2 > 0$, we assume that the distribution satisfies $y \sim \mathrm{uniform}\{-1,1\}$ and $\bx \mid y \sim \mathcal{N}(y\boldsymbol{\mu}, \sigma^2 I_d)$. Besides, similarly to~\cite{DBLP:journals/corr/abs-2212-00362}, we assume that the distribution of $y$ is known, which is satisfied in conditional learning with labels.

\textbf{Simple linear classifier.} We consider a linear classifier parameterized by $\boldsymbol{\theta} \in \R^d$ in the form of prediction $\widehat{y} = \sign(\boldsymbol{\theta}^{\top} \bx)$. Given $m$ samples, $\boldsymbol{\theta}$ is learned by performing ERM with respect to the negative log-likelihood loss function, that is,
$$
l(\boldsymbol{\theta},(\bx, y)) = \frac{1}{2 \sigma^2}(\bx-y \boldsymbol{\theta})^{\top}(\bx-y \boldsymbol{\theta}).
$$
As a result, this learning algorithm will return $\widehat{\boldsymbol{\theta}} = \frac{1}{m} \sum_{i=1}^m y_i\bx_i$, which satisfies $\E[\widehat{\boldsymbol{\theta}}] = \boldsymbol{\mu}$.

\textbf{Conditional generative model.} We consider a simple generative model parameterized by $\boldsymbol{\mu}_y, \sigma^2_k$, where $y \in \{-1,1\}$ and $k \in [d]$. It learns the parameters of Gaussian mixture distribution directly. Given $m$ data points, let $m_y$ be the number of samples in class $y$, it returns
\begin{align*}
\widehat{\boldsymbol{\mu}}_y = \frac{\sum_{y_i = y} \bx_i}{m_y}, \quad \widehat{\sigma}^2_k = \sum_{y} \frac{m_y}{m} \frac{\sum_{y_i = y}(x_{ik} - \widehat{\mu}_{yk})^2}{m_y - 1},
\end{align*}
which are unbiased estimators of $\pm \boldsymbol{\mu}$ and $\sigma^2$, respectively. Based on the learned parameters, we can perform GDA by generating new samples from the distribution $y \sim \mathrm{uniform}\{-1,1\}$, $\bx \mid y \sim \mathcal{N}(\widehat{\boldsymbol{\mu}}_y, \Sigma)$, where $\Sigma = \mathrm{diag}(\sigma^2_1, \dots, \sigma^2_d)$.

\subsubsection{Theoretical results}

In this section, we establish the generalization bound for bGMM based on the general bound proposed in Theorem~\ref{thm: main generalization bound}. To derive such a bound, the main task is to bound terms $M$, $\beta_{m_T}$, $d_{\mathrm{TV}} \left(\sD, \DG \right)$ and $\sT(m_S, m_G)$ in Theorem~\ref{thm: main generalization bound}. For $M$ (Lemma~\ref{lemma: M}) and $\beta_{m_T}$ (Lemma~\ref{lemma: beta}), we mainly use the concentration property of the multivariate Gaussian variable (Lemma~\ref{lemma: gaussian bounded}). In addition, inspired by previous works on na\"ive Bayes~\cite{DBLP:conf/nips/NgJ01}, we bound $d_{\mathrm{TV}} \left(\sD, \DG \right)$ (Lemma~\ref{lemma: kl}) by discussing the distance between the estimated parameters and the true parameters of bGMM. Besides, the concentration property of $\sT(m_S, m_G)$ (Lemma~\ref{lemma: tau}) can be induced by the preceding discussion. Finally, we can obtain the following results.

\begin{theorem}[Generalization bound for bGMM, proof in Appendix \ref{proof: thm bGMM generalization bound}]
\label{thm: bGMM generalization bound}

Consider the setting introduced in Section~\ref{sec: Preliminaries bGMM}. Given a set $S$ with $m_S$ i.i.d. samples from the bGMM distribution $\sD$ and an augmented set $S_G$ with  $m_G$ i.i.d. samples drawn from the learned Gaussian mixture distribution, then with high probability at least $1-\delta$, it holds that

\begin{align}
&\abs{\textit{Gen-error}} \lesssim \text{(\ref{eqn: upper bound bgmm}) in Appendix~\ref{proof: thm bGMM generalization bound}}
\lesssim \begin{cases} 
\frac{\log(m_S)}{\sqrt{m_S}} & \text{ if fix $d$ and $m_G = 0$,}\\
\frac{\log^2(m_S)}{\sqrt{m_S}} & \text{ if fix $d$ and $m_G = \Theta(m_S)$,}\\
\frac{\log(m_S)}{\sqrt{m_S}} & \text{ if fix $d$ and $m_G = m_{G, \mathrm{order}}^*$,}\\
d & \text{ if fix $m_S$.}
\end{cases} \label{eqn: bGMM thm}
\end{align}
\end{theorem}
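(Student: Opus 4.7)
The strategy is to invoke the general stability bound of Theorem~\ref{thm: main generalization bound} and to control each of the four ingredients ($M$, $\beta_{m_T}$, $d_{\mathrm{TV}}(\sD, \DG)$, and $\sT(m_S, m_G)$) using the explicit bGMM parameterization from Section~\ref{sec: Preliminaries bGMM}. The auxiliary Lemmas~\ref{lemma: M}, \ref{lemma: beta}, \ref{lemma: kl}, and \ref{lemma: tau} mentioned in the text are exactly the four blocks; I would first prove each lemma and then substitute into Theorem~\ref{thm: main generalization bound}.

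\textbf{Step 1: bound the loss envelope $M$.} The loss $\ell(\boldsymbol{\theta},(\bx,y)) = \frac{1}{2\sigma^2}\|\bx-y\boldsymbol{\theta}\|_2^2$ is a quadratic form in a Gaussian vector plus a perturbation from the estimation error of $\widehat{\boldsymbol{\theta}}$. Using the sub-Gaussian concentration of multivariate Gaussians (Lemma~\ref{lemma: gaussian bounded}) together with a union bound over the samples in $\Saug$, I would establish that with high probability $M = O(d + \log(m_T))$ uniformly over $\Saug$.

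\textbf{Step 2: bound the uniform stability $\beta_{m_T}$.} Since the ERM is the sample mean $\widehat{\boldsymbol{\theta}} = \frac{1}{m_T}\sum_i y_i\bx_i$, replacing a single data point perturbs $\widehat{\boldsymbol{\theta}}$ by at most $\|y_i\bx_i - y_i'\bx_i'\|_2 / m_T$. Combining this with the local Lipschitzness of the quadratic loss on the high-probability compact domain from Step~1, I expect $\beta_{m_T} = O(\sqrt{d}/m_T)$ up to logarithmic factors.

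\textbf{Step 3: bound the divergence $d_{\mathrm{TV}}(\sD, \DG)$.} Here I would use Pinsker's inequality to pass to KL divergence between the Gaussian mixtures, and then use the closed-form KL between Gaussians. The key input is the concentration of the plug-in estimators $\widehat{\boldsymbol{\mu}}_y$ and $\widehat{\sigma}_k^2$ around their means, which, by standard multivariate Gaussian concentration applied class-wise, yields $\|\widehat{\boldsymbol{\mu}}_y - y\boldsymbol{\mu}\|_2 = \widetilde{O}(\sqrt{d/m_S})$ and $|\widehat{\sigma}_k^2 - \sigma^2| = \widetilde{O}(1/\sqrt{m_S})$. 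Substituting gives $d_{\mathrm{TV}}(\sD,\DG) = \widetilde{O}(\sqrt{d/m_S})$.

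\textbf{Step 4: bound $\sT(m_S, m_G)$.} To control $d_{\mathrm{TV}}(\sD_G^{m_G}(S), \sD_G^{m_G}(S^i))$ I would first use the tensorization inequality $d_{\mathrm{TV}}(P^{m_G}, Q^{m_G}) \le m_G\, d_{\mathrm{TV}}(P,Q)$ (or, equivalently, $\sqrt{m_G}$ via Pinsker and KL additivity), then apply the explicit formula for $\widehat{\boldsymbol{\mu}}_y$ and $\widehat{\sigma}_k^2$ to see that replacing one point shifts each parameter by $O(1/m_S)$ on the high-probability event from Step~3. This yields $\sT(m_S,m_G) = \widetilde{O}(m_G/m_S)$ up to constants depending on $d$.

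\textbf{Step 5: assemble and instantiate the regimes.} Plugging Steps~1--4 into Theorem~\ref{thm: main generalization bound} produces the explicit upper bound referenced as (\ref{eqn: upper bound bgmm}) in Appendix~\ref{proof: thm bGMM generalization bound}. Then I would simplify under the four scalings: $m_G=0$ recovers the classical $\widetilde{O}(1/\sqrt{m_S})$; $m_G = \Theta(m_S)$ makes the divergence term dominate over the vanishing stability remainder and introduces an extra $\log(m_S)$ factor through $\beta_{m_T}\log m_T$; the optimal order $m_G = m^*_{G,\mathrm{order}}$ is obtained by balancing the divergence term with the generalization remainder, restoring the $\widetilde{O}(1/\sqrt{m_S})$ rate; and the ``fix $m_S$'' regime isolates the dimension dependence which is linear in $d$ through the $M$ and $\sT$ contributions.

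\textbf{Main obstacle.} The hardest technical step is Step~4: controlling $\sT(m_S, m_G)$ requires relating total variation between $m_G$-fold product mixtures to a one-coordinate perturbation of the parameters of a product of Gaussian mixtures, and doing so without losing an avoidable factor of $m_G$. Careful use of KL additivity together with the quadratic dependence of Gaussian KL on the parameter perturbation should allow this factor to be absorbed into the $m_G/m_S$ scaling needed for the $m_G = \Theta(m_S)$ case to match the stated bound; ensuring that the resulting exponent on $\log(m_S)$ is exactly $2$ rather than larger will be the bookkeeping bottleneck.
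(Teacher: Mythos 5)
Your overall architecture is exactly the paper's: prove the four lemmas bounding $M$, $\beta_{m_T}$, $d_{\mathrm{TV}}(\sD,\DG)$, and $\sT(m_S,m_G)$ via Gaussian concentration and Pinsker plus the closed-form Gaussian KL, then substitute into Theorem~\ref{thm: main generalization bound}. Steps 1 and 3 match the paper (Lemmas~\ref{lemma: M} and~\ref{lemma: kl}). Step 2 has a small slip: the single-point perturbation of $\widehat{\boldsymbol{\theta}}$ is $O(\sqrt{d}/m_T)$ \emph{and} the local Lipschitz constant of the quadratic loss on the high-probability ball is itself $O(\sqrt{d})$, so the product is $\beta_{m_T}=O\!\left((d+\log(m_T/\delta))/m_T\right)$ as in Lemma~\ref{lemma: beta}, not $O(\sqrt{d}/m_T)$; this does not change the rates in $m_S$ but it matters for the dimension bookkeeping.

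The genuine gap is Step 4. The bound you actually commit to, $\sT(m_S,m_G)=\widetilde{O}(m_G/m_S)$ via the linear tensorization $d_{\mathrm{TV}}(P^{m_G},Q^{m_G})\le m_G\, d_{\mathrm{TV}}(P,Q)$, is too weak to prove the theorem: with $m_G=\Theta(m_S)$ it gives $\sT=\widetilde{O}(1)$, and the corresponding term $\frac{m_S\log m_S\, M\, \sT(m_S,m_G)}{m_T}\log(1/\delta)$ in Theorem~\ref{thm: main generalization bound} then reduces to $\widetilde{O}(\log m_S)$, which does not converge at all, let alone at rate $\log^2(m_S)/\sqrt{m_S}$. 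The paper's Lemma~\ref{lemma: tau} instead uses KL additivity over the $m_G$ conditionally i.i.d.\ draws, $d_{\mathrm{KL}}(\sD_G^{m_G}(S)\,\|\,\sD_G^{m_G}(\Sdrop{i}))=m_G\, d_{\mathrm{KL}}(\sD_G(S)\,\|\,\sD_G(\Sdrop{i}))$, combined with the fact that a one-point replacement shifts each estimated parameter by $\widetilde{O}(1/m_S)$ and that the Gaussian KL is \emph{quadratic} in that shift, giving per-sample KL $\widetilde{O}(d/m_S^2)$ and hence, after Pinsker, $\sT(m_S,m_G)=\widetilde{O}(\sqrt{m_G d}/m_S)=\widetilde{O}(\sqrt{d/m_S})$ when $m_G=\Theta(m_S)$. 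You gesture at exactly this route in your ``main obstacle'' paragraph, but you neither adopt it as the stated bound nor carry it through; as written, the $m_G=\Theta(m_S)$ case of the theorem does not follow. Relatedly, the extra logarithm in that regime comes from this $\sT$ term (multiplied by $M=O(\log m_S)$ for fixed $d$), not from $\beta_{m_T}\log m_T$, whose contribution is $\widetilde{O}(1/m_S)$ and strictly lower order.
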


\begin{remark}
\textbf{Explicit upper bound of generalization error.} (\ref{eqn: upper bound bgmm}) give us an explicit form to predict the generalization error in the bGMM setting. The optimal augmentation size $m_G^*$ can be obtained by minimizing it. In Section~\ref{sec: bGMM exp}, we will see that (\ref{eqn: upper bound bgmm}) predicts the order and trend of true generalization error well, which verifies the correctness of the proposed learning guarantee in the bGMM setting.
\end{remark}

\begin{remark}
\textbf{Negative learning rate of GDA.} Even though we estimate the sufficient statistic of the Gaussian mixture distribution ($\boldsymbol{\mu}$ and $\sigma^2$) directly in this special case, we can not hope to enjoy a better learning rate when $m_G = m_{G, \mathrm{order}}^*$. Things could be worse when we model the distribution in reality (e.g., images, texts), which suggests that when original samples are abundant, further performing GDA can not improve the generalization. Theorem~\ref{thm: GAN generalization bound} supports this viewpoint.
\end{remark}

\begin{remark}
\textbf{Improvement at a constant level matters a lot when overfitting happens.} From (\ref{eqn: bGMM thm}) we know that when $m_S$ is small and $d$ is large, the curse of dimensionality happens, which leads to an awful generalization error. In this case, though GDA can only improve it at a constant level by controlling the generalization error w.r.t. mixed distribution, the effect is obvious due to the large scale of $d$. 
\end{remark}

\subsection{Implications on deep generative models}
\label{sec: Implications on deep generative models}

Nowadays, data augmentation with deep generative models is widely used and received lots of attention. Therefore, benefiting from the recent advances in the generative adversarial network (GAN)~\cite{goodfellow2020generative,DBLP:journals/jmlr/Liang21} and SGD~\cite{DBLP:conf/uai/ZhangZBP0022,DBLP:journals/corr/mingzewang}, we discuss implications of our theory on real problems, which will be verified by the empirical experiments in Section~\ref{sec: cifar10 exp}.

\subsubsection{Learning setup}
\label{sec: Learning setup gan}

We consider the general binary classification task in the deep learning era. In this part, we introduce the setup of data distribution, deep neural classifier, learning algorithm, and deep generative model.

\textbf{Distribution setting.} We assume that input space satisfies $\sX \subseteq [0,1]^d$, and our analysis can be easily extended to any bounded input space. This assumption generally holds in many practical problems, for example, image data satisfies $\sX \subseteq [0,255]^d$. Similarly to bGMM, we let $\sY = \{-1,1\}$ and assume that the distribution of $y$ is known.

\textbf{Deep neural classifier.} We consider a general $L$-layer multi-layer perception (MLP) or convolutional neural network (CNN) $f(\bw, \cdot): \sZ \rightarrow \R$, where $\bw$ denotes its weights and $\bw_l$ denotes the weights in the $l$-th layer. Its abstract architecture is consistent with that in~\cite{DBLP:journals/corr/mingzewang}, and details can be found in Appendix~\ref{sec: deep classifier arch}. In addition, we suppose the deep neural classifier satisfies smoothness and boundedness assumptions, which are adopted by many previous works~\cite{DBLP:conf/icml/HardtRS16, DBLP:conf/uai/ZhangZBP0022, DBLP:journals/corr/mingzewang, DBLP:conf/nips/BartlettFT17}.

\begin{assumption}[Smoothness]
We assume that $f(\bw, \cdot)$ is $\eta$-smooth with respect to $\bw$, that is, $\vert \nabla f(\bw_1, \cdot) - \nabla f(\bw_2, \cdot) \vert \leq \eta \Vert\bw_1-\bw_2\Vert_2$ for any $\bw_1$ and $\bw_2$.
\end{assumption}

\begin{assumption}[Boundedness]
\label{ass: param bounded}
We assume that for all $l \in [L]$, there exists a constant $W_l$, which satisfies $\Vert \bw_l \Vert_2 \leq W_l$.
\end{assumption}

\textbf{Learning algorithm for the deep neural classifier.} The setting of the learning algorithm is conformed to the practice. We assume that the loss function is the binary cross-entropy loss $\ell(f, (\bx, y)) = \log(1 + \exp(-y f(\bw, \bx)))$ and it is optimized by SGD. For the $t$-th step, we set the learning rate as $\frac{c}{\eta t}$ for some positive constant $c$. Besides, we assume that the total iteration number $T = O(m_T)$. These configurations are adopted by past works on the stability of SGD~\cite{DBLP:conf/icml/HardtRS16, DBLP:conf/uai/ZhangZBP0022}.

\textbf{Deep generative model.} We choose GAN as our deep generative model, which is parameterized by MLP. Its abstract architecture is the same as that in Theorem 19 of~\cite{DBLP:journals/jmlr/Liang21}, and details are placed in Appendix~\ref{sec: GAN arch}. Besides, due to the lack of conditional generative model theory, we make a naive approximation here by assuming that each category is learned by a GAN, respectively.

\subsubsection{Theoretical results}

Similarly to the bGMM setting, we establish a generalization bound for the deep learning setup. To reach this goal, we bound terms $M$, $\beta_{m_T}$, and $d_{\mathrm{TV}} \left(\sD, \DG \right)$ based on the recent results on GAN~\cite{DBLP:journals/jmlr/Liang21} and SGD~\cite{DBLP:conf/nips/ZhangL0W19,DBLP:journals/corr/mingzewang}. First, boundedness and Lipschitzness of classifier $f$ can be induced from Assumption~\ref{ass: param bounded} (Lemma~\ref{lemma: Upper bounds for output and gradient}). Second, the boundedness of $f$ directly implies the upper bound for $M$ because the binary cross-entropy loss is 1-Lipschitz with respect to $f$. Third, by combining the Lipschitzness and smoothness of $f$, we can bound $\beta_{m_T}$ for SGD (Lemma~\ref{lemma: SGD stability}). Finally, $d_{\mathrm{TV}} \left(\sD, \DG \right)$ can be bounded by the result in~\cite{DBLP:journals/jmlr/Liang21} (Lemma~\ref{lemma: Learnability of GAN}).

\begin{theorem}[Generalization bound for GAN, proof in Appendix~\ref{sec: proof of GAN generalization bound} ]
\label{thm: GAN generalization bound}
Consider the setup introduced in Section~\ref{sec: Learning setup gan}. Given a set $S$ with $m_S$ i.i.d. samples from any distribution $\sD$ and an augmented set $S_G$ with $m_G$ i.i.d. examples sampled from the distribution $\DG$ learned by GANs, then for any fixed $\delta \in (0,1)$, with probability at least $1-\delta$, it holds that

\begin{align*}
\E \abs{\textit{Gen-error}}
 \lesssim \begin{cases} 
\frac{1}{\sqrt{m_S}} & \text{ if fix $W, L, d$, let $m_G = 0$,}\\
\max\left(\left(\frac{\log(m_S)}{m_S} \right)^{\frac{1}{4}}, { \log m_S  \sT(m_S, m_G) }\right) & \text{ if fix $W, L, d$, let $m_G = \Theta(m_S)$,}\\
\left(\frac{\log(m_S)}{m_S} \right)^{\frac{1}{4}} & \text{ if fix $W, L, d$, let $m_G = m_{G, \mathrm{order}}^*$,}\\
{d} L^2 \left(  \prod_{l=1}^L\left\| W_l \right\|_2 \right)^2 & \text{ if fix $m_S$.} 
\end{cases}
\end{align*}

\end{theorem}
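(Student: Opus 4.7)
The plan is to instantiate the general bound of Theorem~\ref{thm: main generalization bound} to the deep-learning setup by separately controlling $M$, $\beta_{m_T}$, and $d_{\mathrm{TV}}(\sD,\DG)$, then substituting and specializing to each of the four cases. Since the statement is an in-expectation bound, an expectation-level control on the divergence (which is all existing GAN theory affords) will be enough.

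\textbf{Step 1: architectural consequences.} Under Assumption~\ref{ass: param bounded} and the smoothness assumption, a standard layerwise chain-rule argument (Lemma~\ref{lemma: Upper bounds for output and gradient} in the appendix) yields uniform estimates of the form
\[
\sup_{\bx \in [0,1]^d} |f(\bw,\bx)| \;\lesssim\; \sqrt{d}\,\prod_{l=1}^L \|\bw_l\|_2, \qquad \sup_{\bx} \|\nabla_{\bw} f(\bw,\bx)\|_2 \;\lesssim\; L\sqrt{d}\,\prod_{l=1}^L \|\bw_l\|_2.
\]
Because the binary cross-entropy loss is $1$-Lipschitz in $f$, these two estimates lift directly to the loss, giving a value of $M$ scaling like $\sqrt{d}\,L\prod_l\|W_l\|_2$, together with Lipschitzness and $\eta$-smoothness of $\ell$ in $\bw$ on the bounded parameter set.

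\textbf{Step 2: uniform stability of SGD.} With the loss now controlled, the stability analysis of SGD with step size $c/(\eta t)$ and $T=O(m_T)$ iterations (Lemma~\ref{lemma: SGD stability}, following the Hardt--Recht--Singer template and the refinements cited in the paper) yields $\beta_{m_T}=\tilde O(1/m_T)$, with constants polynomial in $L$, $\prod_l\|W_l\|_2$, and $\eta$.

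\textbf{Step 3: GAN divergence bound.} For the per-class GAN used here, I would invoke Theorem~19 of~\cite{DBLP:journals/jmlr/Liang21}, restated as Lemma~\ref{lemma: Learnability of GAN}, to obtain $\E\,d_{\mathrm{TV}}(\sD,\DG)=O\bigl((\log m_S/m_S)^{1/4}\bigr)$. Crucially this is only an in-expectation estimate; this is precisely why the final theorem is phrased with $\E|\textit{Gen-error}|$ rather than with high probability.

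\textbf{Step 4: assembly and case analysis.} Substituting these three inputs into Theorem~\ref{thm: main generalization bound} and taking expectations (using Jensen to move $\E$ inside $|\cdot|$ where required), I would read off each case. For $m_G=0$ the bound collapses to Theorem~\ref{thm: classical stability bound} and gives the $1/\sqrt{m_S}$ rate. For $m_G=\Theta(m_S)$, the divergence piece contributes $(\log m_S/m_S)^{1/4}$ via Step~3, while the generalization-w.r.t.-mixed-distribution piece contributes $\log(m_S)\,\sT(m_S,m_G)$ plus lower-order items, producing the $\max$-form bound. For $m_G=m_{G,\mathrm{order}}^*$, the optimal choice is defined implicitly by balancing these two pieces, and the common value of that balance is $(\log m_S/m_S)^{1/4}$. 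For fixed $m_S$ with $d$, $L$, $W$ varying, the dominant scaling comes from $M$ appearing both directly and quadratically through the $M/\sqrt{m_T}$ concentration term, yielding $d\,L^{2}\bigl(\prod_l\|W_l\|_2\bigr)^{2}$.

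\textbf{Main obstacle.} The chief obstacle is the replace-one stability $\sT(m_S,m_G)$ of the GAN's sampling distribution. Unlike the bGMM case, where the learned distribution is an explicit sample mean with transparent concentration behavior, the response of a trained GAN to replacement of one training example is not quantified by existing theory --- the uniform-stability results for GANs in~\cite{DBLP:conf/icml/FarniaO21} concern a different object. This is exactly why $\sT$ is left unresolved in the $m_G=\Theta(m_S)$ case and is absorbed implicitly through the definition of $m_{G,\mathrm{order}}^*$ in the third case; closing this gap would require a new replace-one analysis of GAN training dynamics and is outside the scope of the proof plan above.
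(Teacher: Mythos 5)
Your proposal follows essentially the same route as the paper's proof: bound $M$ via Lemma~\ref{lemma: Upper bounds for output and gradient} and the $1$-Lipschitzness of the logistic loss, bound $\beta_{m_T}$ via Lemma~\ref{lemma: SGD stability} with the Lipschitz constant $\rho=\sqrt{d}\,L\prod_{l}\|W_l\|_2$, bound $\E\, d_{\mathrm{TV}}(\sD,\DG)$ via Lemma~\ref{lemma: Learnability of GAN} together with Jensen's inequality, and then substitute into Theorem~\ref{thm: main generalization bound}, take expectations, and read off the four cases. One small correction: in the fixed-$m_S$ case the factor $dL^2\left(\prod_{l}\|W_l\|_2\right)^2$ arises from $\beta_{m_T}\lesssim \rho^2 T^c/m_T^{1+c}$, i.e., the \emph{squared Lipschitz constant} in the SGD stability bound, not from $M$ entering quadratically --- the paper's $M\lesssim\sqrt{d}\prod_{l}\|W_l\|_2$ (with no extra $L$) appears only linearly in the bound.
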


\begin{remark}
\textbf{Slow learning rate with GDA.} Upper bounds in Theorem~\ref{thm: GAN generalization bound} show that when we perform GDA, the order with regard to $m_S$ strictly becomes worse. Therefore, it implies that when $m_S$ is large enough, it is hopeless to boost the performance obviously by augmenting the train set based on GANs. On the contrary, GDA may make the generalization worse.
\end{remark}

\begin{remark}
\textbf{GDA matters a lot when overfitting happens.} From Theorem~\ref{thm: GAN generalization bound}, we know that as the data dimension and model capacity become larger, the deep neural classifier trained with SGD becomes easier to overfit the train set and gain terrible generalization performance. In this case, a constant-level improvement of generalization caused by GDA will be significant.
\end{remark}

\section{Experiments}
\label{sec: exp}

In this section, we conduct experiments to verify the results in Section~\ref{sec: main results}, which are two-folded:

\begin{itemize}
    \item We conduct simulations in the setting of bGMM and validate the results in Theorem~\ref{thm: bGMM generalization bound}.
    \item We empirically study the effect of GDA on the real CIFAR-10 dataset~\cite{cifar}, which supports our theoretical implications on GANs.
\end{itemize}

\subsection{Simulations on bGMM}
\label{sec: bGMM exp}
We let $\boldsymbol{\mu} = (1/\sqrt{d}, \dots, 1/\sqrt{d})^{\top}$ to satisfy $\Vert\boldsymbol{\mu}\Vert_2 = 1$, $\sigma^2 = 0.6^2$, and randomly generate 10,000 samples according to the Gaussian mixture distribution as the test set. We approximate the \textit{Gen-error} by the gap between the training error and the test error. To eliminate randomness, we average over 1,000 random runs and report the mean results. We denote $\gamma = m_G/m_S$ in this section.

First, we investigate the case that data dimension $d$ is fixed. To verify the order is near to $O(1/ \sqrt{m_S})$ ($\log m_S$ can be ignored with respect to $\sqrt{m_S}$), we fix $d = 1$, and change $m_S$ from 20 to 500. For each selected $m_S$, we adjust $\gamma$ from 0 to 50 to generate new samples in different levels. The result is presented in Figure~\ref{fig: sim a}, which shows that the generalization error decreases in a near $O(1/ \sqrt{m_S})$ order. Besides, generalization error without GDA is always (near) optimal, which empirically proves that GDA is ineffective when $m_S$ is large enough.

Second, we conduct simulations in the case that $m_S$ is fixed as a small constant. To verify the order is $O(d)$, we fix $m_S = 10$, and change $d$ from 2 to 100. For each selected $d$, we also adjust $\gamma$ from 0 to 50. The result is displayed in Figure~\ref{fig: sim d}, which shows that the generalization error increases in a $O(d)$ order. In addition, when $d$ is large (e.g., 100) and  the curse of dimensionality happens, generalization error with larger $\gamma$ is better by a big margin, which suggests that though GDA could only enhance it at a constant level, the effect is significant when overfitting occurs.

Third, we design experiments to validate whether the upper bound in Theorem~\ref{thm: bGMM generalization bound} can predict the trend of generalization error well. Similarly to previous theoretical works (e.g.~\cite{DBLP:journals/ml/Ben-DavidBCKPV10}), we find an approximation of~(\ref{eqn: upper bound bgmm}) in Appendix~\ref{proof: thm bGMM generalization bound} as our prediction by replacing $\log(a/\delta)$ with $\log(a)$ if $a \ne 1$ else 1. We plot the ground truths and predictions in the case that $(d, m_S) = (1, 40)$ and $(50,10)$, respectively. Results in Figure~\ref{figures: simulations} show that our bound predicts the trend of generalization error well. Therefore, an approximation of the optimal augmentation size $m_G^*$ can be found by minimizing (\ref{eqn: upper bound bgmm}).

\begin{figure}[t]
\centering

\subfloat[$d = 1$, truth]{
\includegraphics[width=0.3\columnwidth,height=0.2\columnwidth]{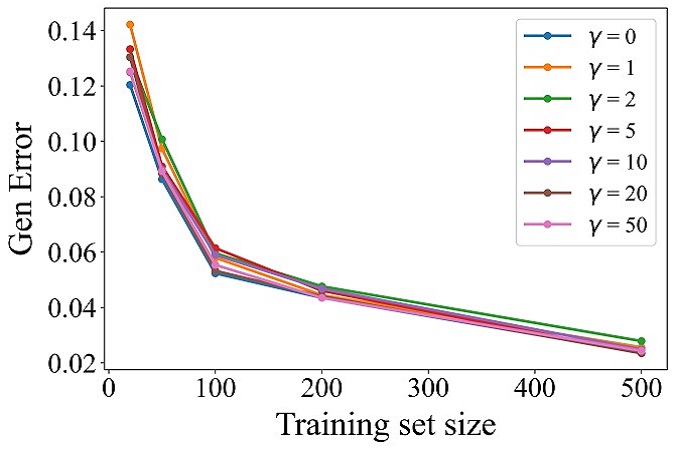}
\label{fig: sim a}
}%
\subfloat[$(d, m_S) = (1, 40)$, truth]{
\includegraphics[width=0.3\columnwidth,height=0.2\columnwidth]{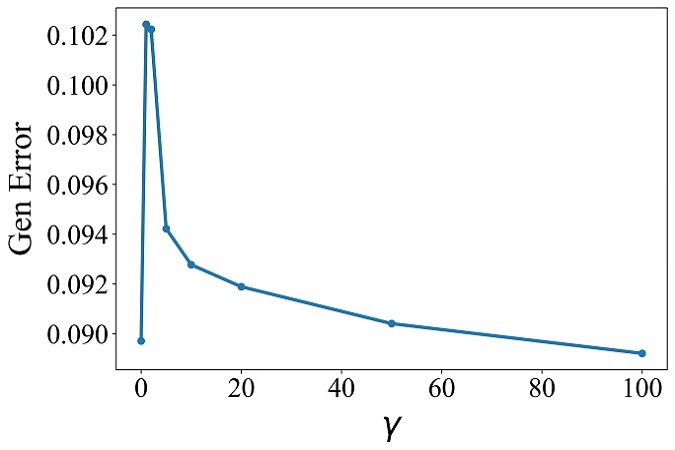}
\label{fig: sim b}
}%
\subfloat[$(d, m_S) = (1, 40)$, prediction]{
\includegraphics[width=0.3\columnwidth,height=0.2\columnwidth]{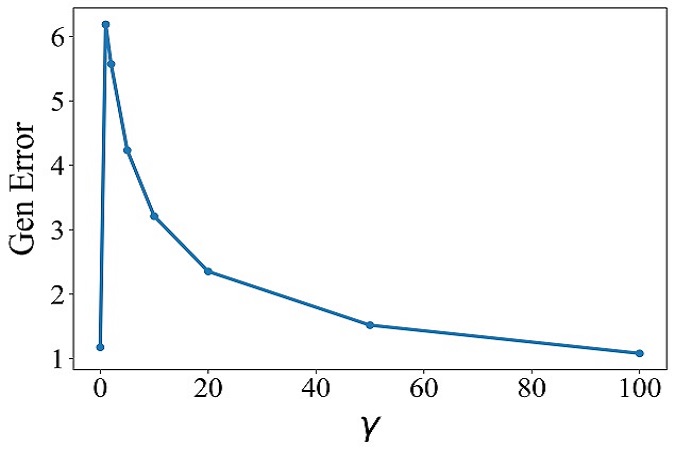}
\label{fig: sim c}
}%

\vskip 0.5ex

\subfloat[$m_S = 10$, truth]{
\includegraphics[width=0.3\columnwidth,height=0.2\columnwidth]{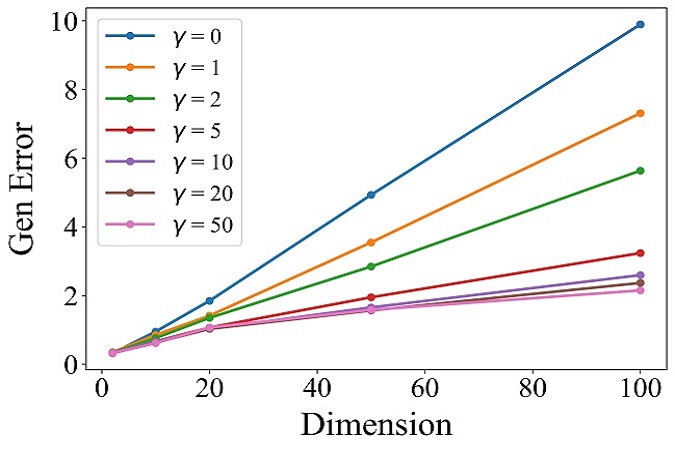}
\label{fig: sim d}
}%
\subfloat[$(d, m_S) = (50, 10)$, truth]{
\includegraphics[width=0.3\columnwidth,height=0.2\columnwidth]{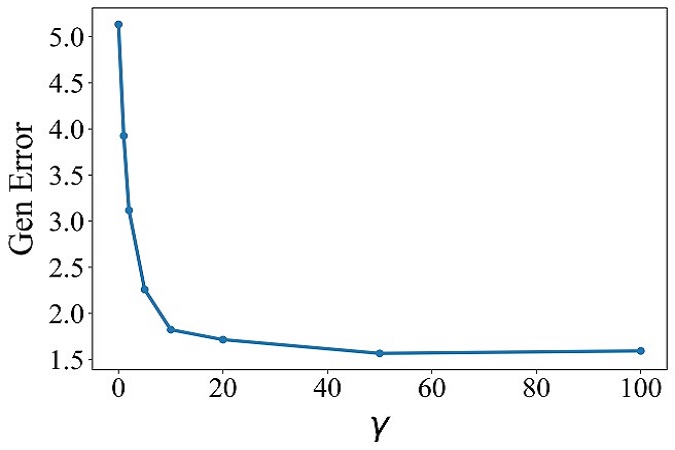}
\label{fig: sim e}
}%
\subfloat[$(d, m_S) = (50, 10)$, prediction]{
\includegraphics[width=0.3\columnwidth,height=0.2\columnwidth]{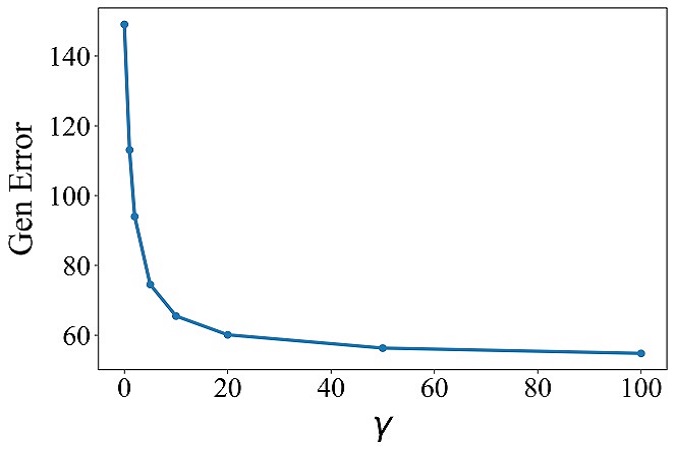}
\label{fig: sim f}
}%

\vskip 1em

\caption{Simulations results on the bGMM setting.}
\vskip -1.2em

\label{figures: simulations}
\end{figure}

\subsection{Empirical results on CIFAR-10}
\label{sec: cifar10 exp}

In this part, we conduct experiments on the real CIFAR-10 dataset with ResNets~\cite{DBLP:conf/cvpr/HeZRS16} and various deep generative models, including conditional DCGAN (cDCGAN)~\cite{DBLP:journals/corr/RadfordMC15}, StyleGAN2-ADA~\cite{DBLP:conf/nips/KarrasAHLLA20} and elucidating diffusion
model (EDM)~\cite{DBLP:journals/corr/EDM}. Details of experiments can be found in Appendix~\ref{sec: Additional experimental details and results}.

To validate our theoretical implications in Section~\ref{sec: Implications on deep generative models}, we are supposed to discuss two cases, where one $m_S$ is small and the other $m_S$ is large. The two cases can be approximated by whether performing another data augmentation. We additionally use the standard data augmentation in~\cite{DBLP:conf/cvpr/HeZRS16} to approximate the case with large $m_S$. Then, for each selected ResNet and generative model, we set $m_G$ from 0 to 1M and record the accuracy of the trained classifier on the CIFAR-10 test set. Results are presented in Table~\ref{tab: deep result} of Appendix~\ref{sec: Additional experimental details and results}. We interpret them as the following.

\textbf{GANs improve the test performance of classifiers when overfitting occurs.} When standard augmentation is not used, ResNets trained on the train set consistently suffer from overfitting. However, this can be relieved by data augmentation based on GANs, though cDCGAN can not generate high-quality images. This phenomenon supports the implications from Theorem~\ref{thm: GAN generalization bound}.

\textbf{We can not have an obvious improvement by using GANs when $m_S$ is approximately large.} When standard augmentation is used, deep neural classifiers trained on the CIFAR-10 dataset achieve non-trivial performance. In this case, GDA with cDCGAN always damages the generalization ability. Though we use StyleGAN2-ADA, which achieves state-of-the-art conditional image generation performance on the CIFAR-10 dataset, we can not boost the performance of classifiers obviously, and even consistently obtain worse test accuracy when $m_G$ is 500k or 1M.

\textbf{Diffusion probabilistic models are promising for GDA.} As diffusion models show their excellent ability on image generation, a natural question emerges: \textit{are diffusion models more suitable for GDA?} We choose the EDM that achieves state-of-the-art FID scores as the generator. Table~\ref{tab: deep result} in Appendix~\ref{sec: Additional results} shows that EDM improves the test accuracy obviously, even though the standard augmentation has been utilized. This suggests that diffusion models enjoy $d_{\mathrm{TV}} \left(\sD, \DG \right)$ with a faster convergence rate than GANs, and shows the promise of diffusion models in  GDA.

\section{Related work}
\label{sec: related work}

\textbf{Data augmentation practice and theory.} Data augmentation~\cite{DBLP:journals/jbd/ShortenK19,DBLP:journals/jbd/ShortenKF21a} is a universal method to improve the generalization ability of deep neural networks in the case of insufficient training data. Classical data augmentation methods include geometric transformations~\cite{DBLP:conf/cvpr/HeZRS16}, color space transformations~\cite{jurio2010comparison}, kernel filters~\cite{kang2017patchshuffle}, mixing images~\cite{inoue2018data}, random erasing~\cite{zhong2020random}, feature space augmentation~\cite{devries2017dataset}, etc. There are also many theoretical works studying the effect of classical data augmentation methods from different perspectives~\cite{DBLP:conf/icml/DaoGRSSR19,DBLP:conf/icml/WuZVR20,DBLP:conf/nips/HaninS21,DBLP:conf/icml/ShenBG22,DBLP:journals/corr/abs-2303-08433}.

With the advance of deep generative models, GDA becomes a novel and promising data augmentation technique. For example,~\cite{DBLP:journals/corr/abs-2304-08466} shows that augmenting the ImageNet training set~\cite{imagenet} with samples from the conditional diffusion models significantly improves the classification accuracy. However, little work has investigated the theory of GDA. Both empirical success and theoretical opening encourage us to study the role of GDA.

\textbf{Algorithmic stability theory.} Classical results~\cite{DBLP:journals/jmlr/BousquetE02,DBLP:conf/colt/BousquetKZ20} introduced detailedly in Section~\ref{sec: Preliminaries} has various extensions. Prominent work~\cite{DBLP:conf/icml/HardtRS16} focuses on the uniform stability of SGD and derive generalization bounds for it.~\cite{DBLP:conf/uai/ZhangZBP0022} improves the results in~\cite{DBLP:conf/icml/HardtRS16} and obtains tight guarantees for the stability of SGD, which is used in Theorem~\ref{thm: GAN generalization bound}.

Establishing stability bounds under non-i.i.d. settings has also received a surge of interest in recent years. A major line models the dependencies by mixing models~\cite{rosenblatt1956central,volkonskii1959some} and derives stability bounds with mixing coefficients~\cite{DBLP:conf/nips/MohriR07,DBLP:journals/jmlr/MohriR10,DBLP:journals/ijon/HeZC16}. However, it is usually difficult to estimate the mixing coefficients quantitatively. To avoid this problem, another line qualitatively models the dependencies by graphs. Recently,~\cite{DBLP:conf/nips/ZhangL0W19} derive a general stability bound for dependent settings characterized by forest complexity of the dependency graph. However, it is hard to use these techniques to derive a better bound than Theorem~\ref{thm: main generalization bound} for GDA, which is discussed detailedly in Appendix~\ref{sec: Discussion for existing non-i.i.d. stability bounds}.

\textbf{Convergence of deep generative models.} In addition to the bound for $d_{\mathrm{TV}} \left(\sD, \DG \right)$ with respect to GANs~\cite{DBLP:journals/jmlr/Liang21} we used in Theorem~\ref{thm: GAN generalization bound}, there are attempts to derive such a bound for diffusion models~\cite{DBLP:journals/corr/abs-2209-11215,DBLP:conf/alt/Lee0T23,DBLP:journals/corr/abs-2208-14699,DBLP:journals/corr/abs-2208-05314}. Informally, they mainly assume that estimation error of score function is bounded, then with an appropriate choice of step size and iteration number, diffusion models output a distribution which is close to the true distribution. However, it is still unclear how to derive learning guarantees with respect to the train set size $m_S$ directly. Once such learning guarantees are established, we can directly analyze the effect of GDA with diffusion models by Theorem~\ref{thm: main generalization bound}.

\section{Conclusion}
\label{sec: conclusion}
In this paper, 
we attempt to understand modern GDA techniques. To realize this goal, we first establish a general algorithmic stability bound in this non-i.i.d. setting. It suggests that GDA enjoys a faster learning rate when the divergence term $d_{\mathrm{TV}} \left(\sD, \DG \right) = o(\max\left( \log(m)\beta_m, 1 / \sqrt{m})\right)$. Second, We specify the learning guarantee to the bGMM and GANs settings. Theoretical results show that, in both cases, though GDA can not enjoy a faster learning rate, it is effective when terrible overfitting happens, which suggests its promise in learning with limited data. Finally, experimental results support our theoretical conclusions and further show the promise of diffusion models in GDA.

\textbf{Broader impacts and limitations.} This is mainly theoretical work to help people understand GDA, and we do not see a direct negative social impact of our theory. One limitation is that results do not enjoy tightness guarantees. The derivation of lower bounds can be left to future work.

\bibliographystyle{unsrt}  
\bibliography{ref}

\begin{thebibliography}{10}

\bibitem{DBLP:journals/corr/vae}
Diederik~P. Kingma and Max Welling.
\newblock Auto-encoding variational bayes.
\newblock In {\em {ICLR}}, 2014.

\bibitem{goodfellow2020generative}
Ian Goodfellow, Jean Pouget-Abadie, Mehdi Mirza, Bing Xu, David Warde-Farley,
  Sherjil Ozair, Aaron Courville, and Yoshua Bengio.
\newblock Generative adversarial networks.
\newblock {\em Communications of the ACM}, 63(11):139--144, 2020.

\bibitem{DBLP:conf/nips/HoJA20}
Jonathan Ho, Ajay Jain, and Pieter Abbeel.
\newblock Denoising diffusion probabilistic models.
\newblock In {\em {NeurIPS}}, 2020.

\bibitem{DBLP:conf/iclr/0011SKKEP21}
Yang Song, Jascha Sohl{-}Dickstein, Diederik~P. Kingma, Abhishek Kumar, Stefano
  Ermon, and Ben Poole.
\newblock Score-based generative modeling through stochastic differential
  equations.
\newblock In {\em {ICLR}}, 2021.

\bibitem{ramesh2021zero}
Aditya Ramesh, Mikhail Pavlov, Gabriel Goh, Scott Gray, Chelsea Voss, Alec
  Radford, Mark Chen, and Ilya Sutskever.
\newblock Zero-shot text-to-image generation.
\newblock In {\em {ICML}}, pages 8821--8831, 2021.

\bibitem{rombach2022high}
Robin Rombach, Andreas Blattmann, Dominik Lorenz, Patrick Esser, and Bj{\"o}rn
  Ommer.
\newblock High-resolution image synthesis with latent diffusion models.
\newblock In {\em {CVPR}}, pages 10684--10695, 2022.

\bibitem{brown2020language}
Tom Brown, Benjamin Mann, Nick Ryder, Melanie Subbiah, Jared~D Kaplan, Prafulla
  Dhariwal, Arvind Neelakantan, Pranav Shyam, Girish Sastry, Amanda Askell,
  et~al.
\newblock Language models are few-shot learners.
\newblock {\em NeurIPS}, 33:1877--1901, 2020.

\bibitem{raffel2020exploring}
Colin Raffel, Noam Shazeer, Adam Roberts, Katherine Lee, Sharan Narang, Michael
  Matena, Yanqi Zhou, Wei Li, and Peter~J Liu.
\newblock Exploring the limits of transfer learning with a unified text-to-text
  transformer.
\newblock {\em Journal of Machine Learning Research}, 21(1):5485--5551, 2020.

\bibitem{leiter2023chatgpt}
Christoph Leiter, Ran Zhang, Yanran Chen, Jonas Belouadi, Daniil Larionov,
  Vivian Fresen, and Steffen Eger.
\newblock Chatgpt: {A} meta-analysis after 2.5 months.
\newblock {\em CoRR}, abs/2302.13795, 2023.

\bibitem{wang2022image}
Wenhui Wang, Hangbo Bao, Li~Dong, Johan Bjorck, Zhiliang Peng, Qiang Liu, Kriti
  Aggarwal, Owais~Khan Mohammed, Saksham Singhal, Subhojit Som, and Furu Wei.
\newblock Image as a foreign language: Beit pretraining for all vision and
  vision-language tasks.
\newblock {\em CoRR}, abs/2208.10442, 2022.

\bibitem{bao2023one}
Fan Bao, Shen Nie, Kaiwen Xue, Chongxuan Li, Shi Pu, Yaole Wang, Gang Yue, Yue
  Cao, Hang Su, and Jun Zhu.
\newblock One transformer fits all distributions in multi-modal diffusion at
  scale.
\newblock {\em CoRR}, abs/2303.06555, 2023.

\bibitem{openai2023gpt}
OpenAI.
\newblock {GPT-4} technical report.
\newblock {\em CoRR}, abs/2303.08774, 2023.

\bibitem{DBLP:journals/corr/abs-2304-08466}
Shekoofeh Azizi, Simon Kornblith, Chitwan Saharia, Mohammad Norouzi, and
  David~J. Fleet.
\newblock Synthetic data from diffusion models improves imagenet
  classification.
\newblock {\em CoRR}, abs/2304.08466, 2023.

\bibitem{DBLP:conf/icassp/BesnierJBCP20}
Victor Besnier, Himalaya Jain, Andrei Bursuc, Matthieu Cord, and Patrick
  P{\'{e}}rez.
\newblock This dataset does not exist: Training models from generated images.
\newblock In {\em {ICASSP}}, pages 1--5, 2020.

\bibitem{DBLP:conf/nips/KingmaMRW14}
Diederik~P. Kingma, Shakir Mohamed, Danilo~Jimenez Rezende, and Max Welling.
\newblock Semi-supervised learning with deep generative models.
\newblock In {\em {NIPS}}, pages 3581--3589, 2014.

\bibitem{DBLP:conf/nips/LiXZZ17}
Chongxuan Li, Taufik Xu, Jun Zhu, and Bo~Zhang.
\newblock Triple generative adversarial nets.
\newblock In {\em {NIPS}}, pages 4088--4098, 2017.

\bibitem{DBLP:journals/corr/abs-2302-10586}
Zebin You, Yong Zhong, Fan Bao, Jiacheng Sun, Chongxuan Li, and Jun Zhu.
\newblock Diffusion models and semi-supervised learners benefit mutually with
  few labels.
\newblock {\em CoRR}, abs/2302.10586, 2023.

\bibitem{DBLP:journals/corr/abs-2302-07944}
Brandon Trabucco, Kyle Doherty, Max Gurinas, and Ruslan Salakhutdinov.
\newblock Effective data augmentation with diffusion models.
\newblock {\em CoRR}, abs/2302.07944, 2023.

\bibitem{DBLP:journals/corr/abs-2210-07574}
Ruifei He, Shuyang Sun, Xin Yu, Chuhui Xue, Wenqing Zhang, Philip H.~S. Torr,
  Song Bai, and Xiaojuan Qi.
\newblock Is synthetic data from generative models ready for image recognition?
\newblock {\em CoRR}, abs/2210.07574, 2022.

\bibitem{DBLP:journals/corr/abs-2103-01946}
Sylvestre{-}Alvise Rebuffi, Sven Gowal, Dan~A. Calian, Florian Stimberg, Olivia
  Wiles, and Timothy~A. Mann.
\newblock Fixing data augmentation to improve adversarial robustness.
\newblock {\em CoRR}, abs/2103.01946, 2021.

\bibitem{DBLP:journals/corr/abs-2302-04638}
Zekai Wang, Tianyu Pang, Chao Du, Min Lin, Weiwei Liu, and Shuicheng Yan.
\newblock Better diffusion models further improve adversarial training.
\newblock {\em CoRR}, abs/2302.04638, 2023.

\bibitem{DBLP:journals/jmlr/BousquetE02}
Olivier Bousquet and Andr{\'{e}} Elisseeff.
\newblock Stability and generalization.
\newblock {\em Journal of Machine Learning Research}, 2:499--526, 2002.

\bibitem{DBLP:conf/colt/BousquetKZ20}
Olivier Bousquet, Yegor Klochkov, and Nikita Zhivotovskiy.
\newblock Sharper bounds for uniformly stable algorithms.
\newblock In {\em {COLT}}, volume 125, pages 610--626, 2020.

\bibitem{DBLP:conf/nips/MohriR07}
Mehryar Mohri and Afshin Rostamizadeh.
\newblock Stability bounds for non-i.i.d. processes.
\newblock In John~C. Platt, Daphne Koller, Yoram Singer, and Sam~T. Roweis,
  editors, {\em {NIPS}}, pages 1025--1032, 2007.

\bibitem{DBLP:journals/jmlr/MohriR10}
Mehryar Mohri and Afshin Rostamizadeh.
\newblock Stability bounds for stationary phi-mixing and beta-mixing processes.
\newblock {\em Journal of Machine Learning Research}, 11:789--814, 2010.

\bibitem{DBLP:conf/nips/ZhangL0W19}
Rui~Ray Zhang, Xingwu Liu, Yuyi Wang, and Liwei Wang.
\newblock Mcdiarmid-type inequalities for graph-dependent variables and
  stability bounds.
\newblock In {\em {NeurIPS}}, pages 10889--10899, 2019.

\bibitem{DBLP:journals/corr/EDM}
Tero Karras, Miika Aittala, Timo Aila, and Samuli Laine.
\newblock Elucidating the design space of diffusion-based generative models.
\newblock {\em CoRR}, abs/2206.00364, 2022.

\bibitem{DBLP:journals/jmlr/Shalev-ShwartzSSS10}
Shai Shalev{-}Shwartz, Ohad Shamir, Nathan Srebro, and Karthik Sridharan.
\newblock Learnability, stability and uniform convergence.
\newblock {\em Journal of Machine Learning Research}, 11:2635--2670, 2010.

\bibitem{DBLP:conf/icml/KuzborskijL18}
Ilja Kuzborskij and Christoph~H. Lampert.
\newblock Data-dependent stability of stochastic gradient descent.
\newblock In {\em {ICML}}, volume~80, pages 2820--2829, 2018.

\bibitem{DBLP:conf/icml/LiuLNT17}
Tongliang Liu, G{\'{a}}bor Lugosi, Gergely Neu, and Dacheng Tao.
\newblock Algorithmic stability and hypothesis complexity.
\newblock In {\em {ICML}}, volume~70, pages 2159--2167, 2017.

\bibitem{DBLP:conf/icml/HardtRS16}
Moritz Hardt, Ben Recht, and Yoram Singer.
\newblock Train faster, generalize better: Stability of stochastic gradient
  descent.
\newblock In {\em {ICML}}, volume~48, pages 1225--1234, 2016.

\bibitem{DBLP:conf/uai/ZhangZBP0022}
Yikai Zhang, Wenjia Zhang, Sammy Bald, Vamsi Pingali, Chao Chen, and Mayank
  Goswami.
\newblock Stability of {SGD:} tightness analysis and improved bounds.
\newblock In {\em {UAI}}, volume 180, pages 2364--2373, 2022.

\bibitem{DBLP:conf/nips/XingSC21}
Yue Xing, Qifan Song, and Guang Cheng.
\newblock On the algorithmic stability of adversarial training.
\newblock In {\em {NeurIPS}}, pages 26523--26535, 2021.

\bibitem{DBLP:conf/nips/FeldmanV18}
Vitaly Feldman and Jan Vondr{\'{a}}k.
\newblock Generalization bounds for uniformly stable algorithms.
\newblock In {\em {NeurIPS}}, pages 9770--9780, 2018.

\bibitem{DBLP:conf/colt/FeldmanV19}
Vitaly Feldman and Jan Vondr{\'{a}}k.
\newblock High probability generalization bounds for uniformly stable
  algorithms with nearly optimal rate.
\newblock In {\em {COLT}}, volume~99, pages 1270--1279, 2019.

\bibitem{boucheron2013concentration}
St{\'e}phane Boucheron, G{\'a}bor Lugosi, and Pascal Massart.
\newblock {\em Concentration inequalities: A nonasymptotic theory of
  independence}.
\newblock Oxford university press, 2013.

\bibitem{DBLP:conf/icml/FarniaO21}
Farzan Farnia and Asuman~E. Ozdaglar.
\newblock Train simultaneously, generalize better: Stability of gradient-based
  minimax learners.
\newblock In {\em {ICML}}, volume 139, pages 3174--3185, 2021.

\bibitem{DBLP:journals/tit/CastelliC96}
Vittorio Castelli and Thomas~M. Cover.
\newblock The relative value of labeled and unlabeled samples in pattern
  recognition with an unknown mixing parameter.
\newblock {\em {IEEE} Trans. Inf. Theory}, 42(6):2102--2117, 1996.

\bibitem{DBLP:journals/neco/AkahoK00}
Shotaro Akaho and Hilbert~J. Kappen.
\newblock Nonmonotonic generalization bias of gaussian mixture models.
\newblock {\em Neural Comput.}, 12(6):1411--1427, 2000.

\bibitem{wang2022binary}
Ke~Wang and Christos Thrampoulidis.
\newblock Binary classification of gaussian mixtures: Abundance of support
  vectors, benign overfitting, and regularization.
\newblock {\em SIAM Journal on Mathematics of Data Science}, 4(1):260--284,
  2022.

\bibitem{he2022information}
Haiyun He, Hanshu Yan, and Vincent~YF Tan.
\newblock Information-theoretic characterization of the generalization error
  for iterative semi-supervised learning.
\newblock {\em Journal of Machine Learning Research}, 23:1--52, 2022.

\bibitem{DBLP:conf/nips/SchmidtSTTM18}
Ludwig Schmidt, Shibani Santurkar, Dimitris Tsipras, Kunal Talwar, and
  Aleksander Madry.
\newblock Adversarially robust generalization requires more data.
\newblock In {\em {NeurIPS}}, pages 5019--5031, 2018.

\bibitem{DBLP:conf/nips/AlayracUHFSK19}
Jean{-}Baptiste Alayrac, Jonathan Uesato, Po{-}Sen Huang, Alhussein Fawzi,
  Robert Stanforth, and Pushmeet Kohli.
\newblock Are labels required for improving adversarial robustness?
\newblock In {\em {NeurIPS}}, pages 12192--12202, 2019.

\bibitem{DBLP:journals/corr/abs-2212-00362}
Fan Bao, Chongxuan Li, Jiacheng Sun, and Jun Zhu.
\newblock Why are conditional generative models better than unconditional ones?
\newblock {\em CoRR}, abs/2212.00362, 2022.

\bibitem{DBLP:conf/nips/NgJ01}
Andrew~Y. Ng and Michael~I. Jordan.
\newblock On discriminative vs. generative classifiers: {A} comparison of
  logistic regression and naive bayes.
\newblock In {\em {NIPS}}, pages 841--848, 2001.

\bibitem{DBLP:journals/jmlr/Liang21}
Tengyuan Liang.
\newblock How well generative adversarial networks learn distributions.
\newblock {\em Journal of Machine Learning Research}, 22:228:1--228:41, 2021.

\bibitem{DBLP:journals/corr/mingzewang}
Mingze Wang and Chao Ma.
\newblock Generalization error bounds for deep neural networks trained by
  {SGD}.
\newblock {\em CoRR}, abs/2206.03299, 2022.

\bibitem{DBLP:conf/nips/BartlettFT17}
Peter~L. Bartlett, Dylan~J. Foster, and Matus Telgarsky.
\newblock Spectrally-normalized margin bounds for neural networks.
\newblock In {\em {NIPS}}, pages 6240--6249, 2017.

\bibitem{cifar}
Alex Krizhevsky, Geoffrey Hinton, et~al.
\newblock Learning multiple layers of features from tiny images.
\newblock Technical report, Canadian Institute for Advanced Research, Toronto,
  ON, Canada, 2009.

\bibitem{DBLP:journals/ml/Ben-DavidBCKPV10}
Shai Ben{-}David, John Blitzer, Koby Crammer, Alex Kulesza, Fernando Pereira,
  and Jennifer~Wortman Vaughan.
\newblock A theory of learning from different domains.
\newblock {\em Machine Learning}, 79(1-2):151--175, 2010.

\bibitem{DBLP:conf/cvpr/HeZRS16}
Kaiming He, Xiangyu Zhang, Shaoqing Ren, and Jian Sun.
\newblock Deep residual learning for image recognition.
\newblock In {\em {CVPR}}, pages 770--778, 2016.

\bibitem{DBLP:journals/corr/RadfordMC15}
Alec Radford, Luke Metz, and Soumith Chintala.
\newblock Unsupervised representation learning with deep convolutional
  generative adversarial networks.
\newblock In {\em {ICLR}}, 2016.

\bibitem{DBLP:conf/nips/KarrasAHLLA20}
Tero Karras, Miika Aittala, Janne Hellsten, Samuli Laine, Jaakko Lehtinen, and
  Timo Aila.
\newblock Training generative adversarial networks with limited data.
\newblock In {\em { NeurIPS }}, 2020.

\bibitem{DBLP:journals/jbd/ShortenK19}
Connor Shorten and Taghi~M. Khoshgoftaar.
\newblock A survey on image data augmentation for deep learning.
\newblock {\em Journal of Big Data}, 6:60, 2019.

\bibitem{DBLP:journals/jbd/ShortenKF21a}
Connor Shorten, Taghi~M. Khoshgoftaar, and Borko Furht.
\newblock Text data augmentation for deep learning.
\newblock {\em Journal of Big Data}, 8(1):101, 2021.

\bibitem{jurio2010comparison}
Aranzazu Jurio, Miguel Pagola, Mikel Galar, Carlos Lopez-Molina, and Daniel
  Paternain.
\newblock A comparison study of different color spaces in clustering based
  image segmentation.
\newblock In {\em Information Processing and Management of Uncertainty in
  Knowledge-Based Systems.}, pages 532--541. Springer, 2010.

\bibitem{kang2017patchshuffle}
Guoliang Kang, Xuanyi Dong, Liang Zheng, and Yi~Yang.
\newblock Patchshuffle regularization.
\newblock {\em CoRR}, abs/1707.07103, 2017.

\bibitem{inoue2018data}
Hiroshi Inoue.
\newblock Data augmentation by pairing samples for images classification.
\newblock {\em CoRR}, abs/1801.02929, 2018.

\bibitem{zhong2020random}
Zhun Zhong, Liang Zheng, Guoliang Kang, Shaozi Li, and Yi~Yang.
\newblock Random erasing data augmentation.
\newblock In {\em {AAAI}}, volume~34, pages 13001--13008, 2020.

\bibitem{devries2017dataset}
Terrance DeVries and Graham~W. Taylor.
\newblock Dataset augmentation in feature space.
\newblock In {\em {ICLR} Workshop Track Proceedings}, 2017.

\bibitem{DBLP:conf/icml/DaoGRSSR19}
Tri Dao, Albert Gu, Alexander Ratner, Virginia Smith, Chris~De Sa, and
  Christopher R{\'{e}}.
\newblock A kernel theory of modern data augmentation.
\newblock In {\em {ICML}}, volume~97 of {\em Proceedings of Machine Learning
  Research}, pages 1528--1537, 2019.

\bibitem{DBLP:conf/icml/WuZVR20}
Sen Wu, Hongyang~R. Zhang, Gregory Valiant, and Christopher R{\'{e}}.
\newblock On the generalization effects of linear transformations in data
  augmentation.
\newblock In {\em {ICML}}, volume 119, pages 10410--10420, 2020.

\bibitem{DBLP:conf/nips/HaninS21}
Boris Hanin and Yi~Sun.
\newblock How data augmentation affects optimization for linear regression.
\newblock In {\em {NeurIPS}}, pages 8095--8105, 2021.

\bibitem{DBLP:conf/icml/ShenBG22}
Ruoqi Shen, S{\'{e}}bastien Bubeck, and Suriya Gunasekar.
\newblock Data augmentation as feature manipulation.
\newblock In {\em {ICML}}, volume 162, pages 19773--19808, 2022.

\bibitem{DBLP:journals/corr/abs-2303-08433}
Difan Zou, Yuan Cao, Yuanzhi Li, and Quanquan Gu.
\newblock The benefits of mixup for feature learning.
\newblock {\em CoRR}, abs/2303.08433, 2023.

\bibitem{imagenet}
Jia Deng, Wei Dong, Richard Socher, Li{-}Jia Li, Kai Li, and Li~Fei{-}Fei.
\newblock Imagenet: {A} large-scale hierarchical image database.
\newblock In {\em {CVPR}}, pages 248--255, 2009.

\bibitem{rosenblatt1956central}
Murray Rosenblatt.
\newblock A central limit theorem and a strong mixing condition.
\newblock {\em Proceedings of the national Academy of Sciences}, 42(1):43--47,
  1956.

\bibitem{volkonskii1959some}
VA~Volkonskii and Yu~A Rozanov.
\newblock Some limit theorems for random functions. i.
\newblock {\em Theory of Probability \& Its Applications}, 4(2):178--197, 1959.

\bibitem{DBLP:journals/ijon/HeZC16}
Fangchao He, Ling Zuo, and Hong Chen.
\newblock Stability analysis for ranking with stationary
  \emph{{\(\varphi\)}}-mixing samples.
\newblock {\em Neurocomputing}, 171:1556--1562, 2016.

\bibitem{DBLP:journals/corr/abs-2209-11215}
Sitan Chen, Sinho Chewi, Jerry Li, Yuanzhi Li, Adil Salim, and Anru~R. Zhang.
\newblock Sampling is as easy as learning the score: theory for diffusion
  models with minimal data assumptions.
\newblock {\em CoRR}, abs/2209.11215, 2022.

\bibitem{DBLP:conf/alt/Lee0T23}
Holden Lee, Jianfeng Lu, and Yixin Tan.
\newblock Convergence of score-based generative modeling for general data
  distributions.
\newblock In {\em {ALT}}, volume 201, pages 946--985, 2023.

\bibitem{DBLP:journals/corr/abs-2208-14699}
Xingchao Liu, Lemeng Wu, Mao Ye, and Qiang Liu.
\newblock Let us build bridges: Understanding and extending diffusion
  generative models.
\newblock {\em CoRR}, abs/2208.14699, 2022.

\bibitem{DBLP:journals/corr/abs-2208-05314}
Valentin~De Bortoli.
\newblock Convergence of denoising diffusion models under the manifold
  hypothesis.
\newblock {\em CoRR}, abs/2208.05314, 2022.

\bibitem{DBLP:journals/corr/XuWCL15}
Bing Xu, Naiyan Wang, Tianqi Chen, and Mu~Li.
\newblock Empirical evaluation of rectified activations in convolutional
  network.
\newblock {\em CoRR}, abs/1505.00853, 2015.

\bibitem{wainwright2019high}
Martin~J Wainwright.
\newblock {\em High-dimensional statistics: A non-asymptotic viewpoint},
  volume~48.
\newblock Cambridge university press, 2019.

\bibitem{DBLP:journals/tit/SasonV16}
Igal Sason and Sergio Verd{\'{u}}.
\newblock f-divergence inequalities.
\newblock {\em {IEEE} Trans. Inf. Theory}, 62(11):5973--6006, 2016.

\bibitem{DBLP:journals/corr/chenyu}
Chenyu Zheng, Guoqiang Wu, Fan Bao, Yue Cao, Chongxuan Li, and Jun Zhu.
\newblock Revisiting discriminative vs. generative classifiers: Theory and
  implications.
\newblock {\em CoRR}, abs/2302.02334, 2023.

\bibitem{pytorch}
Adam Paszke, Sam Gross, Francisco Massa, Adam Lerer, James Bradbury, Gregory
  Chanan, Trevor Killeen, Zeming Lin, Natalia Gimelshein, Luca Antiga, Alban
  Desmaison, Andreas K{\"{o}}pf, Edward~Z. Yang, Zachary DeVito, Martin Raison,
  Alykhan Tejani, Sasank Chilamkurthy, Benoit Steiner, Lu~Fang, Junjie Bai, and
  Soumith Chintala.
\newblock Pytorch: An imperative style, high-performance deep learning library.
\newblock In {\em NeurIPS}, pages 8024--8035, 2019.

\end{thebibliography}

\newpage


\begin{appendices}

\renewcommand{\contentsname}{Contents of Appendix}
\tableofcontents

\addtocontents{toc}{\protect\setcounter{tocdepth}{3}} 

\newpage

\section{Architectures of deep neural networks in Section~\ref{sec: Learning setup gan}}
\label{1}

\subsection{Architecture of deep neural classifier in Section~\ref{sec: Implications on deep generative models}}
\label{sec: deep classifier arch}

We consider a general class of neural networks as what is introduced in~\cite{DBLP:journals/corr/mingzewang}, which includes widely used MLPs and CNNs. We define a deep neural network with $L_C$ convolutional layers followed by $L - L_C - 1$ fully-connected layers as follows:
$$
\begin{aligned}
f(\mathbf{x} ; \bw) & =\sum_{k=1}^m a_k z_{(L-1), k} \\
\mathbf{z}_{l} & =\sigma\left(\mathbf{A}_l^{\top} \mathbf{z}_{(l-1)}\right), l \in[L-1]-\left[L_C\right], \\
\mathbf{z}_{l} & =\operatorname{pool}\left(\mathbf{y}_{l}\right), l \in\left[L_C\right], \\
\mathbf{y}_l & =\sigma\left(\mathbf{w}_l * \mathbf{z}_{(l-1)}\right), l \in\left[L_C\right], \\
\mathbf{z}_0 & =\bx
\end{aligned}
$$
where $m$ is the demension of $\bz_{(L-1)}$, $\sigma(z)$ is the ReLU function $\max \{z, 0\}$, * is the convolutional operation, and $\operatorname{pool}(\cdot)$ is the average pooling operation.  When $L_C=0$, this is an MLP. For output layer $l=L$, let $\bw_L:=\left(a_1, \cdots, a_m\right)^{\top}$. For fully-connected layer $l \in[L-1]-\left[L_C\right]$, we let $\bw_l:=\operatorname{vector}\left(\mathbf{A}_{l}\right)$. For convolution layer $l \in\left[L_C\right]$, we consider the structure Convolution $\rightarrow \operatorname{ReLU} \rightarrow$ Pooling, and denotes the weights as $\bw_l$.

\subsection{Architecture of GAN in Section~\ref{sec: Implications on deep generative models}}
\label{sec: GAN arch}
\textbf{The abstract form of GAN.} The architecture of GAN in Theorem~\ref{thm: GAN generalization bound} is consistent with that in Theorem 19,~\cite{DBLP:journals/jmlr/Liang21}. We denote by $\mathcal{F}=\left\{f_{\boldsymbol{\omega}}(\bx): \mathbb{R}^d \rightarrow \mathbb{R}\right\}$ the discriminator function space. Besides, we let $\mathcal{G}=\left\{g_{\boldsymbol{\theta}}(\bz): \mathbb{R}^d \rightarrow \mathbb{R}^d\right\}$ be the generator function space. The generator receives $\bz \sim \mathrm{unif}[0,1]^d$ as the random input. In reality, we estimate the parameters of GAN as
$$
\widehat{\boldsymbol{\theta}}_{m, n} \in \underset{\boldsymbol{\theta}: g_{\boldsymbol{\theta}} \in \mathcal{G}}{\arg \min } \max _{\boldsymbol{\omega}: f_{\boldsymbol{\omega}} \in \mathcal{F}}\left\{\widehat{\mathbb{E}}_n f_{\boldsymbol{\omega}}\left(g_{\boldsymbol{\theta}}(Z)\right)-\widehat{\mathbb{E}}_m f_{\boldsymbol{\omega}}(X)\right\},
$$
where $n$ and $m$ denote the number of simulated and target distribution samples, respectively. We just let $m = n$ in this paper.

\textbf{The architecture of the generator network.} The generator $g_{\boldsymbol{\theta}}$ is parametrized by a MLP:

$$
\begin{aligned}
\bh_0 & = \bz, \\
\bh_l & =\sigma_a\left(\bW_l \bh_{l-1}+\mathbf{b}_l\right), 0<l<L \\
\bx & =\bW_L \bh_{L-1}+ \mathbf{b}_L,
\end{aligned}
$$
where $h_l$ denotes the hidden units in the $l$-th layer, and $\bx$ is the final output of the MLP. The activation is leaky ReLU~\cite{DBLP:journals/corr/XuWCL15}.
$$
\sigma_a(t)=\max \{t, a t\}, \text { for some fixed } 0<a \leq 1
$$
The space for the generator weights is denoted by
$$
\Theta(d, L):=\left\{\boldsymbol{\theta}=\left(\bW_l \in \mathbb{R}^{d \times d}, \mathbf{b}_l \in \mathbb{R}^d, 1 \leq l \leq L\right) \mid \operatorname{rank}\left(\bW_l\right)=d, \forall 1 \leq l \leq L\right\} .
$$
Note that the $\bW_l$ is required to be full rank so that the generator transformation $g_\theta$ is invertible. The generator has the capacity to express complex distributions

\textbf{The architecture of the discriminator network.} We consider a discriminator network which includes feed-forward neural networks $f_{\boldsymbol{\omega}}$ that satisfies

$$
\begin{aligned}
\bh_1 & =\sigma_{1/a}\left(\bV_1 \bx+ \bc_1\right) \\
& \cdots \\
\bh_{L-1} & =\sigma_{1/a}\left(\bV_{L-1} \bh_{L-2}+ \bc_{L-1}\right) \\
q_{\boldsymbol{\omega}}(\bx) & :=\sum_{j=1}^{L-1} \sum_{i=1}^d \log (1 / a) 1_{h_{ji} \leq 0}+ c_L .
\end{aligned}
$$
The parameter space is defined as
$$
\Omega(d, L):=\left\{ \boldsymbol{\omega} = \left(\bV_l \in \mathbb{R}^{d \times d}, \bc_l \in \mathbb{R}^d, c_L \in \mathbb{R}, 1 \leq l \leq L-1\right) \mid \operatorname{rank}\left(\bV_l\right)=d, \forall 1 \leq l \leq L-1\right\}.
$$

Finally, the discriminator parameterized by $\boldsymbol{\omega} =\left(\boldsymbol{\omega}_1, \boldsymbol{\omega}_2\right)$, where $\boldsymbol{\omega}_1, \boldsymbol{\omega}_2 \in \Omega(d, L)$, is defined as
$$
f_{\boldsymbol{\omega}}(\bx)=q_{\boldsymbol{\omega}_1}(\bx)-q_{\boldsymbol{\omega}_2}(\bx) .
$$

\section{Proofs}

\subsection{Proof of Theorem~\ref{thm: main generalization bound}}
\label{proof: thm: main generalization bound}

\begin{proof}
We first list some moment inequalities which are important to this proof.

\begin{lemma}[Lemma 1,~\cite{DBLP:conf/colt/BousquetKZ20}]
\label{lemma: moment inequality}
	
If \( \| Y \|_{p} \leq \sqrt{p} a + pb \) for any \( p \geq 1\), then for any \( \delta \in (0, 1)\), with probability at least $1 - \delta$,
\[
|Y| \le e\left(a \sqrt{\log \left(\frac{e}{\delta}\right) } + b \log\left( \frac{e}{\delta}\right)\right) .
\]
\end{lemma}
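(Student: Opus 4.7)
The plan is the standard Markov-to-tail conversion, optimized over the moment index $p$. By Markov's inequality applied to $|Y|^p$, for any $p \geq 1$ and any $t > 0$,
\[
\P(|Y| \geq t) \;\leq\; \frac{\E|Y|^p}{t^p} \;=\; \frac{\|Y\|_p^p}{t^p} \;\leq\; \frac{(\sqrt{p}\,a + p b)^p}{t^p},
\]
where the final step invokes the hypothesis $\|Y\|_p \leq \sqrt{p}\,a + pb$. Setting the right-hand side equal to $\delta$ and solving for $t$ yields the admissible tail level $t = (\sqrt{p}\,a + p b)\,\delta^{-1/p}$; hence for every $p \geq 1$, with probability at least $1 - \delta$, $|Y| \leq (\sqrt{p}\,a + pb)\,\delta^{-1/p}$.

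Next I would optimize this family of bounds by selecting $p = \log(e/\delta)$. Since $\delta \in (0,1)$ one has $p = 1 + \log(1/\delta) \geq 1$, so the choice is admissible. The decisive identity is
\[
\delta^{-1/p} \;=\; e^{-(\log \delta)/p} \;=\; e^{(p-1)/p} \;=\; e \cdot e^{-1/p} \;\leq\; e,
\]
which is exactly what produces the constant $e$ in the conclusion. Substituting back,
\[
|Y| \;\leq\; e\bigl(\sqrt{p}\,a + p b\bigr) \;=\; e\Bigl(a\sqrt{\log(e/\delta)} + b\log(e/\delta)\Bigr),
\]
matching the claimed inequality.

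There is essentially no obstacle: the argument is a two-line Markov estimate followed by the correct tuning of $p$, and uses no independence or concentration structure of $Y$ beyond the stated uniform moment growth. The only subtlety worth flagging is the particular choice $p = \log(e/\delta)$ rather than the naive $p = \log(1/\delta)$; the latter can be smaller than $1$ when $\delta$ is close to one and is thus outside the range on which the hypothesis is assumed, and it fails to produce the clean constant $e$ and the logarithmic argument $e/\delta$ that the target inequality exhibits.
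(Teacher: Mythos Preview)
Your argument is correct and is exactly the standard proof of this moment-to-tail conversion: Markov on $|Y|^p$, then the tuning $p=\log(e/\delta)$ to obtain $\delta^{-1/p}\le e$ and thus the constant $e$ and the argument $e/\delta$. The paper itself does not prove this lemma; it merely quotes it from~\cite{DBLP:conf/colt/BousquetKZ20} as a known tool, and the proof there is precisely the one you wrote.
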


\begin{lemma}[Lemma 2,~\cite{DBLP:conf/colt/BousquetKZ20}]
\label{lemma: boundeddiff}
Consider a function $f$ of independent random variables $X_1, \dots, X_n$ where \(X_i \in \mathcal{X}\). Suppose that for any \( i = 1, \dots, n \) and any \( x_1, \dots, x_n, x_i' \in \mathcal{X} \) it holds that
\begin{equation}
\label{boundedcond}
|f(x_1, \dots, x_n) - f(x_1, \dots, x_{i-1}, x_i', x_{i + 1}, \dots, x_n)| \leq \beta .
\end{equation}
Then, we have for any $p \ge 2$,
\[
\|f(X_1, \ldots, X_n) - \E f(X_1, \ldots, X_n)\|_{p} \le 2\sqrt{np}\beta \, .
\]
\end{lemma}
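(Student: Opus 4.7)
The plan is to treat $f(X_1,\ldots,X_n) - \E f$ as the terminal value of a bounded-difference Doob martingale, derive a sub-Gaussian tail via Azuma--Hoeffding, and then convert that tail estimate into an $L_p$ moment bound by direct integration. This is the standard McDiarmid pipeline, but stopped at the level of moments rather than converted all the way to a probability statement.

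Concretely, I would set $\mathcal{F}_i := \sigma(X_1,\ldots,X_i)$ with $\mathcal{F}_0$ trivial and introduce the Doob differences $D_i := \E[f\mid \mathcal{F}_i] - \E[f\mid \mathcal{F}_{i-1}]$, so that $f - \E f = \sum_{i=1}^n D_i$ telescopes into a martingale difference sum. To bound each $D_i$ in sup-norm, write
\[
g_i(x) \;:=\; \E\!\bigl[f(X_1,\ldots,X_{i-1},x,X_{i+1},\ldots,X_n) \mid \mathcal{F}_{i-1}\bigr],
\]
so that, with $X_i'$ an independent copy of $X_i$, $D_i = g_i(X_i) - \E_{X_i'}[g_i(X_i')]$. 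The hypothesis (\ref{boundedcond}), applied pointwise inside the conditional expectation over the future variables $(X_{i+1},\ldots,X_n)$, yields $|g_i(x) - g_i(x')|\le \beta$ for all $x,x'$, hence $|D_i|\le \beta$ almost surely. Azuma--Hoeffding applied to $\sum_i D_i$ then gives the sub-Gaussian tail $\P(|f - \E f|\ge t)\le 2\exp\!\bigl(-t^2/(2n\beta^2)\bigr)$. Finally, the layer-cake identity
\[
\E|f - \E f|^p \;=\; p\int_0^\infty t^{p-1}\,\P(|f - \E f|\ge t)\,dt
\]
combined with the substitution $u = t^2/(2n\beta^2)$ collapses the right-hand side into a Gamma integral, yielding
\[
\E|f - \E f|^p \;\le\; p\,(2n\beta^2)^{p/2}\,\Gamma(p/2).
\]

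The only real work is constant-tracking in the last step. Taking $p$-th roots, I would use the elementary bounds $\Gamma(p/2)\le (p/2)^{p/2}$ (valid for $p\ge 2$, with equality at $p=2$) and $p^{1/p}\le e^{1/e}$ to get $(p\,\Gamma(p/2))^{1/p}\le e^{1/e}\sqrt{p/2}$; multiplying by the prefactor $\sqrt{2n\beta^2}$ gives $\|f - \E f\|_p \le e^{1/e}\sqrt{np}\,\beta$, and since $e^{1/e}<2$ the advertised bound $2\sqrt{np}\,\beta$ follows with room to spare. This Stirling bookkeeping is the main (and only) obstacle, and it is routine. An alternative route would be to invoke the Efron--Stein moment inequality of Boucheron--Bousquet--Lugosi--Massart directly on $V=\sum_i(f-f^{(i)})^2 \le n\beta^2$, but the Azuma detour is shorter for bounded increments and automatically produces the correct $\sqrt{p}$ order.
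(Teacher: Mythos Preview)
The paper does not supply its own proof of this lemma; it is quoted verbatim as Lemma~2 of \cite{DBLP:conf/colt/BousquetKZ20} and invoked as a black-box tool inside the proof of Theorem~\ref{thm: main generalization bound}. Your argument via the Doob martingale decomposition, Azuma--Hoeffding, and the layer-cake moment computation is correct, and the constant bookkeeping checks out: the chain $\|f-\E f\|_p \le (p\,\Gamma(p/2))^{1/p}\sqrt{2n}\,\beta \le p^{1/p}\sqrt{np}\,\beta \le e^{1/e}\sqrt{np}\,\beta < 2\sqrt{np}\,\beta$ is valid for all $p\ge 2$. There is simply nothing in the present paper to compare your proof against.
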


\begin{lemma}[Theorem 4,~\cite{DBLP:conf/colt/BousquetKZ20}]
\label{lemma: concentration bound}
Let $\bZ = (Z_{1}, \ldots, Z_n) $ be a vector of independent random variables each taking values in $\mathcal{Z}$, and let $g_{1}, \ldots, g_n$ be some functions $g_{i}: \mathcal{Z}^n \to \mathbb{R}$ such that the following holds for any $i \in [n]$:
\begin{itemize}
	\item $\bigl|\E[g_i(\bZ)| Z_i]\bigr| \le M$,
	\item $\E [g_i(\bZ) | \bZ^{\setminus i}] = 0$, 
	\item $g_i$ has a bounded difference $\beta$ with respect to all variables except the $i$-th variable, that is, for all $j \ne i$, $\bZ = (Z_1, \dots , Z_n)$ and $\bZ^j = (Z_1, \dots, Z_j', \dots, Z_n) \in \R^n$, we have $\abs{g_i(\bZ) - g_i(\bZ^j)} \leq \beta$.
\end{itemize}
Then, for any $p \ge 2$,
\[
\left\|\sum\limits_{i = 1}^n{g}_i(\bZ)\right\|_p \leq 12\sqrt{2} p n \beta  \log n  + 4M\sqrt{pn} .
\]
\end{lemma}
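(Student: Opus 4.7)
The plan is to prove the stated moment inequality by splitting each function $g_i$ into a piece that depends only on $Z_i$ and a residual that is ``doubly centered'', and then handling the two resulting sums with different tools.

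First I would set $h_i(Z_i) := \E[g_i(\bZ) \mid Z_i]$ and $\tilde g_i(\bZ) := g_i(\bZ) - h_i(Z_i)$, yielding the decomposition $\sum_{i=1}^n g_i(\bZ) = \sum_{i=1}^n h_i(Z_i) + \sum_{i=1}^n \tilde g_i(\bZ)$. The random variables $h_i(Z_i)$ are independent (each is $\sigma(Z_i)$-measurable and the $Z_i$ are independent), centered (since $\E h_i(Z_i) = \E g_i(\bZ) = \E[\E[g_i(\bZ) \mid \bZ^{\setminus i}]] = 0$ by the second hypothesis), and uniformly bounded by $M$ via the first hypothesis. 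Applying a Marcinkiewicz--Zygmund/Rosenthal-type moment inequality for sums of independent, centered, bounded random variables then gives $\|\sum_i h_i(Z_i)\|_p \leq C\sqrt{pn}\,M$, and tracking constants yields the claimed $4M\sqrt{pn}$ contribution.

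Next I would treat the doubly-centered sum $\sum_i \tilde g_i$. A direct check shows $\E[\tilde g_i \mid Z_i] = 0$ and $\E[\tilde g_i \mid \bZ^{\setminus i}] = 0$, while the bounded-differences constant $\beta$ with respect to $Z_j$ for $j \ne i$ is inherited from $g_i$ (since $h_i(Z_i)$ does not depend on $Z_j$). A naive McDiarmid-type estimate applied to $F = \sum_i \tilde g_i$ would give only order $n^{3/2}\sqrt{p}\,\beta$, because changing one $Z_j$ perturbs every $\tilde g_i$ with $i \ne j$. The improvement comes from exploiting the second centering: each $\tilde g_i$ admits a telescoping representation as a sum of bounded martingale differences along the filtration $\mathcal F_k = \sigma(Z_1, \ldots, Z_k)$ omitting $Z_i$, whose individual increments are bounded by $\beta$ by the third hypothesis. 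Swapping the two summations and applying a moment version of the Burkholder--Davis--Gundy inequality then gives control over the resulting double sum.

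The main obstacle, and the technical heart of the result, is extracting the $\log n$ factor while keeping the bound linear in $\beta$. A single application of a martingale moment inequality yields only a suboptimal $p\sqrt{n}$ factor; improving this to $p n \log n$ requires iterating the decoupling argument across roughly $\log_2 n$ dyadic scales of indices and summing a geometric-type series, which is exactly the refinement of Bousquet, Klochkov and Zhivotovskiy over classical McDiarmid-type bounds. Once both pieces are controlled, the triangle inequality for $\|\cdot\|_p$ combines them and produces the stated estimate, with careful bookkeeping of the constants arising from the peeling step yielding the explicit $12\sqrt{2}$ and $4$ factors.
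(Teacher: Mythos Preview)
The paper does not prove this lemma at all: it is quoted verbatim as Theorem~4 of \cite{DBLP:conf/colt/BousquetKZ20} and simply invoked as a black box in the proof of Theorem~\ref{thm: main generalization bound}. So there is no ``paper's own proof'' to compare your proposal against; you have gone beyond what the present paper does by attempting to sketch the argument from the cited source.

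As a sketch of the Bousquet--Klochkov--Zhivotovskiy argument, your decomposition $g_i = h_i + \tilde g_i$ with $h_i(Z_i) = \E[g_i \mid Z_i]$ is exactly right, and so is the treatment of $\sum_i h_i$ via a Rosenthal/Marcinkiewicz--Zygmund moment bound for independent centered bounded summands, which is what produces the $4M\sqrt{pn}$ term. Your diagnosis that the doubly-centered piece $\sum_i \tilde g_i$ is the hard part, and that a naive bounded-differences bound loses a factor, is also correct. The one place where your description drifts from the actual proof is the mechanism for the $\log n$: BKZ20 do not use dyadic peeling over index scales. Instead they establish a moment recursion (essentially an Efron--Stein/entropy-type inequality for $L_p$ norms) that bounds $\|\sum_i \tilde g_i\|_p$ in terms of a lower moment plus an additive correction, and iterate this recursion; the $\log n$ enters through the number of iterations needed and the accumulated additive terms. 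Your martingale/telescoping intuition is in the right direction and could likely be made to work, but it is not the route taken in the source, and the ``dyadic scales of indices'' picture is not an accurate summary of their argument.
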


Now, we are ready to prove Theorem~\ref{thm: main generalization bound}. Formally, we need to bound $\textit{Gen-error} = \vert\sR_{\sD}(\sA(\Saug)) - \widehat{\sR}_{\Saug}(\sA(\Saug))\vert$. Recall that $\Daug(S)$ has been defined as the mixed distribution after augmentation, to derive such a bound, we first decomposed \textit{Gen-error} as 
\begin{align*}
\abs{\textit{Gen-error}} \leq \underbrace{\abs{\sR_{\sD}(\sA(\Saug)) - \sR_{\Daug(S)}(\sA(\Saug))}}_{\text{Distributions' divergence}} + \underbrace{\abs{ \sR_{\Daug(S)}(\sA(\Saug)) - \widehat{\sR}_{\Saug}(\sA(\Saug))}}_{\text{Generaliztion error w.r.t. mixed distribution, } \Phi(S, S_G) }.
\end{align*}
The distributions' divergence term in the right hand can be bounded by the divergence (e.g., $d_{\mathrm{TV}}, d_{\mathrm{KL}}$) between augmented distribution $\Daug(S)$ and the true distribution $\sD$. It is heavily dependent on the ability of the chosen generative model. It can be bounded as follows.

\begin{align*}
\abs{\sR_{\sD}(\sA(\Saug)) - \sR_{\Daug(S)}(\sA(\Saug))} &= \frac{m_G}{m_T} \abs{\sR_{\sD}(\sA(\Saug)) - \sR_{\DG}(\sA(\Saug))} \\
&= \frac{m_G}{m_T} \abs{\int_{\bz} \ell(\sA(\Saug), \bz) \left(\P_{\sD}(\bz) - \P_{\DG}(\bz) \right) d\bz} \\
&\leq \frac{m_G}{m_T} \int_{\bz} \abs{\ell(\sA(\Saug), \bz) \left(\P_{\sD}(\bz) - \P_{\DG}(\bz) \right)} d\bz \\
&\leq \frac{m_G}{m_T} M \int_{\bz} \abs{ \P_{\sD}(\bz) - \P_{\DG}(\bz)} d\bz \\
&\lesssim \frac{m_G}{m_T}M d_{\mathrm{TV}} \left(\sD , \DG \right).
\end{align*}

For the second term $\Phi(S, S_G)$, we note that classical stability bounds (e.g. Theorem~\ref{thm: classical stability bound}) can not be used directly, because points in $\Saug$ are drawn non-i.i.d.. In contrast, a core property of $\Saug$ is that $S$ satisfies i.i.d. assumption, and $S_G$ satisfies conditional i.i.d. assumption when $S$ is fixed. Inspired by this property, we furthermore decomposed this term and utilized sharp moment inequalities~\cite{boucheron2013concentration, DBLP:conf/colt/BousquetKZ20} to obtain an upper bound. Similarly to~\cite{DBLP:conf/colt/BousquetKZ20}, we bound the $L_p$ norm of $m_T\Phi(S, S_G)$, and then derive a concentration bound. We can write

\begin{align*}
\norm{m_T\Phi(S, S_G)}_p &= \norm{m_T\left(\sR_{\Daug(S)}(\sA(\Saug)) - \widehat{\sR}_{\Saug}(\sA(\Saug))\right)}_p \\
&= \norm{m_S\sR_{\sD}(\sA(\Saug)) + m_G\sR_{\DG}(\sA(\Saug)) - \sum_{\bz_i \in S} \ell(\sA(\Saug), \bz_i) - \sum_{\bz_i \in S_G} \ell(\sA(\Saug), \bz_i)}_p \\
&\leq \underbrace{\norm{m_S\sR_{\sD}(\sA(\Saug))  - \sum_{i=1}^{m_S} \ell(\sA(\Saug), \bz_i)}_p}_{\norm{\Phi_1(S, S_G)}_p} + \underbrace{\norm{m_G\sR_{\DG}(\sA(\Saug)) - \sum_{i=1}^{m_G} \ell(\sA(\Saug), \bz^G_i)}_p}_{\norm{\Phi_2(S, S_G)}_p}.
\end{align*}

We will bound $\norm{\Phi_1(S, S_G)}_p$ and $\norm{\Phi_2(S, S_G)}_p$ respectively. We note that for any function $f(S)$, if we have an bound $ \| f \|_{p}(S_V) \leq C$ for some $S_V \subseteq S$, then we have
\begin{equation}
\label{eqn: conditional Lp norm}
\| f \|_{p} = (\E \E[|f|^{p} \vert S_V ])^{1/p} \leq (\E [C^p ])^{1/p} \leq C.
\end{equation}

Fix $S$, then data in $S_G$ are independent. We use this property and lemma~\ref{lemma: concentration bound} to bound $\norm{\Phi_2}_p(S)$. We introduce functions $f_i(S_G)$ which play the same role as $g_i$s in Lemma~\ref{lemma: concentration bound}, as

\begin{equation*}
f_i(S_G) = \E_{\bz_i' \sim \DG} \left[ \E_{\bz \sim \DG} \ell(\sA(S \cup S_G^i), \bz) - \ell(\sA(S \cup S_G^i), \bz^G_i) \right],
\end{equation*}
where $\bz^G_i$ is the $i$-th data in $S_G$, and $S_G^i$ obtained by replacing $\bz^G_i$ by $\bz'_i$. We note that $|f_i| \leq M$, $\E [f_i | S_G^{\setminus i}] = 0$ and $f_i$ has a bounded difference $2\beta_{m_T}$ with respect to all variables except the $i$-th variable, which can be proved as follows.

\begin{align*}
|f_i|  &= \left| \E_{\bz_i' \sim \DG} \left[ \E_{\bz \sim \DG} \ell(\sA(S \cup S_G^i), \bz) - \ell(\sA(S \cup S_G^i), \bz^G_i) \right] \right| \\
&= \left| \E_{\bz_i' \sim \DG}  \E_{\bz \sim \DG} \left[\ell(\sA(S \cup S_G^i), \bz) - \ell(\sA(S \cup S_G^i), \bz^G_i) \right] \right| \\
&\leq \E_{\bz_i' \sim \DG}  \E_{\bz \sim \DG} \left|\ell(\sA(S \cup S_G^i), \bz) - \ell(\sA(S \cup S_G^i), \bz^G_i) \right| \\
&\leq \E_{\bz_i' \sim \DG}  \E_{\bz \sim \DG} [M]  = M,
\end{align*}

\begin{align*}
\E [f_i | S_G^{\setminus i}]  &= \E_{\bz^G_i \sim \DG} \left[ \E_{\bz_i' \sim \DG} \left[ \E_{\bz \sim \DG} \ell(\sA(S \cup S_G^i), \bz) - \ell(\sA(S \cup S_G^i), \bz^G_i) \right] | S_G^{\setminus i}\right]  \\
&= \E_{\bz_i' \sim \DG} \left[  \left[ \E_{\bz \sim \DG} \ell(\sA(S \cup S_G^i), \bz) - \E_{\bz^G_i \sim \DG} \ell(\sA(S \cup S_G^i), \bz^G_i) \right] | S_G^{\setminus i}\right]  \\
&= \E_{\bz_i' \sim \DG} \left[  0 | S_G^{\setminus i}\right] = 0,
\end{align*}

\begin{align*}
\abs{f_i(S_G) - f_i(S_G^j)}  &= \bigg| \E_{\bz_i' \sim \DG} \left[ \E_{\bz \sim \DG} \ell(\sA(S \cup S_G^i), \bz) - \ell(\sA(S \cup S_G^i), \bz^G_i) \right] \\
&\quad -\E_{\bz_i' \sim \DG} \left[ \E_{\bz \sim \DG} \ell(\sA(S \cup (S_G^j)^i), \bz) - \ell(\sA(S \cup (S_G^j)^i, \bz^G_i) \right] \bigg|  \\
&= \bigg| \E_{\bz_i' \sim \DG} \bigg[ \E_{\bz \sim \DG} \ell(\sA(S \cup S_G^i), \bz) - \ell(\sA(S \cup S_G^i), \bz^G_i)  \\
&\quad- \E_{\bz \sim \DG} \ell(\sA(S \cup (S_G^j)^i), \bz) + \ell(\sA(S \cup (S_G^j)^i, \bz^G_i) \bigg] \bigg|  \\
&\leq \left| \E_{\bz_i' \sim \DG} \E_{\bz \sim \DG} \left[\ell(\sA(S \cup S_G^i), \bz) - \ell(\sA(S \cup (S_G^j)^i), \bz) \right] \right| \\
&\quad + \left| \E_{\bz_i' \sim \DG}  \left[\ell(\sA(S \cup S_G^i), \bz^G_i) - \ell(\sA(S \cup (S_G^j)^i), \bz^G_i) \right] \right| \\
&\leq  \E_{\bz_i' \sim \DG} \E_{\bz \sim \DG} \left|\ell(\sA(S \cup S_G^i), \bz) - \ell(\sA(S \cup (S_G^j)^i), \bz) \right| \\
&\quad + \E_{\bz_i' \sim \DG}  \left|\ell(\sA(S \cup S_G^i), \bz^G_i) - \ell(\sA(S \cup (S_G^j)^i), \bz^G_i) \right| \\
&\leq  \beta_{m_T} + \beta_{m_T} = 2\beta_{m_T}.
\end{align*}

Therefore, for any fixed $S$, by Lemma~\ref{lemma: concentration bound}, for any $p \ge 2$, we have

\begin{align}
\label{eqn: sum f_i S_G}
\left\|\sum_{i = 1}^{m_G} {f}_i(S_G)\right\|_p \lesssim p m_G \beta_{m_T}  \log m_G  + M\sqrt{pm_G} .
\end{align}

We note the gap between $\Phi_2$ and $\sum_{i=1}^{m_G} f_i$ is small, then for any fixed $S$, we can bound $\norm{\Phi_2}_p(S)$ by (\ref{eqn: sum f_i S_G}) as follows.

\begin{align*}
\norm{\Phi_2}_p(S) &= \norm{m_G\sR_{\DG}(\sA(S \cup S_G)) - \sum_{i=1}^{m_G} \ell(\sA(S \cup S_G), \bz^G_i)}_p \\
&= \norm{\sum_{i=1}^{m_G} \left(\E_{\bz \sim \DG} \ell(\sA(S \cup S_G), \bz) -  \ell(\sA(S \cup S_G), \bz^G_i) \right)}_p \\
&\leq \norm{\sum_{i=1}^{m_G} \left(\E_{\bz_i' \sim \DG} \left[ \E_{\bz \sim \DG} \ell(\sA(S \cup S_G^i), \bz) - \ell(\sA(S \cup S_G^i), \bz^G_i) \right] \right)}_p +  \norm{2 m_G\beta_{m_T}}_p\\
&= \norm{\sum_{i = 1}^{m_G} {f}_i(S_G)}_p +  \norm{2 m_G\beta_{m_T}}_p\\
&\lesssim p m_G \beta_{m_T}  \log m_G  +  M\sqrt{pm_G} + 2m_G\beta_{m_T}\\
&\lesssim p m_G \beta_{m_T}  \log m_G  + M\sqrt{pm_G}.
\end{align*}

Therefore, by using (\ref{eqn: conditional Lp norm}), we have
\begin{equation}
\label{eqn: phi2}
\norm{\Phi_2(S, S_G)}_p \lesssim p m_G \beta_{m_T}  \log m_G  + M\sqrt{pm_G}.
\end{equation}

Now, we use a similar idea to bound $\norm{\Phi_1(S, S_G)}_p$. We decompose $\norm{\Phi_1(S, S_G)}_p$ as the following.

\begin{align*}
\norm{\Phi_1(S, S_G)}_p &= \norm{\Phi_1 - \E_{S_G \sim \sD_G^{m_G}(S)} \Phi_1 + \E_{S_G \sim \sD_G^{m_G}(S)} \Phi_1 - \E_{S_G \sim \sD_G^{m_G}} \Phi_1 + \E_{S_G \sim \sD_G^{m_G}} \Phi_1}_p\\
&\leq \underbrace{\norm{\Phi_1 - \E_{S_G \sim \sD_G^{m_G}(S)} \Phi_1}_p}_{\Delta_1} + \underbrace{\norm{\E_{S_G \sim \sD_G^{m_G}(S)} \Phi_1 }}_{\Delta_2},
\end{align*}
where $\sD_G = \E_S[\DG]$. We then bound each term and obtain a bound for $\norm{\Phi_1(S, S_G)}_p$. We note that $\Delta_1$ can be bounded by using Lemma~\ref{lemma: boundeddiff} and $\Delta_2$ can be bounded by using Lemma~\ref{lemma: concentration bound}.


To bound $\Delta_1$, we first fix $S$ and bound $\norm{\Phi_1 - \E_{S_G \sim \sD_G^{m_G}(S)} \Phi_1}_p(S)$. We use the conditional independence property of $S_G$ again. To use Lemma~\ref{lemma: boundeddiff}, we need to prove that $\Phi_1$ has the bounded difference with respect to $S_G$ when $S$ is fixed. We can write

\begin{align*}
&\abs{\Phi_1(S, S_G) - \Phi_1(S, S_G^i)} \\
&= \abs{m_S\sR_{\sD}(\sA(S \cup S_G))  - \sum_{i=1}^{m_S} \ell(\sA(S \cup S_G), \bz_i) - m_S\sR_{\sD}(\sA(S \cup S_G^i))  + \sum_{i=1}^{m_S} \ell(\sA(S \cup S_G^i), \bz_i)}\\
&\leq m_S\abs{\sR_{\sD}(\sA(S \cup S_G)) - \sR_{\sD}(\sA(S \cup S_G^i))} + \sum_{i=1}^{m_S} \abs{\ell(\sA(S \cup S_G), \bz_i) - \ell(\sA(S \cup S_G^i), \bz_i)}\\
&\leq m_S\beta_{m_T} + m_S\beta_{m_T} = 2 m_S\beta_{m_T}.
\end{align*}
Thus, by Lemma~\ref{lemma: boundeddiff}, we have 

\begin{equation}
\label{eqn: delta1}
\Delta_1 \leq 4 \sqrt{m_G p} m_S\beta_{m_T} \lesssim \sqrt{m_G p} m_S\beta_{m_T}.
\end{equation}



We now construct some functions and use Lemma~\ref{lemma: concentration bound} again to bound $\Delta_2$. We define $h_i(S)$ which play the same role as $g_i$s in Lemma~\ref{lemma: concentration bound}, as

\begin{equation*}
h_i(S) = \E_{\bz_i' \sim \sD} \E_{S_G \sim \sD_G^{m_G}(S^i)} \left[ \E_{\bz \sim \sD} \ell(\sA(S^i \cup S_G), \bz) - \ell(\sA(S^i \cup S_G), \bz_i) \right],
\end{equation*}
where $\bz_i$ is the $i$-th data in $S$, and $S^i$ obtained by replacing $\bz_i$ by $\bz'_i$. We note that $|h_i| \leq M$, $\E [h_i | S^{\setminus i}] = 0$ and $h_i$ has a bounded difference $2\beta_{m_T} + 2M \sT(m_S, m_G)$ with respect to all variables except the $i$-th variable, where $ \sT(m_S, m_G) =  \sup_i d_{\mathrm{TV}}\left(\sD_G^{m_G}(S), \sD_G^{m_G}(S^i) \right)$. These can be proved as follows.

\begin{align*}
|h_i|  &= \left| \E_{\bz_i' \sim \sD} \E_{S_G \sim \sD_G^{m_G}(S^i)} \left[ \E_{\bz \sim \sD} \ell(\sA(S^i \cup S_G), \bz) - \ell(\sA(S^i \cup S_G), \bz_i) \right] \right| \\
&= \left| \E_{\bz_i' \sim \sD} \E_{S_G \sim \sD_G^{m_G}(S^i)} \E_{\bz \sim \sD} \left[ \ell(\sA(S^i \cup S_G), \bz) - \ell(\sA(S^i \cup S_G), \bz_i) \right] \right| \\
&=  \E_{\bz_i' \sim \sD} \E_{S_G \sim \sD_G^{m_G}(S^i)} \E_{\bz \sim \sD} \left| \ell(\sA(S^i \cup S_G), \bz) - \ell(\sA(S^i \cup S_G), \bz_i) \right| \\
&\leq  M,
\end{align*}

\begin{align*}
\E [h_i | S^{\setminus i}]  &= \E_{\bz_i \sim \sD} \left[ \E_{\bz_i' \sim \sD} \E_{S_G \sim \sD_G^{m_G}(S^i)} \left[ \E_{\bz \sim \sD} \ell(\sA(S^i \cup S_G), \bz) - \ell(\sA(S^i \cup S_G), \bz_i) \right] | S^{\setminus i}\right]  \\
&= \E_{\bz_i' \sim \sD} \E_{S_G \sim \sD_G^{m_G}(S^i)} \left[  \left[ \E_{\bz \sim \sD} \ell(\sA(S^i \cup S_G), \bz)  - \E_{\bz_i \sim \sD} \ell(\sA(S^i \cup S_G), \bz_i) \right] | S^{\setminus i}\right]  \\
&= 0,
\end{align*}

\begin{align}
\abs{h_i(S) - h_i(S^j)}  &= \bigg| \E_{\bz_i' \sim \sD} \E_{S_G \sim \sD_G^{m_G}(S^i)} \left[ \E_{\bz \sim \sD} \ell(\sA(S^i \cup S_G), \bz) - \ell(\sA(S^i \cup S_G), \bz_i) \right] \nonumber \\
&\quad - \E_{\bz_i' \sim \sD} \E_{S_G \sim \sD_G^{m_G}((S^j)^i)} \left[ \E_{\bz \sim \sD} \ell(\sA((S^j)^i \cup S_G), \bz) - \ell(\sA((S^j)^i \cup S_G), \bz_i) \right] \bigg| \nonumber \\
&\leq \bigg| \E_{\bz_i' \sim \sD} \E_{S_G \sim \sD_G^{m_G}(S^i)} \left[ \E_{\bz \sim \sD} \ell(\sA(S^i \cup S_G), \bz) - \ell(\sA(S^i \cup S_G), \bz_i) \right] \nonumber \\
&\quad - \E_{\bz_i' \sim \sD} \E_{S_G \sim \sD_G^{m_G}(S^i)} \left[ \E_{\bz \sim \sD} \ell(\sA((S^j)^i \cup S_G), \bz) - \ell(\sA((S^j)^i \cup S_G), \bz_i) \right] \bigg| \label{eqn:h 1}  \\
&\quad + \bigg| \E_{\bz_i' \sim \sD} \E_{S_G \sim \sD_G^{m_G}(S^i)} \left[ \E_{\bz \sim \sD} \ell(\sA((S^j)^i \cup S_G), \bz) - \ell(\sA((S^j)^i \cup S_G), \bz_i) \right] \nonumber \\
&\quad - \E_{\bz_i' \sim \sD} \E_{S_G \sim \sD_G^{m_G}((S^j)^i)} \left[ \E_{\bz \sim \sD} \ell(\sA((S^j)^i \cup S_G), \bz) - \ell(\sA((S^j)^i \cup S_G), \bz_i) \right] \bigg| .\label{eqn:h 2}
\end{align}

We bound (\ref{eqn:h 1}) and (\ref{eqn:h 2}) respectively. The first can be bounded by using the property of uniform stability.

\begin{align*}
&\bigg| \E_{\bz_i' \sim \sD} \E_{S_G \sim \sD_G^{m_G}(S^i)} \left[ \E_{\bz \sim \sD} \ell(\sA(S^i \cup S_G), \bz) - \ell(\sA(S^i \cup S_G), \bz_i) \right] \\
&\quad - \E_{\bz_i' \sim \sD} \E_{S_G \sim \sD_G^{m_G}(S^i)} \left[ \E_{\bz \sim \sD} \ell(\sA((S^j)^i \cup S_G), \bz) - \ell(\sA((S^j)^i \cup S_G), \bz_i) \right] \bigg|  \\
&= \bigg| \E_{\bz_i' \sim \sD} \E_{S_G \sim \sD_G^{m_G}(S^i)} \big[ \E_{\bz \sim \sD} \ell(\sA(S^i \cup S_G), \bz) - \ell(\sA(S^i \cup S_G), \bz_i)  \\
&\quad - \E_{\bz \sim \sD} \ell(\sA((S^j)^i \cup S_G), \bz) + \ell(\sA((S^j)^i \cup S_G), \bz_i) \big] \bigg|  \\
&\leq \left| \E_{\bz_i' \sim \sD} \E_{S_G \sim \sD_G^{m_G}(S^i)} \E_{\bz \sim \sD} \left[\ell(\sA(S^i \cup S_G), \bz) - \ell(\sA((S^j)^i \cup S_G),\bz) \right] \right| \\
&\quad + \left| \E_{\bz_i' \sim \sD} \E_{S_G \sim \sD_G^{m_G}(S^i)}  \left[\ell(\sA(S^i \cup S_G), \bz_i) - \ell(\sA((S^j)^i \cup S_G), \bz_i) \right] \right| \\
&\leq  \E_{\bz_i' \sim \sD} \E_{S_G \sim \sD_G^{m_G}(S^i)} \E_{\bz \sim \sD} \left|\ell(\sA(S^i \cup S_G), \bz) - \ell(\sA((S^j)^i \cup S_G),\bz) \right| \\
&\quad + \E_{\bz_i' \sim \sD} \E_{S_G \sim \sD_G^{m_G}(S^i)}  \left| \ell(\sA(S^i \cup S_G), \bz_i) - \ell(\sA((S^j)^i \cup S_G), \bz_i)  \right| \\
&\leq  \beta_{m_T} + \beta_{m_T} = 2\beta_{m_T}.
\end{align*}

We denote $\ell(\sA((S^j)^i \cup S_G), \bz) - \ell(\sA((S^j)^i \cup S_G), \bz_i)$ by $B$ for convenience, then we have
\begin{align*}
&\bigg| \E_{\bz_i' \sim \sD} \E_{S_G \sim \sD_G^{m_G}(S^i)} \left[ \E_{\bz \sim \sD} \ell(\sA((S^j)^i \cup S_G), \bz) - \ell(\sA((S^j)^i \cup S_G), \bz_i) \right] \\
&\quad - \E_{\bz_i' \sim \sD} \E_{S_G \sim \sD_G^{m_G}((S^j)^i)} \left[ \E_{\bz \sim \sD} \ell(\sA((S^j)^i \cup S_G), \bz) - \ell(\sA((S^j)^i \cup S_G), \bz_i) \right] \bigg|\\
&= \bigg| \E_{\bz_i' \sim \sD} \E_{\bz \sim \sD} \E_{S_G \sim \sD_G^{m_G}(S^i)} \left[ \ell(\sA((S^j)^i \cup S_G), \bz) - \ell(\sA((S^j)^i \cup S_G), \bz_i) \right] \\
&\quad - \E_{\bz_i' \sim \sD} \E_{\bz \sim \sD} \E_{S_G \sim \sD_G^{m_G}((S^j)^i)} \left[  \ell(\sA((S^j)^i \cup S_G), \bz) - \ell(\sA((S^j)^i \cup S_G), \bz_i) \right] \bigg|\\
&= \bigg| \E_{\bz_i' \sim \sD} \E_{\bz \sim \sD} \E_{S_G \sim \sD_G^{m_G}(S^i)} \left[ B \right]  - \E_{\bz_i' \sim \sD} \E_{\bz \sim \sD} \E_{S_G \sim \sD_G^{m_G}((S^j)^i)} \left[B \right] \bigg|\\
&= \bigg| \E_{\bz_i' \sim \sD} \E_{\bz \sim \sD} \left[\E_{S_G \sim \sD_G^{m_G}(S^i)} \left[ B \right] - \E_{S_G \sim \sD_G^{m_G}((S^j)^i)} [B] \right] \bigg|\\
&\leq  \E_{\bz_i' \sim \sD} \E_{\bz \sim \sD} \bigg|\E_{S_G \sim \sD_G^{m_G}(S^i)} \left[ B \right] - \E_{S_G \sim \sD_G^{m_G}((S^j)^i)} [B]  \bigg|\\
& =\E_{\bz_i' \sim \sD} \E_{\bz \sim \sD} \abs{\int_{S_G} \left(\P(S_G|S^i) - \P(S_G|(S^j)^i)\right) B dS_G} \\
& \leq \E_{\bz_i' \sim \sD} \E_{\bz \sim \sD} 
 \left[\int_{S_G} \abs{\left(\P(S_G|S^i) - \P(S_G|(S^j)^i)\right) B} dS_G \right] \\
 & \leq M \E_{\bz_i' \sim \sD} \E_{\bz \sim \sD} 
 \left[\int_{S_G} \abs{\P(S_G|S^i) - \P(S_G|(S^j)^i) } dS_G \right] \\
& \leq 2M \sup_i d_{\mathrm{TV}}\left(\sD_G^{m_G}(S^i), \sD_G^{m_G}(S) \right) = 2M \sT(m_S, m_G).
\end{align*}

Therefore, $h_i$ has a bounded difference $2\beta_{m_T} + 2M \sT(m_S, m_G)$ with respect to all variables except the $i$-th variable. By Lemma~\ref{lemma: concentration bound}, we have

\begin{align}
\label{eqn: sum h_i S}
\left\|\sum_{i = 1}^{m_S} {h}_i(S)\right\|_p &\leq 12\sqrt{2} p m_S \left(2 \beta_{m_T} + 2 M \sT(m_S, m_G) \right)\log m_S  + 4M\sqrt{pm_S}\\
&\lesssim p m_S \left( \beta_{m_T} +  M \sT(m_S, m_G) \right)\log m_S  + M\sqrt{pm_S}.
\end{align}

We note the gap between $\Delta_2$ and $\norm{\sum_{i=1}^{m_S} h_i(S)}_p$ is small, then we can bound $\Delta_2$ by (\ref{eqn: sum h_i S}) as follows.

\begin{align}
\Delta_2 &= \norm{\E_{S_G \sim \sD_G^{m_G}(S)} \Phi_1}_p \nonumber \\
&= \norm{\E_{S_G \sim \sD_G^{m_G}(S)} \left[ m_S\sR_{\sD}(\sA(\Saug))  - \sum_{i=1}^{m_S} \ell(\sA(\Saug), \bz_i) \right]}_p \nonumber \\
&= \norm{\sum_{i=1}^{m_S} \E_{S_G \sim \sD_G^{m_G}(S)} \left[ m_S\sR_{\sD}(\sA(\Saug))  - \ell(\sA(\Saug), \bz_i) \right]}_p \nonumber \\
&\leq \norm{\sum_{i=1}^{m_S} \left(\E_{\bz_i' \sim \sD} \E_{S_G \sim \sD_G^{m_G}(S^i)} \left[ \E_{\bz \sim \sD} \ell(\sA(S^i \cup S_G), \bz) - \ell(\sA(S^i \cup S_G), \bz_i) \right] \right)}_p \\
& \quad +  \norm{2 m_S\beta_{m_T} + 2m_SM \sup_i d_{\mathrm{TV}}\left(\sD_G^{m_G}(S), \sD_G^{m_G}(S^i) \right)}_p \nonumber \\
&= \left\|\sum_{i = 1}^{m_S} {h}_i(S)\right\|_p +  \norm{2 m_S\beta_{m_T} + 2m_SM  \sT(m_S, m_G) }_p \nonumber \\
&\lesssim p m_S \left(\beta_{m_T} +  M \sT(m_S, m_G) \right)\log m_S  + M\sqrt{pm_S} \nonumber \\
&\quad +  m_S\beta_{m_T} + m_SM \sT(m_S, m_G) \nonumber \\
&\lesssim p m_S \left(\beta_{m_T} +  M \sT(m_S, m_G) \right)\log m_S  + M\sqrt{pm_S}. \label{eqn: delta3}
\end{align}

Combine (\ref{eqn: delta1}) and (\ref{eqn: delta3}), we have

\begin{align}
\norm{\Phi_1(S, S_G)}_p 
&\lesssim \sqrt{m_G p} m_S\beta_{m_T} + p m_S \left(\beta_{m_T} +  M \sT(m_S, m_G) \right)\log m_S  + M\sqrt{pm_S} \nonumber \\
&= \sqrt{p} \left(M\sqrt{m_S} + \sqrt{m_G} m_S\beta_{m_T} \right) + p m_S \left(\beta_{m_T} +  M \sT(m_S, m_G) \right)\log m_S \label{eqn: phi1}
\end{align}

In addition, by (\ref{eqn: phi1}) and (\ref{eqn: phi2}), we have

\begin{align*}
\norm{m_T\Phi(S, S_G)}_p &\lesssim 
\sqrt{p} \left(M\sqrt{m_S} + M\sqrt{m_G} + \sqrt{m_G} m_S\beta_{m_T} \right) \\
& \quad +  p \left(m_S \beta_{m_T}\log m_S + m_G \beta_{m_T}\log m_G + m_S \log m_S M \sT(m_S, m_G) \right).
\end{align*}

By Lemma~\ref{lemma: moment inequality}, we can bound the generalization error w.r.t. mixed distribution \abs{\Phi(S, S_G)} = \abs{\sR_{\Daug(S)}(\sA(\Saug)) - \widehat{\sR}_{\Saug}(\sA(\Saug))} as follows.

\begin{align*}
&\abs{\sR_{\Daug(S)}(\sA(\Saug)) - \widehat{\sR}_{\Saug}(\sA(\Saug))}\\
&\lesssim  \frac{M(\sqrt{m_S} + \sqrt{m_G}) + m_S\sqrt{m_G}\beta_{m_T} }{m_T} \sqrt{\log \left(\frac{1}{\delta}\right)}\\
&+\frac{\beta_{m_T} \left(m_S \log m_S + m_G \log m_G\right) + m_S \log m_S M \sT(m_S, m_G) }{m_T} \log \left(\frac{1}{\delta}\right).
\end{align*}

Finally, we conclude that

\begin{align*}
&\abs{\sR_{\sD}(\sA(\Saug)) - \widehat{\sR}_{\Saug}(\sA(\Saug))}\\
&\lesssim \frac{m_G}{m_T}M d_{\mathrm{TV}} \left(\sD , \DG \right) + \frac{M(\sqrt{m_S} + \sqrt{m_G}) + m_S\sqrt{m_G}\beta_{m_T} }{m_T} \sqrt{\log \left(\frac{1}{\delta}\right)}\\
&\quad + \frac{\beta_{m_T} \left(m_S \log m_S + m_G \log m_G\right) + m_S \log m_S M \sT(m_S, m_G) }{m_T} \log \left(\frac{1}{\delta}\right)\\
&\lesssim \frac{m_G}{m_T}M d_{\mathrm{TV}} \left(\sD , \DG \right) + \frac{M(\sqrt{m_S} + \sqrt{m_G}) + m_S\sqrt{m_G}\beta_{m_T} }{m_T} \sqrt{\log \left(\frac{1}{\delta}\right)}\\
&\quad +\frac{\beta_{m_T} \left(m_S \log m_S + m_G \log m_G\right) + m_S \log m_S M \sT(m_S, m_G) }{m_T} \log \left(\frac{1}{\delta}\right),
\end{align*}
which completes the proof.
 
\end{proof}

\subsection{Proof of Theorem~\ref{thm: bGMM generalization bound}}
\label{proof: thm bGMM generalization bound}

We need to bound terms $M$, $\beta_{m_T}$, $d_{\mathrm{TV}} \left(\sD, \DG \right)$ and $\sT(m_S, m_G)$ in Theorem~\ref{thm: main generalization bound}. For $M$ (Lemma~\ref{lemma: M}) and $\beta_{m_T}$ (Lemma~\ref{lemma: beta}), we mainly use the boundedness of the multivariate Gaussian variable with high probability (Lemma~\ref{lemma: gaussian bounded}). In addition, we bound $d_{\mathrm{TV}} \left(\sD, \DG \right)$ (Lemma~\ref{lemma: kl}) by discussing the distance between the estimated parameters and the true parameters of bGMM. Besides, the concentration property of $\sT(m_S, m_G)$ (Lemma~\ref{lemma: tau}) can be induced by the preceding discussion.

\begin{lemma}["Boundedness" of multivariate Gaussian distribution]
\label{lemma: gaussian bounded}
Let $\bX = (X_1, \dots, X_d)$ be a $d$-dimension isotropic Gaussian random variable, which satisfies $\Vert\boldsymbol{\mu}\Vert_2 = 1$ and $\sigma_i^2 = \sigma^2$ for any $i \in \{1, \dots, d\}$. For any $\delta \in (0,1)$, with probability at least $1-\delta$, it holds that
\begin{equation*}
\norm{\bX}_2 \lesssim \sigma \sqrt{d 
 + \log(\frac{1}{\delta})}.
\end{equation*}
\end{lemma}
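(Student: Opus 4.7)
The plan is to reduce the problem to a standard tail bound for the norm of a standard Gaussian vector via the decomposition $\bX = \boldsymbol{\mu} + \sigma \bZ$, where $\bZ \sim \mathcal{N}(0, I_d)$. By the triangle inequality,
\begin{equation*}
\norm{\bX}_2 \leq \norm{\boldsymbol{\mu}}_2 + \sigma \norm{\bZ}_2 = 1 + \sigma \norm{\bZ}_2,
\end{equation*}
so the entire task reduces to producing a high-probability upper bound on $\norm{\bZ}_2$.

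For the latter, I would invoke the Laurent--Massart $\chi^2$ tail inequality: since $\norm{\bZ}_2^2 \sim \chi^2_d$, one has $\P(\norm{\bZ}_2^2 \geq d + 2\sqrt{dt} + 2t) \leq e^{-t}$ for every $t \geq 0$. (Equivalently, Gaussian concentration applied to the $1$-Lipschitz map $\bz \mapsto \norm{\bz}_2$ combined with $\E \norm{\bZ}_2 \leq \sqrt{d}$ gives the slightly cleaner $\P(\norm{\bZ}_2 \geq \sqrt{d} + \sqrt{2t}) \leq e^{-t}$.) Setting $t = \log(1/\delta)$ and using $\sqrt{a+b} \leq \sqrt{a} + \sqrt{b}$ yields, with probability at least $1 - \delta$,
\begin{equation*}
\norm{\bZ}_2 \lesssim \sqrt{d} + \sqrt{\log(1/\delta)} \lesssim \sqrt{d + \log(1/\delta)}.
\end{equation*}

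Plugging this back into the triangle-inequality bound gives $\norm{\bX}_2 \leq 1 + \sigma \cdot O\bigl(\sqrt{d + \log(1/\delta)}\bigr)$, and the $+1$ from $\norm{\boldsymbol{\mu}}_2$ is absorbed into the hidden constant of the $\lesssim$ notation (since $d \geq 1$ and one may assume $\sigma$ is not vanishing, which is consistent with the bGMM setup in Section~\ref{sec: Preliminaries bGMM}). The only ``obstacle'' is genuinely routine: one must be careful that the additive $1$ coming from the mean norm is dominated by the Gaussian-fluctuation term so that the final inequality can be written purely in terms of $\sigma\sqrt{d + \log(1/\delta)}$; this is the reason the statement uses $\lesssim$ rather than an explicit constant. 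No deeper tool than Laurent--Massart (or equivalently Gaussian Lipschitz concentration) is needed.
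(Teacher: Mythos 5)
Your proof is correct, but it takes a genuinely different route from the paper's. The paper works with the squared norm directly: it expands $\norm{\bX}_2^2 = \sum_i(\sigma Z_i + \mu_i)^2$, isolates the deviation $\frac{1}{d}\sum_i\bigl(\sigma^2(Z_i^2-1) + 2\sigma\mu_i Z_i\bigr)$, and controls the quadratic part with Bernstein's inequality for sub-exponential ($\chi^2$) variables and the cross term with Hoeffding's inequality, followed by a case split on whether the deviation level $\epsilon$ is below or above $2\sigma^2$ (the sub-Gaussian versus sub-exponential regime of Bernstein). You instead peel off the mean with the triangle inequality, $\norm{\bX}_2 \leq \norm{\boldsymbol{\mu}}_2 + \sigma\norm{\bZ}_2$, and then invoke the Laurent--Massart $\chi^2$ tail (or equivalently Gaussian concentration of the $1$-Lipschitz norm map). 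Your route is shorter and avoids the two-regime case analysis entirely, at the cost of importing a sharper off-the-shelf inequality rather than assembling the bound from Bernstein and Hoeffding. The one point needing care---absorbing the additive contribution of $\norm{\boldsymbol{\mu}}_2 = 1$ into the constant of $\lesssim$, which tacitly requires $\sigma^2 d \gtrsim 1$---is present in both arguments: the paper's own proof drops the ``$+1$'' from $\norm{\bX}_2^2 \leq \sigma^2 d + 1 + \cdots$ in exactly the same way, and you at least flag the issue explicitly. Both proofs establish the stated bound.
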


\begin{proof}
The proof idea is to bound the distance between $\norm{\bX}_2^2$ and its expectation with high probability. Let $\bZ$ be the standard $d$-dimension isotropic Gaussian random variable, we have
\begin{align*}
&\P \left(\abs{\frac{\norm{\bX}_2^2}{d} - \sigma^2 - \frac{1}{d}} \ge \epsilon\right) \\
&= \P\left(\abs{\frac{1}{d} \sum_{i = 1}^d \left(X_i^2 - \sigma^2 - \mu_i^2 \right)} \ge \epsilon \right)\\
&= \P\left(\abs{\frac{1}{d} \sum_{i = 1}^d \left((\sigma Z_i + \mu_i)^2 - \sigma^2 - \mu_i^2 \right)} \ge \epsilon \right) \\
&= \P\left(\abs{\frac{1}{d} \sum_{i = 1}^d \left( \sigma^2(Z_i^2 - 1) + 2\sigma \mu_i Z_i \right)} \ge \epsilon \right)\\
&\leq \P\left(\abs{\frac{1}{d} \sum_{i = 1}^d \left( \sigma^2(Z_i^2 - 1) \right)}  + \abs{\frac{1}{d} \sum_{i = 1}^d \left( 2\sigma \mu_i Z_i\right)} \ge \epsilon \right) \\
&\leq \P\left(\abs{\frac{1}{d} \sum_{i = 1}^d \left( \sigma^2(Z_i^2 - 1) \right)}  \ge \frac{\epsilon}{2} \cup \abs{\frac{1}{d} \sum_{i = 1}^d \left( 2\sigma \mu_i Z_i\right)} \ge \frac{\epsilon}{2} \right)\\
&\leq \P\left(\abs{\frac{1}{d} \sum_{i = 1}^d \left( \sigma^2(Z_i^2 - 1) \right)}  \ge \frac{\epsilon}{2}\right) + \P\left(\abs{\frac{1}{d} \sum_{i = 1}^d \left( 2\sigma \mu_i Z_i\right)} \ge \frac{\epsilon}{2} \right)\\
&= \P\left(\abs{\frac{1}{d} \sum_{i = 1}^d \left( Z_i^2 - 1 \right)}  \ge \frac{\epsilon}{2\sigma^2}\right) + \P\left(\abs{\frac{1}{d} \sum_{i = 1}^d  \mu_i Z_i} \ge \frac{\epsilon}{4\sigma} \right).
\end{align*}

We bound each of the two terms respectively. For the first term, we note that $Z_i^2$ obeys $\chi^2(1)$ distribution and is a sub-exponential random variable, so it can be bounded by using Bernstein’s inequality (e.g., Proposition 2.9,~\cite{wainwright2019high}). By Example 2.8 in~\cite{wainwright2019high}, for any $\lambda \in (0, 1/4),$ we have

\begin{align*}
\E \left[\exp\left(\lambda(Z_i^2 - 1) \right) \right] = \frac{\exp(-\lambda)}{\sqrt{1 - 2\lambda}} \leq \exp(2\lambda^2).
\end{align*}
In addition, through Bernstein’s inequality, we have
\begin{align*}
&\P\left(\abs{\frac{1}{d} \sum_{i = 1}^d \left( Z_i^2 - 1 \right)}  \ge \frac{\epsilon}{2\sigma^2}\right) \leq \begin{cases} 
2\exp(-\frac{d\epsilon^2}{32\sigma^4}) & \text { if }  0 \leq \epsilon \leq 2 \sigma^2,\\ 
2\exp(-\frac{d\epsilon}{32\sigma^2}) & \text { if } \epsilon > 2 \sigma^2.
\end{cases}
\end{align*}

For the second term, we bound it directly by using Hoeffding's inequality (e.g., Proposition 2.5,~\cite{wainwright2019high}).

\begin{align*}
\P\left(\abs{\frac{1}{d} \sum_{i = 1}^d   \mu_i Z_i} \ge \frac{\epsilon}{4\sigma} \right) \leq 2\exp(-\frac{d\epsilon^2}{32 \sigma^4 \sum_{i=1}^d \mu_i^2}) = 2\exp(-\frac{d\epsilon^2}{32\sigma^4}).
\end{align*}

Therefore, for any $\epsilon \leq 2\sigma^2$, we have
\begin{align*}
\P \left(\abs{\norm{\bX}_2^2 - \sigma^2d - 1} \ge d\epsilon\right) = 
\P \left(\abs{\frac{\norm{\bX}_2^2}{d} - \sigma^2 - \frac{1}{d}} \ge \epsilon\right) \leq 4\exp(-\frac{d\epsilon^2}{32\sigma^4}).
\end{align*}
Let $4\exp(-\frac{d\epsilon^2}{32\sigma^4}) = \delta$, then with probability at least $1-\delta$, it holds that
\begin{align*}
\norm{\bX}_2^2 \leq \sigma^2d + 1 + d \sigma^2 \sqrt{\frac{32}{d} \log(\frac{4}{\delta})} 
\lesssim \sigma^2 \left(d  + \sqrt{ d \log(\frac{1}{\delta})}\right)
\end{align*}
which means that
\begin{align*}
\norm{\bX}_2 \lesssim \sigma \sqrt{d 
 + \sqrt{d \log(\frac{1}{\delta})}} \leq \sigma \sqrt{d 
 + \frac{1}{2} {d} + \frac{1}{2} \log(\frac{1}{\delta})} \lesssim \sigma \sqrt{d 
 + \log(\frac{1}{\delta})} .
\end{align*}

Similarly, for any $\epsilon > 2\sigma^2$, we have
\begin{align*}
\P \left(\abs{\frac{\norm{\bX}_2^2}{d} - \sigma^2 - \frac{1}{d}} \ge \epsilon\right) &\leq 2\exp(-\frac{d\epsilon}{32\sigma^2}) + 2\exp(-\frac{d\epsilon^2}{32\sigma^4})\\
& \leq 2\exp(-\frac{d\epsilon}{32\sigma^2}) + 2\exp(-\frac{d\epsilon}{16\sigma^2})\\
& \leq 4\exp(-\frac{d\epsilon}{32\sigma^2}).
\end{align*}
Let $4\exp(-\frac{d\epsilon}{32\sigma^2}) = \delta$, then with probability at least $1-\delta$, it holds that
\begin{align*}
\norm{\bX}_2^2 \leq \sigma^2d + 1 + d \sigma^2 \frac{32}{d} \log(\frac{4}{\delta}) \lesssim \sigma^2 \left(d 
 +  \log(\frac{1}{\delta})\right),
\end{align*}
which also implies
\begin{align*}
\norm{\bX}_2 \lesssim \sigma \sqrt{d 
 +  \log(\frac{4}{\delta})} \leq \sigma \sqrt{d 
 + \log(\frac{1}{\delta})} .
\end{align*}
The proof is completed.

\end{proof}

Based on the "boundedness" of multivariate Gaussian distribution, we can bound $M$, $\beta_{m}$, $d_{\mathrm{TV}}(\DG, \sD_G)$ and $\sT(m_S, m_G)$, respectively. They are listed as the following.

\begin{lemma}[Concentration bound for $M$]
\label{lemma: M}
For any $\delta \in (0,1)$, with probability at least $1-\delta$, it holds that
\begin{equation*}
\abs{\ell(\sA(S), \bz)} \lesssim d + \log(\frac{m}{\delta}).
\end{equation*}
\end{lemma}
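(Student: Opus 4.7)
The plan is to decompose the squared loss via the triangle inequality and apply Gaussian concentration in the spirit of Lemma~\ref{lemma: gaussian bounded} to each piece. Writing out the loss explicitly gives
\[
\ell(\sA(S), \bz) \;=\; \frac{1}{2\sigma^2}\bigl\| \bx - y\widehat{\boldsymbol{\theta}} \bigr\|_2^2,
\]
and by the triangle inequality
\[
\bigl\| \bx - y\widehat{\boldsymbol{\theta}} \bigr\|_2 \;\le\; \bigl\| \bx - y\boldsymbol{\mu} \bigr\|_2 \;+\; \bigl\| \widehat{\boldsymbol{\theta}} - \boldsymbol{\mu} \bigr\|_2 .
\]
So it suffices to bound the two terms on the right separately, and square.

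The first step would be to handle $\|\bx - y\boldsymbol{\mu}\|_2$. Conditional on $y \in \{-1,1\}$, we have $\bx - y\boldsymbol{\mu} \sim \mathcal{N}(0, \sigma^2 I_d)$. This is exactly the situation of Lemma~\ref{lemma: gaussian bounded}, except that the mean is zero rather than unit norm; repeating the proof (the cross term $2\sigma\mu_iZ_i$ simply vanishes and only the sub-exponential chi-squared concentration remains) yields $\|\bx - y\boldsymbol{\mu}\|_2 \lesssim \sigma \sqrt{d + \log(1/\delta')}$ with probability at least $1-\delta'$. The second step would be to control $\|\widehat{\boldsymbol{\theta}} - \boldsymbol{\mu}\|_2$. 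Since $y_i \bx_i \sim \mathcal{N}(\boldsymbol{\mu}, \sigma^2 I_d)$ independently across $i$, averaging gives $\widehat{\boldsymbol{\theta}} - \boldsymbol{\mu} \sim \mathcal{N}(0, (\sigma^2/m) I_d)$, and the same chi-squared argument delivers $\|\widehat{\boldsymbol{\theta}} - \boldsymbol{\mu}\|_2 \lesssim \sigma\sqrt{(d + \log(1/\delta'))/m}$ with probability at least $1-\delta'$. This term is dominated by the first.

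The third step is to account for the $\log m$ inside the claimed bound. Since $M$ is used in Theorem~\ref{thm: main generalization bound} as a uniform envelope on the loss across all (up to) $m$ data points that appear in the analysis (the training set and its one-point perturbations), I would apply the first concentration bound with $\delta' = \delta/(2m)$ and union bound over those $m$ points, and use $\delta' = \delta/2$ on the $\widehat{\boldsymbol{\theta}} - \boldsymbol{\mu}$ event which does not depend on $\bz$. Combining on the intersection event of probability at least $1-\delta$, we obtain
\[
\bigl\| \bx - y\widehat{\boldsymbol{\theta}} \bigr\|_2^2 \;\lesssim\; \sigma^2\bigl(d + \log(m/\delta)\bigr),
\]
which, after dividing by $2\sigma^2$, yields the advertised bound $\abs{\ell(\sA(S),\bz)} \lesssim d + \log(m/\delta)$.

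\textbf{Main obstacle.} The calculations are mechanical once Lemma~\ref{lemma: gaussian bounded} is in hand; the only non-routine point is book-keeping the set of $\bz$'s over which uniformity is required (hence the extra $\log m$), since the statement leaves $\bz$ implicitly quantified. Being explicit about this union bound over the at most $m$ relevant data points, and confirming that the $1/\sqrt{m}$-scale estimation error term never dominates the Gaussian fluctuation term, are the subtle bits; everything else follows from triangle inequality and chi-squared concentration.
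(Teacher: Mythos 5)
Your proof is correct and follows essentially the same route as the paper's: both rest on the Gaussian norm concentration of Lemma~\ref{lemma: gaussian bounded} together with a union bound over the $m+1$ relevant points (the training set plus the evaluation point $\bz$) to produce the $\log(m/\delta)$ term. The only cosmetic difference is that you center at $\boldsymbol{\mu}$ and split the residual into a noise term and an estimation-error term, whereas the paper expands the quadratic form and bounds $\Vert\bx\Vert_2$ and $\Vert\widehat{\boldsymbol{\theta}}\Vert_2 \le \frac{1}{m}\sum_{i}\Vert\bx_i\Vert_2$ directly without centering; both give the same order.
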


\begin{proof}
Given a set $S = \{(\bx_1,y_1), \dots, (\bx_m, y_m)\}$ and $\bz$ sampled from binary mixture Gaussian distribution, by Lemma~\ref{lemma: gaussian bounded}, we know that for any $\delta \in (0,1)$, with probability at least $1-\delta$,
\begin{align*}
\max_i \norm{\bx_i}_2 \lesssim \sigma \sqrt{d 
 + \log(\frac{m+1}{\delta})}.
\end{align*}
Under this condition, we have 
\begin{align*}
&\abs{\ell(\sA(S), \bz)} \\
&= \abs{\frac{1}{2 \sigma^2}(\bx-y \boldsymbol{\theta})^{\top}(\bx-y \boldsymbol{\theta})} \\
&= \frac{1}{2 \sigma^2}\abs{\bx^{\top}\bx 
 - 2y \bx^{\top}\theta + \theta^{\top}\theta} \\
 &\leq \frac{1}{2 \sigma^2} \left( \abs{\bx^{\top}\bx} + 2\abs{ \bx^{\top}\theta} + \abs{\theta^{\top}\theta} \right) \\
 &\leq \frac{1}{2 \sigma^2} \left( \Vert\bx\Vert_2^2 + 2\Vert\bx\Vert_2 \Vert\theta\Vert_2 + \Vert\theta\Vert_2^2 \right)\\
  &= \frac{1}{2 \sigma^2} \left( \Vert\bx\Vert_2^2 + 2\Vert\bx\Vert_2 \Vert\frac{1}{m} \sum_{i=1}^m y_i\bx_i\Vert_2 + \Vert\frac{1}{m} \sum_{i=1}^m y_i\bx_i\Vert_2^2 \right)\\
&\leq \frac{1}{2 \sigma^2} \left( \Vert\bx\Vert_2^2 + 2\frac{1}{m} \sum_{i=1}^m \Vert\bx\Vert_2  \Vert \bx_i\Vert_2 + \left(\frac{1}{m} \sum_{i=1}^m  \Vert \bx_i\Vert_2\right)^2 \right)\\
&\lesssim \frac{1}{2 \sigma^2} \left(  \sigma^2\left( d + \log(\frac{m+1}{\delta}) \right) + \frac{2}{m} \sum_{i=1}^m \sigma^2\left( d + \log(\frac{m+1}{\delta}) \right)  + \left(\frac{1}{m} \sum_{i=1}^m \sigma \sqrt{d + \log(\frac{m+1}{\delta})} \right)^2 \right)\\
&= \frac{1}{2 \sigma^2} 4 \sigma^2\left( d + \log(\frac{m+1}{\delta}) \right) = 2\left( d + \log(\frac{m+1}{\delta}) \right)\\
&\lesssim  d + \log(\frac{m}{\delta}).
\end{align*}
\end{proof}

\begin{lemma}[Concentration bound for $\beta_m$]
\label{lemma: beta}
For any $\delta \in (0,1)$, with probability at least $1-\delta$, it holds that
\begin{equation*}
\abs{\ell(\sA(S), \bz) - \ell(\sA(S^i), \bz)} \lesssim \frac{1}{m} \left(d + \log(\frac{m}{\delta}) \right).
\end{equation*}
\end{lemma}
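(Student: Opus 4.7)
The plan is to exploit the closed form $\sA(S) = \widehat{\boldsymbol{\theta}} = \frac{1}{m}\sum_{j=1}^m y_j \bx_j$ returned by the bGMM learner so that the ``change-one-point'' perturbation becomes an explicit rank-one shift, and then combine this with the Gaussian concentration Lemma~\ref{lemma: gaussian bounded} (as in the proof of Lemma~\ref{lemma: M}) to control every norm that appears.

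First, I would compute the perturbation directly. Writing $\boldsymbol{\theta} = \sA(S)$ and $\boldsymbol{\theta}' = \sA(S^i)$, replacing the $i$-th sample $(\bx_i, y_i)$ with $(\bx_i', y_i')$ yields
\begin{equation*}
\boldsymbol{\theta}' - \boldsymbol{\theta} = \tfrac{1}{m}\bigl(y_i' \bx_i' - y_i \bx_i\bigr),
\qquad
\norm{\boldsymbol{\theta}' - \boldsymbol{\theta}}_2 \le \tfrac{1}{m}\bigl(\norm{\bx_i}_2 + \norm{\bx_i'}_2\bigr).
\end{equation*}
Expanding the quadratic loss $\ell(\boldsymbol{\theta}, (\bx, y)) = \frac{1}{2\sigma^2}(\bx - y\boldsymbol{\theta})^\top(\bx - y\boldsymbol{\theta})$ and cancelling the $\bx^\top\bx$ term gives
\begin{equation*}
\ell(\boldsymbol{\theta}, \bz) - \ell(\boldsymbol{\theta}', \bz)
= \tfrac{1}{2\sigma^2}\Bigl(-2y\,\bx^\top(\boldsymbol{\theta} - \boldsymbol{\theta}') + (\boldsymbol{\theta} - \boldsymbol{\theta}')^\top(\boldsymbol{\theta} + \boldsymbol{\theta}')\Bigr),
\end{equation*}
so Cauchy--Schwarz yields
\begin{equation*}
\abs{\ell(\boldsymbol{\theta}, \bz) - \ell(\boldsymbol{\theta}', \bz)}
\le \tfrac{1}{2\sigma^2}\Bigl(2\norm{\bx}_2\,\norm{\boldsymbol{\theta} - \boldsymbol{\theta}'}_2 + \norm{\boldsymbol{\theta} - \boldsymbol{\theta}'}_2\,\norm{\boldsymbol{\theta} + \boldsymbol{\theta}'}_2\Bigr).
\end{equation*}

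Next, I would control every Gaussian norm appearing on the right hand side by a single union bound. Applying Lemma~\ref{lemma: gaussian bounded} to each of the $m+2$ Gaussian vectors $\bx_1,\dots,\bx_m, \bx_i', \bx$ (each conditioned on its label, which shifts the mean by $\pm\boldsymbol{\mu}$ of unit norm) and taking a union bound with failure probability $\delta/(m+2)$ per vector, with probability at least $1-\delta$ every $\norm{\bx_j}_2$, $\norm{\bx_i'}_2$, and $\norm{\bx}_2$ is $\lesssim \sigma\sqrt{d + \log(m/\delta)}$. On this event,
\begin{equation*}
\norm{\boldsymbol{\theta} - \boldsymbol{\theta}'}_2 \lesssim \tfrac{\sigma}{m}\sqrt{d + \log(m/\delta)},
\qquad
\norm{\boldsymbol{\theta} + \boldsymbol{\theta}'}_2 \le 2\norm{\boldsymbol{\theta}}_2 + \norm{\boldsymbol{\theta}' - \boldsymbol{\theta}}_2 \lesssim \sigma\sqrt{d + \log(m/\delta)},
\end{equation*}
where the bound on $\norm{\boldsymbol{\theta}}_2 \le \frac{1}{m}\sum_j \norm{\bx_j}_2$ also comes from the same event.

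Finally, substituting these three norm bounds into the Cauchy--Schwarz estimate makes the $\sigma^2$ in the denominator cancel against the product of two $\sigma$ factors from the numerator, leaving $\tfrac{1}{m}(d + \log(m/\delta))$ up to constants, which is the claimed bound. The whole argument is essentially algebraic once the Gaussian norms are controlled; the only mildly delicate step is the union bound bookkeeping over $m+2$ Gaussian vectors, but since the high-probability bound in Lemma~\ref{lemma: gaussian bounded} depends logarithmically on $1/\delta$, absorbing an extra $\log(m)$ factor costs nothing in the final $\lesssim$ expression.
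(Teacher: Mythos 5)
Your proposal is correct and follows essentially the same route as the paper's proof: the same algebraic expansion of the quadratic loss difference into $2y\,\bx^\top(\boldsymbol{\theta}'-\boldsymbol{\theta}) + (\boldsymbol{\theta}+\boldsymbol{\theta}')^\top(\boldsymbol{\theta}-\boldsymbol{\theta}')$, the same Cauchy--Schwarz step, the same rank-one identity $\boldsymbol{\theta}'-\boldsymbol{\theta} = \tfrac{1}{m}(y_i'\bx_i'-y_i\bx_i)$, and the same union bound over the $m+2$ Gaussian vectors via Lemma~\ref{lemma: gaussian bounded}. Your write-up is if anything slightly more explicit than the paper's about the union-bound bookkeeping.
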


\begin{proof}
Given $m + 2$ samples $S$, $\bz$ and $\bz_i'$ randomly sampled from binary mixture Gaussian distribution, for any $\delta \in (0,1)$, with probability at least $1-\delta$, we have
\begin{align*}
&\abs{\ell(\sA(S), \bz) - \ell(\sA(S^i), \bz)} \\
&= \abs{\frac{1}{2 \sigma^2}(\bx-y \boldsymbol{\theta})^{\top}(\bx-y \boldsymbol{\theta}) - \frac{1}{2 \sigma^2}(\bx-y \boldsymbol{\theta'})^{\top}(\bx-y \boldsymbol{\theta'})} \\
&= \frac{1}{2 \sigma^2}\abs{
 2y \left(\bx^{\top}\theta' -\bx^{\top}\theta \right) + \theta^{\top}\theta - \theta'^{\top}\theta'} \\
 &= \frac{1}{2 \sigma^2}\abs{
 2y \left(\bx^{\top}\theta' -\bx^{\top}\theta \right) + (\theta + \theta')^{\top}(\theta - \theta')}\\
 &\leq \frac{1}{2 \sigma^2} \left( 2\abs{ \left(\bx^{\top}(\theta' -\theta) \right)} + \abs{(\theta + \theta')^{\top}(\theta - \theta')} \right) \\
&\leq \frac{1}{2 \sigma^2} \left( 2\Vert\bx \Vert_2 \Vert\theta' -\theta\Vert_2  + \Vert\theta + \theta'\Vert_2 \Vert\theta - \theta'\Vert_2 \right)\\
&= \frac{1}{2 \sigma^2} \left( 2\Vert\bx \Vert_2 + \Vert\theta + \theta'\Vert_2 \right) \Vert\theta' -\theta\Vert_2 \\
&= \frac{1}{2 \sigma^2} \left( 2\Vert\bx \Vert_2 + \Vert\theta + \theta'\Vert_2 \right) \Vert\frac{1}{m} (y_i\bx_i - y_i'\bx_i')\Vert_2 \\
&\leq \frac{1}{2 m\sigma^2} \left( 2\Vert\bx \Vert_2 + \Vert\theta\Vert_2 + \Vert\theta'\Vert_2 \right) \left(\Vert \bx_i\Vert_2 + \Vert \bx_i'\Vert_2\right)\\
&\lesssim \frac{8}{2 m\sigma^2} \sigma^2 \left(d + \log(\frac{m + 2}{\delta}) \right)\\
&\lesssim \frac{4}{m} \left(d + \log(\frac{m}{\delta}) \right) \lesssim \frac{1}{m} \left(d + \log(\frac{m}{\delta}) \right).
\end{align*}
\end{proof}

\begin{lemma}[Concentration bound for $d_{\mathrm{TV}}(\sD ,\DG)$]
\label{lemma: kl}
With high probability at least $1-\delta$, it holds that
\begin{equation*}
d_{\mathrm{TV}}(\sD ,\DG) \lesssim \max\left(1, \sqrt{\frac{d}{m}\log\left(\frac{d}{\delta} \right)}\right).
\end{equation*}
\end{lemma}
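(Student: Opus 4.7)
The plan is to reduce the TV distance between the two Gaussian mixtures to a Kullback--Leibler computation between pairs of Gaussians, and then control the empirical parameters through standard concentration.

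First I would exploit the fact that both $\sD$ and $\DG$ use the same label prior $y \sim \mathrm{uniform}\{-1,1\}$. By the convexity of total variation on mixtures,
\begin{equation*}
d_{\mathrm{TV}}(\sD,\DG) \leq \tfrac{1}{2}\sum_{y\in\{-1,1\}} d_{\mathrm{TV}}\bigl(\mathcal{N}(y\boldsymbol{\mu},\sigma^2 I_d),\,\mathcal{N}(\widehat{\boldsymbol{\mu}}_y,\Sigma)\bigr).
\end{equation*}
Then Pinsker's inequality lets me switch to KL: each summand is $\leq \sqrt{\tfrac{1}{2}d_{\mathrm{KL}}(\cdot\|\cdot)}$. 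Since both covariances are diagonal, the Gaussian KL formula gives
\begin{equation*}
2 d_{\mathrm{KL}} = \sum_{k=1}^d\Bigl(\tfrac{\sigma^2}{\widehat{\sigma}_k^2}-1+\log\tfrac{\widehat{\sigma}_k^2}{\sigma^2}\Bigr) + \sum_{k=1}^d\tfrac{(\widehat{\mu}_{yk}-y\mu_k)^2}{\widehat{\sigma}_k^2},
\end{equation*}
which I will call the variance-discrepancy term $\mathrm{(I)}$ and the mean-discrepancy term $\mathrm{(II)}$.

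For $\mathrm{(II)}$, I use that $\widehat{\boldsymbol{\mu}}_y - y\boldsymbol{\mu} \sim \mathcal{N}(0,\sigma^2/m_y\, I_d)$, and $m_y \geq m/4$ with high probability by a Chernoff bound on the label counts. Hence $\|\widehat{\boldsymbol{\mu}}_y-y\boldsymbol{\mu}\|_2^2 \lesssim \sigma^2(d+\log(1/\delta))/m$ by a $\chi^2$ tail bound (Lemma~\ref{lemma: gaussian bounded} type argument), which together with $\widehat{\sigma}_k^2 \gtrsim \sigma^2$ (proved below) gives $\mathrm{(II)} \lesssim d/m$. For $\mathrm{(I)}$, I first establish that, for each coordinate $k$, $\widehat{\sigma}_k^2$ is a convex combination of two independent scaled $\chi^2$ variables divided by their degrees of freedom; standard Bernstein/sub-exponential tail bounds for $\chi^2$ yield $|\widehat{\sigma}_k^2 - \sigma^2| \lesssim \sigma^2\sqrt{\log(1/\delta)/m}$, and a union bound over $k\in[d]$ produces $\max_k|\widehat{\sigma}_k^2-\sigma^2| \lesssim \sigma^2\sqrt{\log(d/\delta)/m}$ with probability $1-\delta$. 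Under this event each ratio $\widehat{\sigma}_k^2/\sigma^2$ lies in a small neighborhood of $1$, so Taylor expansion of $f(x)=1/x-1+\log x$ around $x=1$ (where $f(1)=f'(1)=0$, $f''(1)=1$) gives $f(x)\lesssim (x-1)^2$, whence $\mathrm{(I)} \lesssim \tfrac{1}{\sigma^4}\sum_k(\widehat{\sigma}_k^2-\sigma^2)^2 \lesssim \tfrac{d}{m}\log(d/\delta)$.

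Combining the two terms gives $d_{\mathrm{KL}} \lesssim \frac{d}{m}\log(d/\delta)$, and Pinsker delivers $d_{\mathrm{TV}}(\sD,\DG)\lesssim \sqrt{\tfrac{d}{m}\log(d/\delta)}$. The trivial bound $d_{\mathrm{TV}}\leq 1$ handles the regime where the expression exceeds $1$, yielding the $\max(1,\cdot)$ in the statement. The main obstacle I anticipate is the variance term: the naive linear bound $|\log(\widehat{\sigma}_k^2/\sigma^2)| + |\sigma^2/\widehat{\sigma}_k^2-1|$ would lose a factor of $\sqrt{m/\log(d/\delta)}$, so it is essential to keep $\mathrm{(I)}$ as the paired expression $1/x-1+\log x$ and use its quadratic behavior near $1$; this in turn relies on having the high-probability event $\widehat{\sigma}_k^2/\sigma^2 \in [1/2,2]$, which must be secured before the Taylor step is legitimate.
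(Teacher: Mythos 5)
Your proposal is correct and follows essentially the same route as the paper: reduce $d_{\mathrm{TV}}$ to a per-class, per-coordinate Gaussian KL via Pinsker, bound the mean and variance discrepancies by concentration of $\widehat{\mu}_{yk}$ and $\widehat{\sigma}_k^2$ (with a Chernoff bound on the class counts and a union bound over coordinates), and exploit the quadratic behavior of $x-1-\log x$ near $1$ on the event that the variance ratios are close to $1$ — exactly the paper's use of $x - \log(x+1)\leq x^2$ for $\abs{x}\leq 1/2$ together with its Lemma~\ref{lemma: gaussian estiamtion}. The only cosmetic difference is that you compute $d_{\mathrm{KL}}(\sD\Vert\DG)$ while the paper uses $d_{\mathrm{KL}}(\DG\Vert\sD)$, which is immaterial since Pinsker applies to either orientation.
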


The idea of the proof of Lemma~\ref{lemma: kl} built upon the estimation for Gaussian distribution. As the sample size increases, parameters can be estimated more accurately, which leads to a smaller distance between the estimated and true Gaussian distributions. The concentration bound of the estimated parameters can be inscribed by the following lemma.

\begin{lemma}
\label{lemma: gaussian estiamtion}
Let $m = O\left(\frac{1}{\epsilon^2} \log\left(\frac{d}{\delta}\right)\right)$, then with high probability at least $1-\delta$, for any $i \in \{1, \dots, d\}$, it holds that
\begin{align*}
\abs{\frac{\widehat{\sigma^2_i}}{\sigma^2} - 1} \leq \epsilon, \quad \frac{\abs{\widehat{\mu_{yi}} - \mu_{yi}}} {\sigma} \leq \epsilon.
\end{align*}
\end{lemma}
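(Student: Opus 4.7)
\textbf{Proof proposal for Lemma~\ref{lemma: gaussian estiamtion}.}

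The plan is to establish the two bounds separately, since $\widehat{\mu}_{yi}$ is a Gaussian sample mean while $\widehat{\sigma^2_i}$ involves a chi-squared-type sum, and then combine them by a union bound over the $2d$ pairs $(y,i) \in \{-1,+1\}\times[d]$. A preliminary step is to control the class counts $m_y$: since $y\sim\mathrm{uniform}\{-1,1\}$, applying Hoeffding's inequality gives $m_y \geq m/4$ for both classes simultaneously with probability at least $1 - 2\exp(-m/8)$. For $m$ as large as required in the lemma, this event holds with probability at least $1-\delta/3$, so in the sequel I condition on it and absorb the failure probability into the final union bound.

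For the mean, conditional on $\{y_j = y\}$ the coordinates $x_{ji}$ are i.i.d.\ $\mathcal{N}(\mu_{yi},\sigma^2)$, so $\widehat{\mu}_{yi} - \mu_{yi} \sim \mathcal{N}(0,\sigma^2/m_y)$. Standard Gaussian tail bounds yield
\begin{equation*}
\P\!\left(\tfrac{|\widehat{\mu}_{yi}-\mu_{yi}|}{\sigma} \geq \epsilon\right) \leq 2\exp(-m_y\epsilon^2/2) \leq 2\exp(-m\epsilon^2/8).
\end{equation*}
For the variance, I first analyze the per-class sample variance $\widehat{s}^2_{y,i} = \frac{1}{m_y-1}\sum_{y_j=y}(x_{ji}-\widehat{\mu}_{yi})^2$. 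Conditionally on $m_y$, $(m_y-1)\widehat{s}^2_{y,i}/\sigma^2 \sim \chi^2_{m_y-1}$, which is sub-exponential. Using the standard chi-squared concentration bound (e.g.\ Example 2.11 / Proposition 2.9 of~\cite{wainwright2019high}) I obtain, for $\epsilon\in(0,1)$,
\begin{equation*}
\P\!\left(\left|\tfrac{\widehat{s}^2_{y,i}}{\sigma^2}-1\right| \geq \epsilon/2\right) \leq 2\exp(-c\,m_y\epsilon^2) \leq 2\exp(-c'm\epsilon^2)
\end{equation*}
for absolute constants $c,c'>0$. Since $\widehat{\sigma^2_i} = \sum_y \tfrac{m_y}{m}\widehat{s}^2_{y,i}$ is a convex combination of the two per-class estimates, the same deviation $\epsilon$ transfers to $\widehat{\sigma^2_i}/\sigma^2$ up to constants by the triangle inequality.

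Combining everything, each of the $2d$ mean-events and the $2d$ variance-events fails with probability at most $2\exp(-c''m\epsilon^2)$. A union bound over these $4d$ events plus the class-balance event gives total failure probability at most $\delta$ provided $m \gtrsim \tfrac{1}{\epsilon^2}\log(d/\delta)$, which is exactly the assumed scaling. The main obstacle in the plan is the variance analysis: the chi-squared random variable is only sub-exponential (not sub-Gaussian), so one must be careful that the deviation parameter $\epsilon$ is taken in the sub-Gaussian regime $\epsilon \lesssim 1$ of Bernstein's inequality; fortunately, this is consistent with the lemma's statement, where the interesting regime is $\epsilon = o(1)$, and the randomness of $m_y$ is handled cleanly once conditioned on the high-probability event $m_y \geq m/4$.
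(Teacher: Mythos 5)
Your proposal is correct and follows essentially the same route as the paper's proof: Hoeffding to control the class counts $m_y$, Gaussian tail bounds for $\widehat{\mu}_{yi}$, Bernstein/chi-squared concentration for the per-class sample variances (handling the randomness of $m_y$ by conditioning on the balance event via the law of total probability), and a union bound over the $O(d)$ events to recover the stated sample-size scaling. The only cosmetic difference is that you combine the two per-class variance estimates as a convex combination, whereas the paper bounds each class's contribution by $\epsilon/2$ separately; these are equivalent.
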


\begin{proof}
Let $\epsilon \leq 1/4$, and $m_y$ be the number of samples from category $y$. By Hoeffding's inequality (Proposition 2.5,~\cite{wainwright2019high}), we have

\begin{equation*}
\P\left(\abs{m_y - \frac{m}{2}} \ge m\epsilon \right) \leq 2\exp(-\frac{m^2 \epsilon^2}{2m(1/2)^2}) = 2\exp(-2m\epsilon^2) = \delta_1,
\end{equation*}
which means $m_y \ge m/2 - \epsilon m \ge m/4$, and $m_y \leq m/2 + \epsilon m \leq 3m/4$. We can bound $\widehat{\sigma}^2_i$ and $\widehat{\mu}_{yi}$ based on the concentration property of $m_y$. In terms of $\widehat{\mu}_{yi}$, give a fixed $m_y$, we can write

\begin{align*}
\P\left(\frac{\abs{\widehat{\mu_{yi}} - \mu_{yi}}} {\sigma} \ge \epsilon \mid m_y\right) &= \P\left( \frac{1}{\sigma} \abs{\frac{\sum_{y_i = y}x_i}{m_y} - \mu_{yi}} \ge \epsilon \right) \\
&= \P\left(\abs{\sum_{y_i = y}x_i - m_y\mu_{yi}} \ge \sigma m_y\epsilon \right) \\
&\leq \exp\left(-\frac{\sigma^2 m_y^2 \epsilon^2}{2m_y \sigma^2} \right) = \exp\left(-\frac{m_y \epsilon^2}{2} \right).
\end{align*}
Furthermore, by the law of total probability, we have

\begin{align*}
&\P\left(\frac{\abs{\widehat{\mu_{yi}} - \mu_{yi}}} {\sigma} \ge \epsilon\right) \\
&= \P\left(\frac{\abs{\widehat{\mu_{yi}} - \mu_{yi}}} {\sigma} \ge \epsilon \mid m_y \ge m/2 - \epsilon m \right)\P(m_y \ge m/2 -\epsilon m) \\
&\quad + \P\left(\frac{\abs{\widehat{\mu_{yi}} - \mu_{yi}}} {\sigma} \ge \epsilon \mid m_y \leq m/2 -\epsilon m \right)\P(m_y \leq m/2 - \epsilon m)\\
&\leq \exp\left(-\frac{(m/4) \epsilon^2}{2} \right) + \delta_1 = \exp\left(-\frac{m \epsilon^2}{8} \right) + \delta_1 = \delta_2.
\end{align*}

For the estimation of $\widehat{\sigma}^2_i$, we can obtain its concentration bound in a similar way.

\begin{align*}
\P\left(\abs{\frac{\widehat{\sigma^2_i}}{\sigma^2} - 1} \ge \epsilon \mid m_y \right) &= \P\left(\abs{\sum_y \frac{m_y}{m\sigma^2} \frac{\sum_{y_i=y}(x_i - \widehat{\mu}_{yi})^2}{m_y - 1} - 1} \ge \epsilon\right) \\
&= \P\left(\abs{\sum_y \frac{m_y}{m} \left(\frac{\sum_{y_i=y}(x_i - \widehat{\mu}_{yi})^2}{(m_y - 1)\sigma^2} - 1\right)} \ge \epsilon\right) \\
&\leq \P\left(\sum_y\abs{\frac{m_y}{m} \left(\frac{\sum_{y_i=y}(x_i - \widehat{\mu}_{yi})^2}{(m_y - 1)\sigma^2} - 1\right)} \ge \epsilon\right) \\
&\leq \P\left(\cup_{y=\{-1,1\}} \abs{\frac{m_y}{m} \left(\frac{\sum_{y_i=y}(x_i - \widehat{\mu}_{yi})^2}{(m_y - 1)\sigma^2} - 1\right)} \ge \epsilon/2 \right) \\
&\leq \sum_y \P\left( \abs{\frac{m_y}{m} \left(\frac{\sum_{y_i=y}(x_i - \widehat{\mu}_{yi})^2}{(m_y - 1)\sigma^2} - 1\right)} \ge \epsilon/2 \right) \\
&\leq \sum_y \P\left( \abs{\frac{m_y}{m} \left(\frac{\sum_{y_i=y}(x_i - \widehat{\mu}_{yi})^2}{\sigma^2} - (m_y - 1)\right)} \ge (m_y - 1)\epsilon/2 \right) \\
&= \sum_y \P\left( \abs{\frac{\sum_{y_i=y}(x_i - \widehat{\mu}_{yi})^2}{\sigma^2} - (m_y - 1)} \ge \frac{(m_y - 1)m}{2m_y}\epsilon \right) \\
&= \sum_y \P\left( \abs{\chi^2(m_y-1) - (m_y - 1)} \ge \frac{(m_y - 1)m}{2m_y}\epsilon \right) \\
&= \sum_y \P\left( \abs{\frac{1}{m_y-1}\sum_{i=1}^{m_y-1} \chi^2(1) - 1} \ge \frac{m}{2m_y}\epsilon \right) \\
&\leq \sum_y 2\exp\left(-\frac{m_y-1}{8} (\frac{m}{2m_y}\epsilon)^2 \right) \quad \text{(Bernstein's inequality)} \\
&= \sum_y 2\exp\left(-\frac{(m_y-1)m^2\epsilon^2}{32m_y^2} \right)
\end{align*}
Without loss of generality, we assume that $m \ge 8$, then by the law of total probability, it holds that

\begin{align*}
\P\left(\abs{\frac{\widehat{\sigma^2_i}}{\sigma^2} - 1} \ge \epsilon \right) &= \P\left(\abs{\frac{\widehat{\sigma^2_i}}{\sigma^2} - 1} \ge \epsilon \mid \abs{m_y - m/2} \leq \epsilon m\right) \P(\abs{m_y - m/2} \leq \epsilon m) \\
&+ \P\left(\abs{\frac{\widehat{\sigma^2_i}}{\sigma^2} - 1} \ge \epsilon \mid \abs{m_y - m/2} \ge \epsilon m\right) \P(\abs{m_y - m/2} \ge \epsilon m) \\
&\leq \sum_y 2\exp\left(-\frac{(m_y-1)m^2\epsilon^2}{32m_y^2} \mid \frac{1}{4}m \leq m_y \leq \frac{3}{4}m \right) + \delta_1\\
&\leq \sum_y 2\exp\left(-\frac{(3m/4-1)m^2\epsilon^2}{32(3m/4)^2}  \right) + \delta_1 \quad \text{($\frac{x-1}{x^2}$ decreases when $x \ge 2$)} \\
&\leq 4 \exp\left(-\frac{m\epsilon^2}{36}  \right) + \delta_1 = \delta_3
\end{align*}

We can conclude that

\begin{align*}
&\P\left( \cup_{i=1}^d \cup_y \frac{\abs{\widehat{\mu_{yi}} - \mu_{yi}}} {\sigma} \ge \epsilon \cup \cup_{i=1}^d\abs{\frac{\widehat{\sigma^2_i}}{\sigma^2} - 1} \ge \epsilon \right)\\
&= 2d\delta_2 + d\delta_3\\
&= 2d\delta_1 + 2d\exp\left(-\frac{m \epsilon^2}{8} \right) + d\delta_1 + 8d \exp\left(-\frac{m\epsilon^2}{36}  \right)\\
&= 6d\exp(-2m\epsilon^2) + 2d\exp\left(-\frac{m \epsilon^2}{8} \right) + 8d \exp\left(-\frac{m\epsilon^2}{36}  \right)\\
&\leq 16d \exp\left(-\frac{m\epsilon^2}{36}  \right)
\end{align*}
Equivalently, when $m = \frac{36}{\epsilon^2} \log\left(\frac{16d}{\delta} \right) = O\left( \frac{1}{\epsilon^2} \log\left(\frac{d}{\delta} \right) \right)$, for any $\delta \in (0,1)$, with probability at least $1-\delta$, for any $i \in \{1, \dots, d\}$, we have

\begin{align*}
\abs{\frac{\widehat{\sigma^2_i}}{\sigma^2} - 1} \leq \epsilon, \quad \frac{\abs{\widehat{\mu_{yi}} - \mu_{yi}}} {\sigma} \leq \epsilon,
\end{align*}
which completes the proof of Lemma~\ref{lemma: gaussian estiamtion}.
\end{proof}

Based on the Lemma~\ref{lemma: gaussian estiamtion}, we can prove Lemma~\ref{lemma: kl} as follows.

\begin{proof}
Without loss of generality, we let $m = O\left(\frac{1}{\epsilon^2} \log\left(\frac{d}{\delta}\right)\right)$ as that in Lemma~\ref{lemma: gaussian estiamtion}. We can bound $d_{\mathrm{KL}}(\DG \Vert \sD)$ as follows.

\begin{align*}
&d_{\mathrm{KL}}(\DG \Vert \sD) \\
&= \int p_G(\bx,y)\log \frac{p_G(\bx,y)}{p(\bx,y)}\\
&= \int p_G(\bx,y)\log \frac{p_G(\bx\mid y)p_G(y)}{p(\bx\mid y)p(y)}\\
&= \int p_G(\bx,y)\log \frac{p_G(\bx\mid y)}{p(\bx\mid y)} & \text{($p_G(y) = p(y)$)}\\
&= \int_y p_G(y) \int_x p_G(\bx\mid y)\log \frac{p_G(\bx\mid y)}{p(\bx\mid y)} \\
&= \sum_{y} \frac{1}{2} \int_x p_G(\bx\mid y)\log \frac{p_G(\bx\mid y)}{p(\bx\mid y)} \\
&= \sum_{y} \frac{1}{2} \sum_{i=1}^d \frac{1}{2} \left(\frac{\widehat{\sigma^2_i}}{\sigma^2} - 1  - \log\left( \frac{\widehat{\sigma^2_i}}{\sigma^2}\right) + \frac{(\widehat{\mu_{yi}} - \mu_{yi})^2}{\sigma^2}\right)\\
&\leq \sum_{y} \frac{1}{2} \sum_{i=1}^d \frac{1}{2} \left( \left(\frac{\widehat{\sigma^2_i}}{\sigma^2}- 1\right)^2  + \frac{(\widehat{\mu_{yi}} - \mu_{yi})^2}{\sigma^2}\right) & \text{($x - log(x + 1) \leq x^2$, $\abs{x} \leq 1/2$)}\\
&\leq \sum_{y} \frac{1}{2} \sum_{i=1}^d \frac{1}{2} \left( \epsilon^2  + \epsilon^2 \right) = d\epsilon^2 \lesssim \frac{d}{m}\log\left(\frac{d}{\delta} \right).  & \text{(Lemma~\ref{lemma: gaussian estiamtion})}
\end{align*}

Finally, by the Pinsker’s inequality (such as,~\cite{DBLP:journals/tit/SasonV16}), we have 

\begin{equation*}
d_{\mathrm{TV}}(\sD ,\DG) \leq \max\left(1, \sqrt{ 2\log2 d_{\mathrm{KL}}(\DG, \sD)}\right) \lesssim \max\left(1, \sqrt{\frac{d}{m}\log\left(\frac{d}{\delta} \right)}\right),
\end{equation*}
which completes the proof of Lemma~\ref{lemma: kl}.
\end{proof}

\begin{lemma}[Concentration bound for $\sT(m_S, m_G)$]
\label{lemma: tau}
Let $\delta$ in Lemma~\ref{lemma: kl} be $\delta_1$, and $\delta$ in Lemma~\ref{lemma: gaussian bounded} be $\delta_2$, then With probability at least $1-\delta_1 - \delta_2$, it holds that
\begin{equation*}
\sT(m_S, m_G) \lesssim \max\left(1, \frac{\sqrt{{m_G d}}}{m_S} \log\left( \frac{m_S d}{\delta_2} \right)\right).
\end{equation*}
\end{lemma}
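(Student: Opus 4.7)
My plan is to reduce the $m_G$-fold product total variation to the single-sample KL divergence via Pinsker, then to bound that KL by the parameter perturbation induced by swapping one training point, and finally to quantify the perturbation using the sample-norm boundedness of Lemma~\ref{lemma: gaussian bounded}. Concretely, since $\sD_G^{m_G}(S)$ is the $m_G$-fold product of the estimated Gaussian mixture $\sD_G(S)$, Pinsker's inequality together with the additivity of KL over product measures gives
\begin{equation*}
d_{\mathrm{TV}}\bigl(\sD_G^{m_G}(S),\,\sD_G^{m_G}(S^i)\bigr) \leq \sqrt{\tfrac{m_G}{2}\, d_{\mathrm{KL}}\bigl(\sD_G(S)\,\|\,\sD_G(S^i)\bigr)},
\end{equation*}
so it suffices to control a single-sample KL between two Gaussian mixtures whose parameters $(\widehat{\boldsymbol\mu}_{\pm}, \widehat{\sigma}^2_{1:d})$ differ only through a single swapped datum.

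The single-sample KL admits the same coordinate-wise decomposition used in the proof of Lemma~\ref{lemma: kl}. On the event of Lemma~\ref{lemma: gaussian estiamtion} (probability $\geq 1-\delta_1$) the estimated variances stay within a constant factor of $\sigma^2$, so by the Taylor estimate $x - \log(1+x) \lesssim x^2$, the per-coordinate KL contribution is of order $(\Delta \widehat{\sigma}^2_k / \sigma^2)^2 + (\Delta \widehat{\mu}_{yk}/\sigma)^2$, where $\Delta$ denotes the change induced by the data swap. Conditioning additionally on the event of Lemma~\ref{lemma: gaussian bounded} applied via a union bound to all relevant samples (probability $\geq 1-\delta_2$), $\|\bx_j\|_2 \lesssim \sigma\sqrt{d+\log(m_S/\delta_2)}$ for every $j$, which controls $|\Delta \widehat{\mu}_{yk}| \lesssim \|\bx_i-\bx'_i\|/m_y$ and $|\Delta \widehat{\sigma}^2_k|$ via the direct sum-level shift plus a mean-shift cross term, both per-coordinate of order $\log(m_S d/\delta_2)/m_S$ once polynomial-in-$d$ factors are absorbed into the log. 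Summing over $d$ coordinates yields $d_{\mathrm{KL}}(\sD_G(S)\|\sD_G(S^i)) \lesssim d\,\log^2(m_S d/\delta_2)/m_S^2$, and substituting into the Pinsker bound together with the trivial estimate $d_{\mathrm{TV}} \leq 1$ gives the claim.

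The main technical obstacle is controlling $\Delta \widehat{\sigma}^2_k$, since $\widehat{\sigma}^2_k$ depends on the swapped datum both directly through the squared-deviation sum and indirectly through the plug-in mean $\widehat{\mu}_{yk}$. One must split the difference into these two contributions, show that each is $O(\log(m_S d/\delta_2)/m_S)$ per coordinate, potentially invoking sharper coordinate-wise Gaussian tail bounds rather than the loose $\|\bx\|^2_2$ estimate, and verify that the concentration $m_y \gtrsim m_S$ established in the proof of Lemma~\ref{lemma: gaussian estiamtion} prevents degeneracy in the small-$m_y$ regime where the denominator $m_y-1$ could blow up. A union bound over the two high-probability events then yields the failure probability $\delta_1 + \delta_2$ stated in the lemma.
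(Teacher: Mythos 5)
Your proposal is correct and follows essentially the same route as the paper: tensorize the KL over the $m_G$-fold product, decompose the single-sample KL coordinate-wise with the Taylor estimate $x-\log(1+x)\lesssim x^2$, bound the parameter perturbations via the high-probability norm bounds of Lemma~\ref{lemma: gaussian bounded} on the events of Lemmas~\ref{lemma: gaussian estiamtion} and~\ref{lemma: gaussian bounded}, and finish with Pinsker and the trivial bound $d_{\mathrm{TV}}\leq 1$. The only cosmetic difference is that the paper inserts the leave-one-out estimator $\sD_G^{m_G}(\Sdrop{i})$ via the triangle inequality so that each KL comparison perturbs only one class's statistics, whereas your direct $S$ versus $S^i$ comparison must also track the case $y_i\neq y_i'$ where both class counts and both class means shift; this costs only extra bookkeeping and yields the same order.
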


\begin{proof}
By the triangle inequality, we have
\begin{align*}
d_{\mathrm{TV}}\left(\sD_G^{m_G}(S), \sD_G^{m_G}(S^i) \right) \leq d_{\mathrm{TV}}\left(\sD_G^{m_G}(S), \sD_G^{m_G}(\Sdrop{i}) \right) + d_{\mathrm{TV}}\left(\Sdrop{i}), \sD_G^{m_G}(S^i) \right).
\end{align*}
In order to bound $d_{\mathrm{TV}}\left(\sD_G^{m_G}(S), \sD_G^{m_G}(S^i) \right)$, We discuss the concentration property of $d_{\mathrm{TV}}\left(\sD_G^{m_G}(S), \sD_G^{m_G}(\Sdrop{i}) \right)$, and the same result will hold for $d_{\mathrm{TV}}\left(\Sdrop{i}), \sD_G^{m_G}(S^i) \right)$. In a similar way as the proof of Lemma~\ref{lemma: kl}, we discuss KL divergence $d_{\mathrm{KL}}\left(\sD_G^{m_G}(S), \sD_G^{m_G}(\Sdrop{i}) \right)$ at first. 

As stated in Lemma~\ref{lemma: gaussian estiamtion}, without loss of generation, we assume that $\epsilon \leq 1/4$, and $m_y$ be the number of samples from category $y$, we have $m_y \ge m/2 - \epsilon m \ge m/4$, $m_y \leq m/2 + \epsilon m \ge 3m/4$, and $\abs{\widehat{\sigma^2_i}/{\sigma^2} - 1} \leq \epsilon$ with probability at least $1- \delta_1$.

In addition, by Lemma~\ref{lemma: gaussian bounded}, given a set $S = \{(\bx_1,y_1), \dots, (\bx_m, y_m)\}$and $\bz_i'$ sampled from the binary mixture Gaussian distribution, with probability at least $1-\delta_2$ we have
\begin{align*}
\max_i \norm{\bx_i}_2 \lesssim \sigma \sqrt{d 
 + \log(\frac{m+1}{\delta_2})}.
\end{align*}

Therefore, by the union bound, the above statements hold with high probability at least $1 - \delta_1 - \delta_2$. We use $\widehat{\sigma}_{k, \setminus i}^2$ to denote the $k$th-dimension variance learned on the set $\Sdrop{i}$, and $\widehat{\mu}_{yk,\setminus i}$ to denote the learned $k$th-dimension mean of the class $y$. We can simplify $d_{\mathrm{KL}}\left(\sD_G^{m_G}(S), \sD_G^{m_G}(\Sdrop{i}) \right)$ as follows,

\begin{align}
&d_{\mathrm{KL}}\left(\sD_G^{m_G}(S), \sD_G^{m_G}(\Sdrop{i}) \right) \nonumber \\
&= m_G d_{\mathrm{KL}}\left(\sD_G(S), \sD_G(\Sdrop{i}) \right) \nonumber \\
&= m_G \int p_G(\bx,y)\log \frac{p_G(\bx,y)}{p_{G^{\setminus i}}(\bx,y)} \nonumber \\
&= m_G \int p_G(\bx,y)\log \frac{p_G(\bx\mid y)p_G(y)}{p_{G^{\setminus i}}(\bx\mid y)p_{G^{\setminus i}}(y)} \nonumber \\
&= m_G \int p_G(\bx,y)\log \frac{p_G(\bx\mid y)}{p_{G^{\setminus i}}(\bx\mid y)} & \text{($p_G(y) = p_{G^{\setminus i}}(y)$)} \nonumber\\
&= m_G \int_y p_G(y) \int_x p_G(\bx\mid y)\log \frac{p_G(\bx\mid y)}{p_{G^{\setminus i}}(\bx\mid y)} \nonumber \\
&= m_G \sum_{y} \frac{1}{2} \int_x p_G(\bx\mid y)\log \frac{p_G(\bx\mid y)}{p_{G^{\setminus i}}(\bx\mid y)} \nonumber \\
&= m_G \sum_{y} \frac{1}{2} \sum_{k=1}^d \frac{1}{2} \left(\frac{\widehat{\sigma}_k^2}{\widehat{\sigma}_{k, \setminus i}^2} - 1  - \log\left( \frac{\widehat{\sigma}_k^2}{\widehat{\sigma}_{k, \setminus i}^2}\right) + \frac{(\widehat{\mu}_{yk} - \widehat{\mu}_{yk,\setminus i})^2}{\widehat{\sigma}_{k, \setminus i}^2}\right) \nonumber \\
&\leq m_G \sum_{y} \frac{1}{2} \sum_{k=1}^d \frac{1}{2} \left( \left(\frac{\widehat{\sigma}_k^2}{\widehat{\sigma}_{k, \setminus i}^2} - 1 \right)^2  + \frac{(\widehat{\mu}_{yk} - \widehat{\mu}_{yk,\setminus i})^2}{\widehat{\sigma}_{k, \setminus i}^2}\right) & \text{($x - log(x + 1) \leq x^2$, $\abs{x} \leq 1/2$)} \nonumber\\
&=m_G\sum_{y} \frac{1}{4}  \left( \sum_{k=1}^d \left(\frac{\widehat{\sigma}_k^2}{\widehat{\sigma}_{k, \setminus i}^2} - 1 \right)^2  + \sum_{k=1}^d \frac{(\widehat{\mu}_{yk} - \widehat{\mu}_{yk,\setminus i})^2}{\widehat{\sigma}_{k, \setminus i}^2}\right). \label{eqn:tau kl}
\end{align}

What we need to bound is $\abs{\widehat{\sigma}_k^2 - \widehat{\sigma}_{k, \setminus i}^2}$ and $\abs{\widehat{\mu}_{yk} - \widehat{\mu}_{yk,\setminus i}}$. They can be bounded by using the boundedness of the data. Without the loss of generation, we assume that $y_i = 0$, then we have

\begin{align*}
\abs{\widehat{\mu}_{0k} - \widehat{\mu}_{0k,\setminus i}} &= \abs{\sum_{j } \frac{x_{jk}}{m_0} - \sum_{j \ne i } \frac{x_{jk}}{m_0 - 1}} \\
&\lesssim \abs{\sum_{j } \frac{x_{jk}}{m_0} - \sum_{j \ne i } \frac{x_{jk}}{m_0}} \\
&\lesssim \abs{\frac{x_{jk}}{m_0}} \lesssim \abs{\frac{x_{jk}}{m}} \\
&\leq \frac{1}{m}\abs{ \mu_{0k} + \sqrt{2} \sigma \sqrt{\log\left( \frac{(m+1)d}{\delta_2} \right)} } \\
&\lesssim \frac{1}{m} \sigma \sqrt{\log\left( \frac{(m+1)d}{\delta_2} \right)},\\
\abs{\widehat{\mu}_{1k} - \widehat{\mu}_{1k,\setminus i}} = 0.
\end{align*}

Therefore, we have

\begin{align}
\sum_{i=1}^d \frac{(\widehat{\mu_{0k}} - \widehat{\mu}_{0k,\setminus i})^2}{\widehat{\sigma}_{k, \setminus i}^2} &\lesssim \sum_{k=1}^d \frac{1}{\widehat{\sigma}_{k, \setminus i}^2} \frac{1}{m^2} \sigma^2 \log\left( \frac{(m+1)d}{\delta_2} \right) \nonumber \\
&\lesssim \sum_{k=1}^d  \frac{1}{m^2} \log\left( \frac{(m+1)d}{\delta_2} \right) \lesssim \frac{d}{m^2} \log\left( \frac{(m+1)d}{\delta_2} \right), \label{eqn:mu0k}\\
\sum_{i=1}^d \frac{(\widehat{\mu_{1k}} - \widehat{\mu}_{1k,\setminus i})^2}{\widehat{\sigma}_{k, \setminus i}^2} = 0. \label{eqn:mu1k}
\end{align}

In terms of $\abs{\widehat{\sigma}_k^2 - \widehat{\sigma}_{k, \setminus i}^2}$, we can write

\begin{align*}
\abs{\widehat{\sigma}_k^2 - \widehat{\sigma}_{k, \setminus i}^2} &= \abs{\frac{m_0}{m}\frac{\sum_j (x_{jk} - \widehat{\mu}_{0k})^2}{m_0 - 1} - \frac{m_0-1}{m}\frac{\sum_{j \ne i} (x_{jk} - \widehat{\mu}_{0k,\setminus i})^2)^2}{m_0 - 2}}\\
& \lesssim \abs{\frac{m_0}{m}\frac{\sum_j (x_{jk} - \widehat{\mu}_{0k})^2}{m_0 - 1} - \frac{m_0}{m}\frac{\sum_{j \ne i} (x_{jk} - \widehat{\mu}_{0k,\setminus i})^2}{m_0 - 1}}\\
& = \frac{m_0}{m(m_0 - 1)} \abs{x_{ik}^2 + (m_0 - 1)\widehat{\mu}_{0k,\setminus i}^2 - m_0\widehat{\mu}_{0k}^2} \\
& \lesssim \frac{m_0}{m(m_0 - 1)} \abs{x_{ik}^2 + m_0 \widehat{\mu}_{0k,\setminus i}^2 - m_0\widehat{\mu}_{0k}^2} \\
& \lesssim \frac{m_0}{m(m_0 - 1)} \abs{x_{ik}^2 + m_0 \left( \widehat{\mu}_{0k,\setminus i}^2 - \widehat{\mu}_{0k}^2 \right)} \\
& = \frac{m_0}{m(m_0 - 1)} \abs{x_{ik}^2 + m_0 \left( \widehat{\mu}_{0k,\setminus i} - \widehat{\mu}_{0k} \right) \left( \widehat{\mu}_{0k,\setminus i} + \widehat{\mu}_{0k} \right)} \\
& = \frac{m_0}{m(m_0 - 1)} \abs{x_{ik}^2 + m_0 \left( \sum_{j \ne i } \frac{x_{jk}}{m_0 - 1} - \sum_{j } \frac{x_{jk}}{m_0}  \right) \left( \sum_{j \ne i } \frac{x_{jk}}{m_0 - 1} + \sum_{j } \frac{x_{jk}}{m_0} \right)} \\
& \lesssim \frac{m_0}{m(m_0 - 1)} \abs{x_{ik}^2 + m_0 \frac{x_{ik}}{m_0} \left( \sum_{j \ne i } \frac{x_{jk}}{m_0 - 1} + \sum_{j } \frac{x_{jk}}{m_0} \right)} \\
& \lesssim \frac{1}{m} \left( \abs{x_{ik}^2} + \abs{ {x_{ik}} \left( \sum_{j \ne i } \frac{x_{jk}}{m_0 - 1} + \sum_{j } \frac{x_{jk}}{m_0} \right)}  \right)\\
& \lesssim  \frac{1}{m} \left( \sigma^2 \log\left( \frac{(m+1)d}{\delta_2} \right) + 2 \sigma^2 \log\left( \frac{(m+1)d}{\delta_2} \right) \right)\\
& \lesssim  \frac{\sigma^2}{m} \log\left( \frac{(m+1)d}{\delta_2} \right).
\end{align*}
Thus, we can obtain

\begin{align}
\sum_{k=1}^d \left(\frac{\widehat{\sigma}_k^2}{\widehat{\sigma}_{k, \setminus i}^2} - 1 \right)^2 &\lesssim \sum_{k = 1}^d \frac{\sigma^4}{\widehat{\sigma}_{k, \setminus i}^4 m^2} \log^2\left( \frac{(m+1)d}{\delta_2} \right) \nonumber \\
&\lesssim \frac{d}{m^2} \log^2\left( \frac{(m+1)d}{\delta_2} \right). \label{eqn:sigmai}
\end{align}

By plugin (\ref{eqn:mu0k}), (\ref{eqn:mu1k}) and (\ref{eqn:sigmai}) into (\ref{eqn:tau kl}), we have

\begin{align*}
d_{\mathrm{KL}}\left(\sD_G^{m_G}(S), \sD_G^{m_G}(\Sdrop{i}) \right) &\leq m_G \sum_{y} \frac{1}{4}  \left( \sum_{k=1}^d \left(\frac{\widehat{\sigma}_k^2}{\widehat{\sigma}_{k, \setminus i}^2} - 1 \right)^2  + \sum_{k=1}^d \frac{(\widehat{\mu}_{yk} - \widehat{\mu}_{yk,\setminus i})^2}{\widehat{\sigma}_{k, \setminus i}^2}\right) \\
& \lesssim  m_G \frac{d}{m^2} \log^2\left( \frac{(m+1)d}{\delta_2} \right),
\end{align*}
which implies

\begin{align*}
d_{\mathrm{TV}}\left(\sD_G^{m_G}(S), \sD_G^{m_G}(\Sdrop{i}) \right) &\lesssim \max\left(2, \sqrt{d_{\mathrm{KL}}\left(\sD_G^{m_G}(S), \sD_G^{m_G}(\Sdrop{i}) \right)} \right) \\
&\lesssim \max\left(2, \sqrt{d_{\mathrm{KL}}\left(\sD_G^{m_G}(S), \sD_G^{m_G}(\Sdrop{i}) \right)}\right) \\
&\lesssim \max\left(2, \frac{\sqrt{{m_G d}}}{m} \log\left( \frac{(m+1)d}{\delta_2} \right)\right),
\end{align*}
and

\begin{align*}
d_{\mathrm{TV}}\left(\sD_G^{m_G}(S), \sD_G^{m_G}(S^i) \right) &\leq d_{\mathrm{TV}}\left(\sD_G^{m_G}(S), \sD_G^{m_G}(\Sdrop{i}) \right) + d_{\mathrm{TV}}\left(\Sdrop{i}), \sD_G^{m_G}(S^i) \right)\\
&\lesssim \max\left(1, \frac{\sqrt{{m_G d}}}{m} \log\left( \frac{(m+1)d}{\delta_2} \right)\right)\\
&\lesssim \max\left(1, \frac{\sqrt{{m_G d}}}{m} \log\left( \frac{md}{\delta_2} \right)\right).
\end{align*}
Because it holds for all $i$, the proof of Lemma~\ref{lemma: tau} is completed.
\end{proof}

Now we are ready to prove Theorem~\ref{thm: bGMM generalization bound}.
\begin{proof}

Let $\delta$ in Lemma~\ref{lemma: kl} be $\delta_1$ and that in Lemma~\ref{lemma: gaussian bounded} be $\delta_2$. With probability at least $1-\delta / 2$, the bounds in  Lemma~\ref{lemma: kl} hold with $\delta_1 = \delta / 2$.Then with probability at least $1-\delta / 2$. Besides, the bounds in  Lemma~\ref{lemma: M} and Lemma~\ref{lemma: beta} hold with $\delta_2 = \delta / 2$. Thus, by the union bound, we know that with high probability $1-\delta$, the above bounds hold. Furthermore, from the proof of Lemma~\ref{lemma: tau}, we know that it holds naturally in this case, where the boundedness of data points and the accurate estimation of the true distribution hold.

Finally, we plugin Lemma~\ref{lemma: M},~\ref{lemma: kl},~\ref{lemma: beta},~\ref{lemma: tau} into Theorem~\ref{thm: main generalization bound}, and can conclude the statement of Theorem~\ref{thm: bGMM generalization bound} with high probability at least $1-\delta$,

\begin{align}
&\abs{\textit{Gen-error}} \nonumber \\
&\lesssim \frac{m_G}{m_T} \left(d + \log \left(\frac{m_T}{\delta} \right)\right) \max\left(1, \sqrt{\frac{d}{m_S}\log\left(\frac{d}{\delta} \right)}\right) \nonumber \\
&\quad + \frac{\sqrt{m_S} + \sqrt{m_G} }{m_T}  \left(d + \log \left(\frac{m_T}{\delta} \right)\right) \sqrt{\log \left(\frac{1}{\delta}\right)}  +  \frac{ m_S\sqrt{m_G} }{m_T^2} \left(d + \log \left(\frac{m_T}{\delta} \right)\right) \sqrt{\log \left(\frac{1}{\delta}\right)} \nonumber \\
&\quad +\frac{m_S \log m_S + m_G \log m_G}{m_T^2} \left(d + \log \left(\frac{m_T}{\delta} \right)\right) \log \left(\frac{1}{\delta}\right) \nonumber \\
&\quad +\frac{ m_S \log m_S }{m_T} \left(d + \log \left(\frac{m_T}{\delta} \right)\right) \max\left(1, \frac{\sqrt{{m_G d}}}{m_S} \log\left( \frac{m_S d}{\delta} \right)\right) \log \left(\frac{1}{\delta}\right) \label{eqn: upper bound bgmm}\\
&\lesssim \begin{cases} 
\frac{\log(m_S)}{\sqrt{m_S}} & \text{ if fix $d$ and $m_G = 0$,}\\
\frac{\log^2(m_S)}{\sqrt{m_S}} & \text{ if fix $d$ and $m_G = \Theta(m_S)$,}\\
\frac{\log(m_S)}{\sqrt{m_S}} & \text{ if fix $d$ and $m_G = m_{G, \mathrm{order}}^*$,}\\
d & \text{ if fix $m_S$.} \nonumber
\end{cases}
\end{align}

\end{proof}

\subsection{Proof of Theorem~\ref{thm: GAN generalization bound}}
\label{sec: proof of GAN generalization bound}
The theorem is built upon the recent theoretical works on GAN~\cite{DBLP:journals/jmlr/Liang21} and SGD~\cite{DBLP:conf/uai/ZhangZBP0022, DBLP:journals/corr/mingzewang}. We first list some lemmas from these works.

\begin{lemma}[Upper bounds for output and gradient, Proposition 5.2,~\cite{DBLP:journals/corr/mingzewang}]
\label{lemma: Upper bounds for output and gradient}
For deep CNNs or MLPs in Appendix~\ref{sec: deep classifier arch}, we have

\begin{align*}
& |f(\bw, \bx)| \leq  \left( \prod_{l=1}^{L}\left\|\bw_l\right\|_2 \right) \Vert \bx\Vert_2, \\
& \left\|\frac{\partial f(\bw, \bx)}{\partial \bw_l}\right\|_2 \leq \left( \prod_{i \neq l}\left\| \bw_l \right\|_2\right) \Vert \bx \Vert_2.
\end{align*}
\end{lemma}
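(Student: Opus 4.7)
The plan is to establish both bounds by tracking, layer by layer, how the $\ell_2$ norm of the forward activations and the backward Jacobian propagate through the network described in Appendix~\ref{sec: deep classifier arch}. The core observation is that each elementary operation appearing in the architecture (ReLU, average pooling, matrix multiplication, convolution) is non-expansive or is a linear map whose operator norm is controlled by $\|\bw_l\|_2$. Once that is in hand, the product structure of the stated bounds falls out by a straightforward induction.

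For the output bound, I would prove by induction on $l \in \{0, 1, \dots, L-1\}$ that $\|\bz_l\|_2 \le \bigl(\prod_{i=1}^{l} \|\bw_i\|_2\bigr)\|\bx\|_2$. The base case is $\bz_0 = \bx$. For the inductive step I would split on layer type. For a fully-connected layer, $\|\bz_l\|_2 = \|\sigma(\bA_l^\top \bz_{l-1})\|_2 \le \|\bA_l^\top \bz_{l-1}\|_2 \le \|\bA_l\|_2\,\|\bz_{l-1}\|_2$, where the first inequality is the pointwise bound $|\sigma(t)|\le |t|$ of the ReLU and the second is Cauchy--Schwarz; by definition $\|\bw_l\|_2 = \|\operatorname{vector}(\bA_l)\|_2 \ge \|\bA_l\|_{\mathrm{op}}$, so this step loses nothing. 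For a convolutional layer, the convolution $\bw_l * \cdot$ is a linear map; viewing it as a doubly-block-Toeplitz matrix, its operator norm is bounded by the $\ell_2$-norm of $\bw_l$, and average pooling is a linear averaging operator with operator norm $\le 1$, so $\|\bz_l\|_2 \le \|\bw_l\|_2 \|\bz_{l-1}\|_2$ again. Applying the induction at $l = L-1$ and then bounding the output layer by Cauchy--Schwarz, $|f(\bw,\bx)| = |\langle \bw_L, \bz_{L-1}\rangle| \le \|\bw_L\|_2\|\bz_{L-1}\|_2$, gives the first inequality.

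For the gradient bound I would use the chain rule to write
\begin{equation*}
\frac{\partial f}{\partial \bw_l} \;=\; \frac{\partial \bz_l}{\partial \bw_l}\bigg|_{\bz_{l-1}\text{ fixed}}\;\cdot\;\frac{\partial f}{\partial \bz_l},
\end{equation*}
where the first factor depends only on $\bz_{l-1}$ (and on a diagonal ReLU sign matrix with entries in $\{0,1\}$, together with a pooling matrix of norm $\le 1$), and the second factor is the backpropagated signal. A symmetric induction, now running backwards from the output, shows $\|\partial f / \partial \bz_l\|_2 \le \prod_{i=l+1}^{L}\|\bw_i\|_2$: indeed, going one layer back through a fully-connected (resp.\ convolutional) layer multiplies the norm by at most $\|\bw_{l+1}\|_2$, again because the ReLU-derivative diagonal and the pooling matrix are non-expansive. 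The first factor $\partial \bz_l / \partial \bw_l$ evaluated on $\bz_{l-1}$ gives a tensor whose operator norm is bounded by $\|\bz_{l-1}\|_2 \le \prod_{i=1}^{l-1}\|\bw_i\|_2\,\|\bx\|_2$ (by the forward induction already proved), since for a fully-connected layer this is exactly a Kronecker product with $\bz_{l-1}$, and for a convolution it is a linear map whose operator norm coincides with the $\ell_2$-norm of a patch-extraction of $\bz_{l-1}$, itself bounded by $\|\bz_{l-1}\|_2$. Multiplying the two factors yields $\|\partial f/\partial \bw_l\|_2 \le \bigl(\prod_{i\ne l}\|\bw_i\|_2\bigr)\|\bx\|_2$.

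The main obstacle I expect is handling the convolutional layers cleanly: one must verify that the operator norm of the convolution map is bounded by $\|\bw_l\|_2$ (treating $\bw_l$ as a vectorized kernel) and that average pooling is non-expansive in $\ell_2$. These are standard but slightly delicate because they depend on conventions for padding, stride, and how $\|\bw_l\|_2$ is defined for a convolutional kernel; I would make those conventions explicit at the outset and verify the operator-norm bound via the doubly-block-Toeplitz representation of the convolution. Everything else reduces to ReLU being $1$-Lipschitz with $\sigma(0)=0$ and to repeated applications of Cauchy--Schwarz, exactly as in the proof of Proposition~5.2 of~\cite{DBLP:journals/corr/mingzewang}.
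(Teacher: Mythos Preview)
The paper does not actually prove this lemma: it is quoted verbatim as Proposition~5.2 of~\cite{DBLP:journals/corr/mingzewang} and used as a black box in the proof of Theorem~\ref{thm: GAN generalization bound}. So there is no ``paper's own proof'' to compare against beyond the original reference, and your layer-wise forward/backward induction is exactly the standard argument that reference employs.

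One technical caution on your sketch: the claim that the convolution map $\bw_l * \cdot$ has operator norm at most $\|\bw_l\|_2$ is \emph{not} automatic from the doubly-block-Toeplitz representation alone, since weight sharing means the same kernel entries appear in many rows and the Frobenius norm of that matrix can exceed $\|\bw_l\|_2$. The bound holds once the pooling that follows is folded in (average pooling over non-overlapping $k$-windows has operator norm $1/\sqrt{k}$, which exactly offsets the repetition factor from the convolution), or under appropriate stride/normalization conventions; you correctly flag this as the delicate step, but the argument as written slightly understates what needs to be checked there.
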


\begin{lemma}[Uniform stability of SGD in the non-convex case, Theorem 5,~\cite{DBLP:conf/uai/ZhangZBP0022}]
\label{lemma: SGD stability}

Assume $f$ is $\beta$-smooth and $\rho$-Lipschitz. Running $T>m$ iterations of SGD with step size $\alpha_t=\frac{c}{\beta t}$, the stability of SGD satisfies
$$
\beta_m \leq \frac{16 \rho^2 T^c}{m^{1+c}}.
$$

\end{lemma}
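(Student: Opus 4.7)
The plan is to prove the theorem by instantiating the general stability bound of Theorem~\ref{thm: main generalization bound} in the concrete deep-learning setup of Section~\ref{sec: Learning setup gan}. Concretely, one needs to supply quantitative bounds on three ingredients that appear in Theorem~\ref{thm: main generalization bound}: the loss ceiling $M$, the uniform stability constant $\beta_{m_T}$ of SGD, and the divergence $d_{\mathrm{TV}}(\sD,\DG)$. These correspond respectively to the classifier, its optimizer, and the GAN generator, and the three lemmas cited in the preamble (Lemmas~\ref{lemma: Upper bounds for output and gradient},~\ref{lemma: SGD stability}, and the GAN learnability result of Liang) are each designed to supply one piece. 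Once all three are substituted and the four regimes of $m_G$ are analyzed, the statement follows by routine simplification.

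First I would bound $M$. Since the binary cross-entropy $\ell(f,(\bx,y)) = \log(1+\exp(-y f(\bw,\bx)))$ is $1$-Lipschitz in $f$ with $\ell(0,\cdot)=\log 2$, it suffices to control $|f(\bw,\bx)|$. Combining Assumption~\ref{ass: param bounded} with Lemma~\ref{lemma: Upper bounds for output and gradient} and $\bx \in [0,1]^d$ yields $|f(\bw,\bx)| \lesssim \sqrt{d}\prod_{l=1}^L W_l$, which is the source of the $d\,L^{2}(\prod_l W_l)^2$ factor in the fourth case. The same lemma gives Lipschitzness of $f$ in $\bw$ with constant $\rho$ of the same order, and combined with the smoothness assumption I can invoke Lemma~\ref{lemma: SGD stability}. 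With learning rate $c/(\eta t)$ and $T=O(m_T)$, this produces $\beta_{m_T} \lesssim \rho^{2} T^{c}/m_T^{1+c}$, which for sufficiently small $c$ is $\tilde O(1/m_T)$ up to polynomial factors in $d,L,W_l$. Finally, Lemma~\ref{lemma: Learnability of GAN} furnishes $d_{\mathrm{TV}}(\sD,\DG) \lesssim \bigl(\log(m_S)/m_S\bigr)^{1/4}$ with polynomial dependence on $L$ and $d$; this is the rate-setting quantity of the divergence term.

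Substituting these three bounds into Theorem~\ref{thm: main generalization bound} and simplifying in each regime completes the argument. When $m_G=0$, the divergence and $\sT$ terms vanish, and the remaining $M/\sqrt{m_S} + \beta_{m_S}\log m_S$ combines to $1/\sqrt{m_S}$. When $m_G=\Theta(m_S)$, the divergence term dominates the $1/\sqrt{m_T}$ contributions at rate $(\log m_S/m_S)^{1/4}$, while the $\sT(m_S,m_G)$ term is retained symbolically (yielding the stated $\max$). The balanced choice $m_G = m_{G,\mathrm{order}}^*$ equates the two rates and leaves only $(\log m_S/m_S)^{1/4}$. Fixing $m_S$ and keeping all parameter dependencies exposes the $d L^{2}(\prod_l W_l)^{2}$ scaling of $M^{2}$ and $\beta_{m_T}$. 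The principal obstacle, flagged in the theorem statement itself, is $\sT(m_S,m_G)$: existing GAN theory does not furnish a stability bound for the generator's output distribution under a single-point change of its training set, so this term cannot be converted to an explicit rate and must be propagated symbolically. A secondary technical complication is that Theorem~\ref{thm: main generalization bound} is a high-probability statement and the theorem as stated bounds $\E|\textit{Gen-error}|$, which I would handle by integrating the tail in $\delta$ and absorbing the resulting logarithmic factors into the $\lesssim$.
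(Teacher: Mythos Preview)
Your proposal addresses the wrong statement. The statement in question is Lemma~\ref{lemma: SGD stability}, a cited result on the uniform stability of SGD taken verbatim from Theorem~5 of Zhang et al.~(2022). The paper does not prove this lemma at all; it simply imports it as a tool in the proof of Theorem~\ref{thm: GAN generalization bound}. Indeed, your own proposal invokes ``Lemma~\ref{lemma: SGD stability}'' as an ingredient, which makes it clear you are sketching a proof of Theorem~\ref{thm: GAN generalization bound} rather than of the stated lemma.

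If the task is genuinely to prove the stated lemma, then nothing in your write-up is relevant: one would need to reproduce the SGD stability argument of Zhang et al., tracking how two SGD trajectories on neighboring datasets diverge under the non-convex, $\beta$-smooth, $\rho$-Lipschitz assumptions with the prescribed step-size schedule. If instead the intended target was Theorem~\ref{thm: GAN generalization bound}, your plan is essentially the paper's own approach (bound $M$ via Lemma~\ref{lemma: Upper bounds for output and gradient}, bound $\beta_{m_T}$ via Lemma~\ref{lemma: SGD stability}, bound $d_{\mathrm{TV}}$ via Lemma~\ref{lemma: Learnability of GAN}, then substitute into Theorem~\ref{thm: main generalization bound}); but that is a different statement and should be flagged as such.
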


\begin{lemma}[Learnability of GAN, Theorem 19,~\cite{DBLP:journals/jmlr/Liang21}]
\label{lemma: Learnability of GAN}

We suppose that the architecture of GAN is the same as that in Appendix~\ref{sec: GAN arch}. Besides, we consider the realizable setting, that is, $\sD$ enjoys the same distribution as $g_{\theta_*}(Z)$ with some $\theta_* \in \Theta(d, L)$ and $Z \sim \mathrm{unif}[0,1]^d$. Then, given training set $S$ with $m$ i.i.d. samples, it holds that
$$
\mathbb{E} d_{\mathrm{TV}}^2 \left(\sD, \DG \right) \lesssim \sqrt{d^2 L^2 \log (d L)\frac{\log m}{m} } .
$$
\end{lemma}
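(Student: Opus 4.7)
The plan is to instantiate Theorem~\ref{thm: main generalization bound} in the deep learning setup of Section~\ref{sec: Learning setup gan} by separately bounding $M$, $\beta_{m_T}$, and $d_{\mathrm{TV}}(\sD,\DG)$ with the three available ingredients (Lemmas~\ref{lemma: Upper bounds for output and gradient}, \ref{lemma: SGD stability}, \ref{lemma: Learnability of GAN}), while keeping $\sT(m_S,m_G)$ as an opaque term since no GAN-specific distributional-stability result is currently available.

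First I would establish boundedness and Lipschitzness of the classifier in $\bw$. Combining Assumption~\ref{ass: param bounded} with Lemma~\ref{lemma: Upper bounds for output and gradient}, and using $\sX\subseteq[0,1]^d$ so that $\|\bx\|_2\leq\sqrt{d}$, yields $|f(\bw,\bx)|\lesssim\sqrt{d}\prod_{l}W_l$ and a per-layer gradient bound $\|\partial f/\partial\bw_l\|_2\lesssim\sqrt{d}\prod_{l'\neq l}W_{l'}$. Because the binary cross-entropy loss is $1$-Lipschitz in $f$, these translate into $M\lesssim\sqrt{d}\prod_l W_l$ and a Lipschitz constant $\rho\lesssim\sqrt{d}\prod_l W_l$ for $\ell(\bw,\cdot)$. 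Feeding the prescribed step size $c/(\eta t)$ and $T=O(m_T)$ into Lemma~\ref{lemma: SGD stability} gives $\beta_{m_T}\lesssim \rho^2/m_T\lesssim dL^2\bigl(\prod_l W_l\bigr)^2/m_T$ (up to an absorbed $T^c/m_T^c$ factor that contributes only a $\log$).

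Next, for the distributions' divergence I would apply Lemma~\ref{lemma: Learnability of GAN} once per class (the naive conditional approximation trains one GAN per label), then Jensen's inequality to pass from a bound on $\E[d_{\mathrm{TV}}^2]$ to one on $\E[d_{\mathrm{TV}}]$, obtaining
\begin{equation*}
\E[d_{\mathrm{TV}}(\sD,\DG)]\ \lesssim\ dL\sqrt{\log(dL)}\,\Bigl(\tfrac{\log m_S}{m_S}\Bigr)^{1/4}.
\end{equation*}
To combine with Theorem~\ref{thm: main generalization bound}, I would convert the high-probability statement there into an expectation statement by integrating the tail $\E|Y|=\int_0^\infty \P(|Y|>t)\,dt$, which yields a master inequality of the shape
\begin{equation*}
\E|\textit{Gen-error}|\ \lesssim\ \tfrac{m_G}{m_T}M\,\E[d_{\mathrm{TV}}]\ +\ \tfrac{M}{\sqrt{m_T}}\ +\ \tfrac{m_S\sqrt{m_G}}{m_T}\beta_{m_T}\ +\ \beta_{m_T}\log m_T\ +\ \tfrac{m_S\log m_S}{m_T}M\,\sT(m_S,m_G),
\end{equation*}
into which the above estimates for $M$, $\beta_{m_T}$, $\E[d_{\mathrm{TV}}]$ are substituted.

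Finally I would specialize to the four regimes. For $m_G=0$ Theorem~\ref{thm: main generalization bound} degenerates to the classical stability bound (Theorem~\ref{thm: classical stability bound}) with $\beta_{m_S}=O(1/m_S)$, giving the $1/\sqrt{m_S}$ rate. For $m_G=\Theta(m_S)$, the $\beta$-contributions scale like $(\log m_S)/m_S$ and are dominated by the divergence term $(\log m_S/m_S)^{1/4}$, so the bound reduces to $\max\bigl((\log m_S/m_S)^{1/4},\,\log m_S\,\sT(m_S,m_G)\bigr)$. For $m_G=m^*_{G,\mathrm{order}}$, by definition the generalization-error-w.r.t.-mixed-distribution terms match the divergence order, producing the clean rate $(\log m_S/m_S)^{1/4}$. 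In the fixed-$m_S$ regime, only the architectural prefactors survive in leading order, giving $dL^2\bigl(\prod_l W_l\bigr)^2$. The main obstacle I anticipate is the careful translation from the high-probability form of Theorem~\ref{thm: main generalization bound} into an expectation bound while simultaneously ingesting Lemma~\ref{lemma: Learnability of GAN} (which is already in expectation), together with tracking exactly which logarithmic factors get absorbed into $\lesssim$; a secondary subtlety is that Lemma~\ref{lemma: Learnability of GAN} is stated in the realizable setting, so the proof must either assume realizability in the GAN class or pay an approximation error, and $\sT(m_S,m_G)$ must be carried through as an unresolved quantity.
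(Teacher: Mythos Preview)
Your proposal does not address the stated lemma at all. The statement in question is Lemma~\ref{lemma: Learnability of GAN}, which asserts a bound on $\E\,d_{\mathrm{TV}}^2(\sD,\DG)$ for the GAN architecture of Appendix~\ref{sec: GAN arch}. This lemma is imported verbatim from \cite{DBLP:journals/jmlr/Liang21} (their Theorem~19) and is not proved in the present paper; it is simply listed as an ingredient for the proof of Theorem~\ref{thm: GAN generalization bound}.

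What you have written is a proof sketch for Theorem~\ref{thm: GAN generalization bound}, not for Lemma~\ref{lemma: Learnability of GAN}. In particular, you invoke Lemma~\ref{lemma: Learnability of GAN} itself as one of the ``three available ingredients'' in your argument, which would be circular if your goal were actually to establish that lemma. Your sketch of Theorem~\ref{thm: GAN generalization bound} is, incidentally, essentially the paper's own approach (bound $M$ and the Lipschitz constant via Lemma~\ref{lemma: Upper bounds for output and gradient} and Assumption~\ref{ass: param bounded}, feed into Lemma~\ref{lemma: SGD stability} for $\beta_{m_T}$, apply Lemma~\ref{lemma: Learnability of GAN} per class with Jensen for the divergence term, then substitute into Theorem~\ref{thm: main generalization bound}), but that is beside the point here.

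If you were genuinely asked to justify Lemma~\ref{lemma: Learnability of GAN}, the content lies entirely in the analysis of \cite{DBLP:journals/jmlr/Liang21}: one must show that the discriminator class of Appendix~\ref{sec: GAN arch} is rich enough to realize the log-density ratio between any two generator distributions in $\Theta(d,L)$, and then control the estimation error via a covering-number argument on the generator and discriminator parameter spaces, which produces the $d^2L^2\log(dL)\log m/m$ factor. None of that appears in your proposal.
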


\begin{proof}

Now we are ready to prove Theorem~\ref{thm: GAN generalization bound}, the main idea is to bound $M$, $\beta_{m_T}$, $d_{\mathrm{TV}} \left(\sD, \DG \right)$ in Theorem~\ref{thm: main generalization bound}. $M$ and Lipschitz property can be bounded by using Lemma~\ref{lemma: Upper bounds for output and gradient}. $\beta_{m_T}$ can be induced by Lemma~\ref{lemma: SGD stability} with Lipschitz constant. In terms of $d_{\mathrm{TV}} \left(\sD, \DG \right)$, Lemma~\ref{lemma: Learnability of GAN} can be used to derive an upper bound.

First, we bound the loss function as follows.
\begin{align*}
\ell(f,  \bz)& = \ell(f, (\bx, y)) \\
&= \log(1 + \exp(-y f(\bw, \bx))) \\
&\leq \log(2) + \abs{y f(\bw, \bx)} & \text{($\log(1 + \exp(-t))$ is 1-Lipschitz)}\\
&= \log(2) + \abs{f(\bw, \bx)} \\
&\leq \log(2) + \left( \prod_{l=1}^{L}\left\|\bw_l\right\|_2 \right) \Vert \bx\Vert_2 & \text{(by Lemma~\ref{lemma: Upper bounds for output and gradient})}\\
&\leq \log(2) + \left( \prod_{l=1}^{L}\left\|\bw_l\right\|_2 \right) \sqrt{d} \\
&\leq \log(2) + \left( \prod_{l=1}^{L}\left\|W_l\right\|_2 \right) \sqrt{d} \\
&\lesssim \left( \prod_{l=1}^{L}\left\|W_l\right\|_2 \right) \sqrt{d} .
\end{align*}
Thus, we have $M \lesssim \left( \prod_{l=1}^{L}\left\|W_l\right\|_2 \right) \sqrt{d}$.

Second, we prove that $f$ is Lipschitz given the bounded parameter space.

\begin{align*}
\left\|\frac{\partial f(\bw, \bx)}{\partial \bw}\right\|_2 &\leq \sum_{l=1}^L \left\|\frac{\partial f(\bw, \bx)}{\partial \bw_l}\right\|_2 \\
&\leq \Vert \bx \Vert_2 \sum_{l=1}^L \left( \prod_{i \neq l}\left\| \bw_l \right\|_2\right) & \text{(by Lemma~\ref{lemma: Upper bounds for output and gradient})}\\
&\leq \Vert \bx \Vert_2 \sum_{l=1}^L \left( \prod_{i \neq l}\left\| W_l \right\|_2\right) \\
&\leq \sqrt{d} \sum_{l=1}^L \left( \prod_{i \neq l}\left\| \bw_l \right\|_2\right) \\
&\lesssim \sqrt{d} \sum_{l=1}^L \left( \prod_{i}\left\| W_l \right\|_2\right) \\
&= \sqrt{d} L \left( \prod_{i}\left\| W_l \right\|_2\right).
\end{align*}
Therefore, $f$ is $\rho$-Lipschitz with $\sqrt{d} L \left( \prod_{i}\left\| W_l \right\|_2\right)$. Then, $\beta_{m}$ can be bounded by Lemma~\ref{lemma: SGD stability}.

\begin{align*}
\beta_m \leq \frac{16 \rho^2 T^c}{m^{1+c}} \leq 16 {d} L^2 \left(\prod_{i }\left\| W_l \right\|_2 \right)^2  \frac{T^c}{m^{1+c}} \lesssim \left(  \prod_{i}\left\| W_l \right\|_2 \right)^2  \frac{{d} L^2 }{m}.
\end{align*}

Third, we bound the expectation of divergence between model distribution and target distribution as follows.

\begin{align*}
 \mathbb{E} d_{\mathrm{TV}} \left(\sD, \DG \right)  &\lesssim  \E \int_{(\bx, y)} \abs{\P_{\sD}(\bx, y) - \P_{\DG}(\bx, y)} d\bz \\
 &= \E \sum_y \int_{\bx} \abs{\P_{\sD}(\bx, y) - \P_{\DG}(\bx, y)} d\bx \\
 &= \E \sum_y \int_{\bx} \frac{1}{2} \abs{\P_{\sD}(\bx\mid y) - \P_{\DG}(\bx\mid y)} d\bx \\
  &= \E \sum_y d_{\mathrm{TV}} \left(\P_{\sD}(\bx\mid y) , \P_{\DG}(\bx\mid y) \right) \\
&= \sum_y \E d_{\mathrm{TV}} \left(\P_{\sD}(\bx\mid y) , \P_{\DG}(\bx\mid y) \right) \\
&=\sum_y \sqrt{ \left( \E d_{\mathrm{TV}} \left(\P_{\sD}(\bx\mid y) , \P_{\DG}(\bx\mid y) \right) \right)^2} \\
&\leq \sum_y \sqrt{  \E d_{\mathrm{TV}}^2 \left(\P_{\sD}(\bx\mid y) , \P_{\DG}(\bx\mid y) \right) } \\
&\lesssim \sum_y \sqrt{ \sqrt{d^2 L^2 \log (d L)\frac{\log m}{m} }} \\
&\lesssim \left(d^2 L^2 \log (d L)\frac{\log m}{m} \right)^{\frac{1}{4}} = \sqrt{dL} \left( \log (d L)\frac{\log m}{m} \right)^{\frac{1}{4}}.
\end{align*}
Furthermore, because $d_{\mathrm{TV}} \left(\sD, \DG \right) \leq 1$, we have

\begin{align*}
\mathbb{E} d_{\mathrm{TV}} \left(\sD, \DG \right) \lesssim \max\left(1,  \sqrt{dL} \left( \log (d L)\frac{\log m}{m} \right)^{\frac{1}{4}}\right).
\end{align*}

Finally, by taking the expectation for the bound in Theorem~\ref{thm: main generalization bound}, and plugging $M$, $\beta_{m_T}$ and $d_{\mathrm{TV}} \left(\sD, \DG \right)$ into it, we can conclude the result of Theorem~\ref{thm: GAN generalization bound}.
\end{proof}

\section{Discussion on existing non-i.i.d. stability bounds}
\label{sec: Discussion for existing non-i.i.d. stability bounds}

In this section, we show that it is unclear how to use existing non-i.i.d. stability bounds to derive a better guarantee than Theorem~\ref{thm: main generalization bound} for GDA.

\subsection{Stability bounds for mixing processes}

To the best of our knowledge, existing stability bounds for mixing processes only focus on the stationary sequence~\cite{DBLP:conf/nips/MohriR07,DBLP:journals/jmlr/MohriR10,DBLP:journals/ijon/HeZC16}, which is defined as follows.

\begin{mydef}[Stationary sequence]
A sequence of random variables $\mathbf{Z}=\left\{Z_t\right\}_{t=-\infty}^{\infty}$ is said to be stationary if for any $t$ and non-negative integers $m$ and $k$, the random vectors $\left(Z_t, \ldots, Z_{t+m}\right)$ and $\left(Z_{t+k}, \ldots, Z_{t+m+k}\right)$ have the same distribution.
\end{mydef}

Unfortunately, the GDA setting in this paper does not satisfy the stationary condition, because $(\bz_1, \dots, \bz_{m_S}) = S$ and $(\bz_{m_S + 1}, \dots, \bz_{2 m_S}) \subseteq S_G$ do not have the same distribution. Furthermore, it is usually difficult to estimate the mixing coefficients which reflect quantitative dependencies among data points.

\begin{figure}[t]
\centering

\includegraphics[width=0.8\columnwidth]{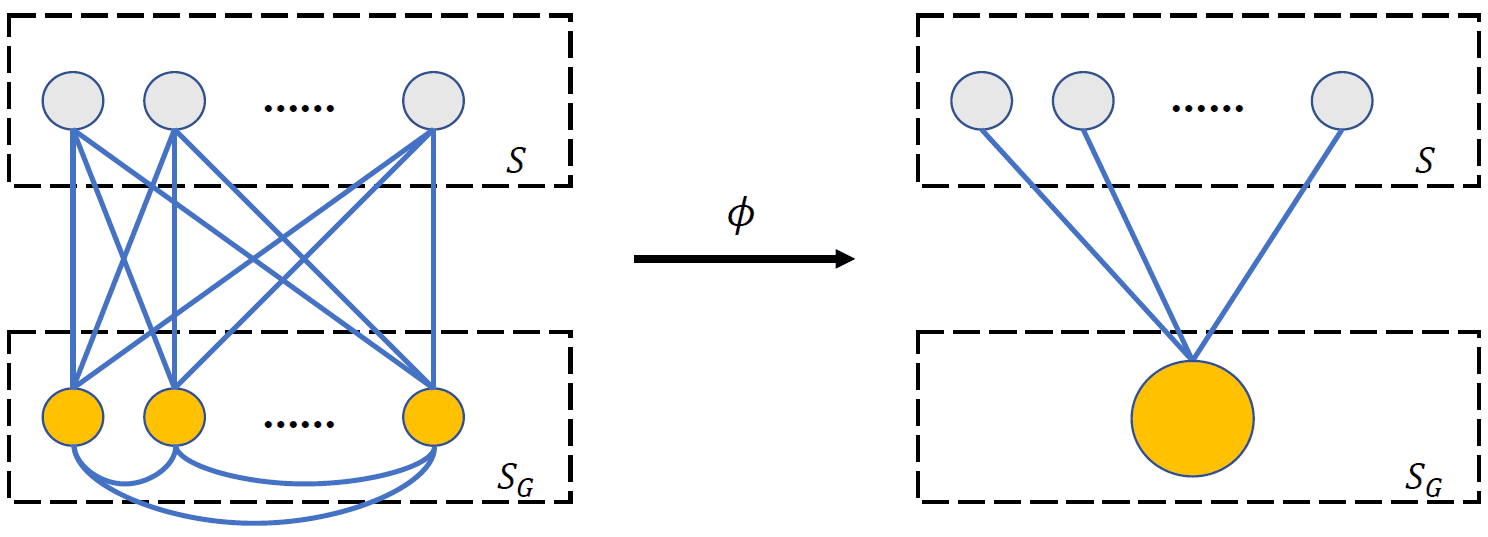}

\vskip 0.15in

\caption{Dependence graph (left) and a forest approximation (right) of the GDA setting.}
\label{fig: forest}
\end{figure}

\subsection{Stability bounds for dependence graph}

Recently,~\cite{DBLP:conf/nips/ZhangL0W19} provide a framework for the generalization theory of graph-dependent data, which includes the classical stability result in \cite{DBLP:journals/jmlr/BousquetE02} as a special case. We now introduce some elements of graph-dependent random variables and the non-i.i.d. stability bound in~\cite{DBLP:conf/nips/ZhangL0W19}. For a graph $G$, we use $V(G)$ to denote its vertex set and $E(G)$ to denote its edge set.

\begin{mydef}[Dependency Graph, Definition 3.1 in~\cite{DBLP:conf/nips/ZhangL0W19}]
An undirected graph $G$ is called a dependency graph of a random vector $\bX = (X_1, \dots ,X_n)$ if (1) $V(G) = [n]$, (2) if $I, J \subseteq [n]$ are non-adjacent in $G$, then $\{X_i\}_{i\in I}$ and $\{X_j\}_{j\in J}$ are independent.
\end{mydef}

\begin{mydef}[Forest Approximation, Definition 3.4 in~\cite{DBLP:conf/nips/ZhangL0W19}]
Given a graph $G$, a forest $F$, and a mapping $\phi : V(G) \to V(F)$, if $\phi(u) = \phi(v)$ or edge $\tri{\phi(u), \phi(v)} \in E(F)$ for any edge $\tri{\phi(u), \phi(v)} \in E(G)$, then $(\phi, F)$ is called a forest approximation of $G$. Let $\Phi(G)$ be the set of forest approximations of $G$. 
\end{mydef}

\begin{mydef}[Forest Complexity, Definition 3.5 in~\cite{DBLP:conf/nips/ZhangL0W19}]
Given a graph $G$ and any forest approximation $(\phi, F) \in \Phi(G)$ with $F$ consisting of trees $\left\{T_i\right\}_{i \in[k]}$, let
$$
\lambda_{(\phi, F)}=\sum_{\langle u, v\rangle \in E(F)}\left(\left|\phi^{-1}(u)\right|+\left|\phi^{-1}(v)\right|\right)^2+\sum_{i=1}^k \min _{u \in V\left(T_i\right)}\left|\phi^{-1}(u)\right|^2 .
$$
We call $\Lambda(G)=\min _{(\phi, F) \in \Phi(G)} \lambda_{(\phi, F)}$ the forest complexity of the graph $G$.
\end{mydef}

\begin{theorem}
\label{thm: graph dependence bound}
Assume that $\sA$ is a $\beta_m$-stable. Given a set $\Saug$ of size $m$ sampled from the same marginal distribution $\sD$ with dependency graph $G$. Suppose the maximum degree of $G$ is $\Delta$, and the loss function $\ell$ is bounded by $M$. For any $\delta \in (0,1)$, with probability at least $1-\delta$, it holds that

\begin{equation*}
\sR_{\sD}(\sA(\Saug)) \leq \widehat{\sR}_{\Saug}(\sA(\Saug)) + 2\beta_{m, \Delta}(\Delta+1) + (4\beta_m + \frac{M}{m})\sqrt{\frac{\Lambda(G)}{2} \log(\frac{1}{\delta})},
\end{equation*}
where $\beta_{m, \Delta} = \max_{i \leq \Delta} \beta_{m-i}$ and $\Lambda(G)$ is the forest complexity of the dependence graph $G$.
\end{theorem}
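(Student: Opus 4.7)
The plan is to follow the classical Bousquet–Elisseeff template, adapted to graph-dependent data via a neighborhood-removal trick for the bias term and via a Janson/Zhang--Lei--Ou--Wang style McDiarmid inequality for graph-dependent variables in the deviation term. Concretely, I decompose
\[
\sR_{\sD}(\sA(\Saug)) - \widehat{\sR}_{\Saug}(\sA(\Saug)) = \bigl[\Phi(\Saug) - \E\Phi(\Saug)\bigr] + \E\Phi(\Saug),
\]
where $\Phi(\Saug) := \sR_{\sD}(\sA(\Saug)) - \widehat{\sR}_{\Saug}(\sA(\Saug))$, and attack the two summands separately.

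For the expectation term $\E\Phi(\Saug)$, I write
$\E\Phi(\Saug) = \tfrac{1}{m}\sum_{i=1}^m\bigl(\E_{z'\sim\sD}\E\ell(\sA(\Saug),z') - \E\ell(\sA(\Saug),z_i)\bigr).$
In the i.i.d.\ regime one swaps $z_i$ with a fresh $z'$ and pays $\beta_m$ per index; the obstruction here is that $z_i$ is correlated with its $\le\Delta$ graph-neighbors, so the swap is not distribution-preserving. I resolve this with the neighborhood-removal trick: let $\mathcal{N}(i)$ be $i$ together with its neighbors in $G$, so $|\mathcal{N}(i)|\le \Delta+1$, and let $\Saug_{-\mathcal{N}(i)}$ be $\Saug$ with these points removed. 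By iterating the stability property $\Delta+1$ times (each deletion costs at most $\beta_{m-k}\le\beta_{m,\Delta}$),
\[
\bigl|\ell(\sA(\Saug),z_i)-\ell(\sA(\Saug_{-\mathcal{N}(i)}),z_i)\bigr|\le(\Delta+1)\beta_{m,\Delta}.
\]
Because the dependency-graph definition forces $z_i$ to be independent of $\{z_j : j\notin\mathcal{N}(i)\}$, the tower property gives $\E\ell(\sA(\Saug_{-\mathcal{N}(i)}),z_i)=\E\sR_{\sD}(\sA(\Saug_{-\mathcal{N}(i)}))$; the same stability iteration then moves back to $\E\sR_{\sD}(\sA(\Saug))$ at another cost of $(\Delta+1)\beta_{m,\Delta}$. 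Averaging over $i$ yields the announced $2(\Delta+1)\beta_{m,\Delta}$.

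For the deviation term I verify the bounded-difference condition coordinatewise. Replacing one data point $z_i$ by $z_i'$ changes $\sR_{\sD}(\sA(\Saug))$ by at most $\beta_m$ (uniform stability) and changes $\widehat{\sR}_{\Saug}(\sA(\Saug))$ by at most $\tfrac{m-1}{m}\beta_m+\tfrac{M}{m}$ (the swapped summand moves by at most $M$ and each other summand by at most $\beta_m$). Hence $\Phi$ has bounded differences of size proportional to $2\beta_m+M/m$ in each coordinate, and the constant absorbed by any two-sided or union-bound step gives the $4\beta_m+M/m$ prefactor. Then I invoke the McDiarmid-type concentration inequality for graph-dependent variables whose variance proxy is the forest complexity $\Lambda(G)$: for a function with coordinate-wise bounded differences $c$ defined on samples with dependency graph $G$,
\[
\P\bigl(|\Phi-\E\Phi|\ge t\bigr)\le 2\exp\!\Bigl(-\tfrac{2t^2}{c^2\,\Lambda(G)}\Bigr),
\]
which is proved by choosing an optimal forest approximation $(\phi,F)$ and decomposing into martingale differences indexed by the tree edges. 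Inverting gives $|\Phi-\E\Phi|\lesssim c\sqrt{\Lambda(G)/2\cdot\log(1/\delta)}$, and plugging in $c=4\beta_m+M/m$ combines with the bias bound to yield the stated theorem.

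The main obstacle is the concentration step: a naive application of McDiarmid would give a $\sqrt{m}$ scaling in front of $c$, which is useless when the graph is nearly complete; the crux is invoking the forest-complexity version of McDiarmid to replace $m$ by $\Lambda(G)$, and this in turn requires the forest-approximation machinery $(\phi,F)\in\Phi(G)$ that controls the chained martingale differences. Once that tool is in hand, the rest of the proof is the standard stability calculation above, with the neighborhood-removal trick as the only new ingredient needed to handle the non-independence in the bias term.
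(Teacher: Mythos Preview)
The paper does not actually prove this theorem. It is quoted in Appendix~C as a result from \cite{DBLP:conf/nips/ZhangL0W19} (Zhang, Lei, et al., 2019), stated there only to argue that existing non-i.i.d.\ stability bounds are ill-suited for the GDA setting; no proof is given or attempted in the paper itself.

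That said, your sketch is a faithful reconstruction of the argument in the original source. The decomposition into $\E\Phi(\Saug)$ plus the deviation $\Phi-\E\Phi$, the neighborhood-removal trick for the bias (delete the closed neighborhood $\mathcal{N}(i)$, pay $(\Delta+1)\beta_{m,\Delta}$ going out and again coming back, and use that $z_i$ is independent of $\Saug_{-\mathcal{N}(i)}$), and the forest-complexity McDiarmid inequality for the deviation are exactly the ingredients of the Zhang--Lei--et~al.\ proof. The bounded-difference constant you compute, $c\le 2\beta_m+M/m$ (inflated to $4\beta_m+M/m$ after the two-sided/union step), matches theirs. So your proposal is correct and is essentially the same approach as the cited source; there is simply nothing in the present paper to compare it against.
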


\begin{remark}
Theorem~\ref{thm: graph dependence bound} requires $\Saug$ sampled from the same marginal distribution $\sD$, which fails to hold in the context of GDA because the learned distribution $\DG$ is generally not the same as the true distribution $\sD$. It is still unclear to overcome this problem.
\end{remark}

\begin{remark}
When $m_G = 0$ and $\Saug = S$, Theorem~\ref{thm: graph dependence bound} degenerates to the classical result in~\cite{DBLP:journals/jmlr/BousquetE02}, which requires $\beta_m = o(1 / \sqrt{m})$ to converge. In contrast, Theorem~\ref{thm: main generalization bound} only requires $\beta_m = o(1 / \log(m))$ to converge, which is better than that of Theorem~\ref{thm: graph dependence bound}.
\end{remark}

\begin{remark}
We note that Theorem~\ref{thm: graph dependence bound} is proposed for the general case with data dependence. Therefore, it does not consider the property of special cases and may fail to give good guarantees. On the one hand, the independence of $S$ and the conditional independence of $S_G$ used in the proof of Theorem~\ref{thm: main generalization bound} are significant, which is ignored by Theorem~\ref{thm: graph dependence bound}. On the other hand, in the case of strong dependence like GDA, the forest complexity may be too large to give a meaningful bound. The dependence graph and a forest approximation of the GDA setting are presented in Figure~\ref{fig: forest}. Therefore, the forest complexity of the GDA setting can be bounded as follows.

\begin{equation}
\Lambda(G) \leq m_S (1 + m_G)^2 + 1^2 \lesssim m_Sm_G^2. \label{eqn: forest complexity}
\end{equation}

Plugging (\ref{eqn: forest complexity}) into Theorem~\ref{thm: graph dependence bound}, and assume $m_G = \Theta(m_S)$, we observe that

\begin{equation*}
\frac{M}{m_T}\sqrt{\frac{\Lambda(G)}{2} \log(\frac{1}{\delta})} \lesssim \frac{M}{m_T}\sqrt{\frac{m_Sm_G^2}{2} \log(\frac{1}{\delta})} \lesssim M \sqrt{\frac{m_S}{2} \log(\frac{1}{\delta})},
\end{equation*}
which fails to converge. However, Theorem~\ref{thm: main generalization bound} overcomes this problem.

\end{remark}

Finally, we conclude that it is hard to directly use existing non-i.i.d. stability results to obtain a better guarantee than Theorem~\ref{thm: main generalization bound}.

\section{Experimental details and additional results}
\label{sec: Additional experimental details and results}

All details can be found in the attached code.

\subsection{Models}

\textbf{bGMM}. We adopt the implementation of na\"ive Bayes in~\cite{DBLP:journals/corr/chenyu} to estimate the parameters of bGMM.

\textbf{ResNet.} We add the ResNet50 checkpoint released by Pytorch~\cite{pytorch}, which is also used in~\cite{DBLP:journals/corr/abs-2302-04638}.

\textbf{cDCGAN.} We use the cDCGAN in this \href{https://github.com/znxlwm/pytorch-MNIST-CelebA-cGAN-cDCGAN}{repository}, and modify its input channel and label dimension to 3 and 10 respectively to keep consistent with the format of images in CIFAR-10 dataset. This repository gains the most stars among repositories that implement cDCGAN. Furthermore, we follow its hyperparameter setting and train 200 epochs to obtain a cDCGAN for the CIFAR-10 dataset.

\textbf{StyleGAN2-ADA.} We use the class-conditional model pre-trained on CIFAR-10 dataset, which is released by NVIDIA Research~\cite{DBLP:conf/nips/KarrasAHLLA20}.

\textbf{EDM.} We use the 5M synthetic CIFAR-10 dataset released in~\cite{DBLP:journals/corr/abs-2302-04638}, which is generated by the pre-trained conditional EDM. Given an augmentation size $m_G$, we randomly sample $m_G$ from the 5M synthetic data points.

\subsection{Training details}

\textbf{Standard data augmentation.} 4 pixels are padded on each side, and a $32 \times 32$ crop is randomly sampled from the padded image or its horizontal flip. This augmentation pipeline is widely used~\cite{DBLP:conf/cvpr/HeZRS16}.

\textbf{Optimization.} We follow the setting in~\cite{DBLP:journals/corr/abs-2302-04638}. We use the SGD optimizer, where the momentum and weight decay are set to 0.9 and $5 \times 10^{-4}$, respectively. We
use the cyclic learning rate schedule with cosine annealing, where the initial learning rate is set to 0.2. We train the deep neural classifier with 100 epochs. The batch size is 512.

\subsection{Computation consumption.} All experiments are run on one RTX 3090 GPU. The most consuming case (ResNet50, $m_G =$ 1M) takes 17 GB cuda memory and 20 hours.

\subsection{License}

The used codes and their licenses are listed in Table~\ref{tab: license}.

\begin{table}[h]
\centering
\caption{The used codes and licenses.}
\vskip 0.15in
\begin{tabular}{ccc} 
\toprule
URL                                                    & Citation & License                                                    \\ 
\midrule
https://github.com/NVlabs/stylegan2-ada-pytorch & \cite{DBLP:conf/nips/KarrasAHLLA20}       & \href{https://github.com/NVlabs/stylegan2-ada-pytorch/blob/main/LICENSE.txt}{License}                                         \\
https://github.com/pytorch/pytorch                     & \cite{pytorch}   &  \href{https://github.com/pytorch/pytorch/blob/master/LICENSE}{License}      \\
https://github.com/wzekai99/DM-Improves-AT               & \cite{DBLP:journals/corr/abs-2302-04638}      & MIT License                                                \\
https://github.com/ML-GSAI/Revisiting-Dis-vs-Gen-Classifiers             & \cite{DBLP:journals/corr/chenyu}    & MIT License                                 \\
https://github.com/znxlwm/pytorch-MNIST-CelebA-cGAN-cDCGAN                         & -      & -                                                \\
\bottomrule
\end{tabular}
\label{tab: license}
\end{table}

\subsection{Additional results}
\label{sec: Additional results}

Our empirical results on the CIFAR-10 dataset are presented in Table~\ref{tab: deep result}.

\begin{table}[h]
\centering
\caption{Accuracy on the CIFAR-10 test set, where S.A. denotes standard augmentation.}
\vskip 0.15in
\begin{tblr}{
  cells = {c},
  cell{1}{1} = {r=2}{},
  cell{1}{2} = {r=2}{},
  cell{1}{3} = {r=2}{},
  cell{1}{4} = {c=6}{},
  cell{3}{1} = {r=6}{},
  cell{3}{2} = {r=2}{},
  cell{5}{2} = {r=2}{},
  cell{7}{2} = {r=2}{},
  cell{9}{1} = {r=6}{},
  cell{9}{2} = {r=2}{},
  cell{11}{2} = {r=2}{},
  cell{13}{2} = {r=2}{},
  cell{15}{1} = {r=6}{},
  cell{15}{2} = {r=2}{},
  cell{17}{2} = {r=2}{},
  cell{19}{2} = {r=2}{},
  hline{1,21} = {-}{0.08em},
  hline{2} = {4-9}{},
  hline{3,9,15} = {-}{},
  hline{5,7,11,13,17,19} = {2-9}{},
}
Generator     & Classifier & S.A.     & GDA ($m_G$) &       &       &       &       &       \\
              &            &                           & 0                     & 100k  & 300k  & 500k  & 700k  & 1M    \\
cDCGAN~\cite{DBLP:journals/corr/RadfordMC15}        & ResNet18   & $\times$ & 85.76                   & 86.8  & 87.83 & 87.59 & 87.52 & 86.47 \\
              &            & $\surd$  & 94.4                    & 93.92 & 93.41 & 93.81 & 93.01 & 92.6  \\
              & ResNet34   & $\times$ & 85                      & 86.9  & 87.93 & 87.56 & 87.17 & 86.28 \\
              &            & $\surd$  & 94.59                   & 94.83 & 94.21 & 93.64 & 93.69 & 93.18 \\
              & ResNet50   & $\times$ & 82.85                   & 87.49 & 88.59 & 86.67 & 86.3  & 85.2  \\
              &            & $\surd$  & 94.69                   & 94.43 & 93.86 & 93.74 & 93.12 & 92.63 \\
StyleGAN2-ADA~\cite{DBLP:conf/nips/KarrasAHLLA20} & ResNet18   & $\times$ & 85.76                   & 90.22 & 91.33 & 91.37 & 91.25 & 91.38 \\
              &            & $\surd$  & 94.4                    & 94.68 & 94.46 & 94.4  & 94.11 & 94.12 \\
              & ResNet34   & $\times$ & 85                      & 90.24 & 91.23 & 91.45 & 91.56 & 90.91 \\
              &            & $\surd$  & 94.59                   & 95.05 & 94.9  & 94.4  & 94.43 & 94.21 \\
              & ResNet50   & $\times$ & 82.85                   & 90.85 & 92.29 & 92.29 & 92.29 & 91.61 \\
              &            & $\surd$  & 94.69                   & 94.74 & 95.04 & 94.56 & 94.76 & 94.28 \\
EDM~\cite{DBLP:journals/corr/EDM}           & ResNet18   & $\times$ & 85.76                   & 92.8  & 94.87 & 95.43 & 96.24 & 96.28 \\
              &            & $\surd$  & 94.4                    & 96.15 & 96.74 & 97.09 & 97.28 & 97.5  \\
              & ResNet34   & $\times$ & 85                      & 93.42 & 94.93 & 95.59 & 96.14 & 96.44 \\
              &            & $\surd$  & 94.59                   & 96.47 & 96.96 & 97.36 & 97.53 & 97.51 \\
              & ResNet50   & $\times$ & 82.85                   & 93.29 & 95.29 & 95.95 & 96.1  & 96.64 \\
              &            & $\surd$  & 94.69                   & 96.09 & 96.87 & 97.28 & 97.6  & 97.74 
\end{tblr}
\label{tab: deep result}
\end{table}

\end{appendices}

\end{document}